\renewcommand\bibentry[1]{\nocite{#1}{\frenchspacing\@nameuse{BR@r@#1\@extra@b@citeb}}}
\newcolumntype{C}[1]{>{\centering\arraybackslash}m{#1}}
\newcolumntype{R}[1]{>{\raggedright\arraybackslash}m{#1}}
\newcommand{\indep}{\raisebox{0.08em}{\rotatebox[origin=c]{90}{$\models$}}}
\Crefname{equation}{Eq.}{Eqs.}
\Crefname{figure}{Fig.}{Figs.}
\Crefname{tabular}{Tab.}{Tabs.}
\Crefname{theorem}{Thm.}{Thms.}
\Crefname{lemma}{Lem.}{Lems.}
\Crefname{proposition}{Prop.}{Props.}
\Crefname{definition}{Def.}{Defs.}
\Crefname{algorithm}{Alg.}{Algs.}
\Crefname{corollary}{Corol.}{Corol.}
\Crefname{section}{Sec.}{Sec.}
\Crefname{assumption}{Assump.}{Assumps.}
\newtheorem{lemma}{Lemma}
\newtheorem{theorem}{Theorem}
\newtheorem{corollary}{Corollary}
\newtheorem{definition}{Definition}
\theoremstyle{definition}
\newtheorem{example}{Example}
\newcommand{\I}{\mathbbm{1}}
\newcommand{\supp}{\text{supp}}
\newcommand{\G}{\mathcal{G}}
\newcommand{\M}{\mathcal{M}}
\def\*#1{\mathbf{#1}}
\def\1#1{\mathcal{#1}}
\def\2#1{\mathscr{#1}}
\def\3#1{\mathbb{#1}}
\DeclareBoldMathCommand{\x}{x}
\DeclareBoldMathCommand{\X}{X}
\DeclareBoldMathCommand{\u}{u}
\DeclareBoldMathCommand{\U}{U}
\DeclareBoldMathCommand{\v}{v}
\DeclareBoldMathCommand{\V}{V}
\DeclareBoldMathCommand{\a}{a}
\DeclareBoldMathCommand{\A}{A}
\DeclareBoldMathCommand{\b}{b}
\DeclareBoldMathCommand{\B}{B}
\DeclareBoldMathCommand{\y}{y}
\DeclareBoldMathCommand{\Y}{Y}
\DeclareBoldMathCommand{\c}{c}
\DeclareBoldMathCommand{\C}{C}
\DeclareBoldMathCommand{\h}{h}
\DeclareBoldMathCommand{\H}{H}
\DeclareBoldMathCommand{\h}{h}
\DeclareBoldMathCommand{\z}{z}
\DeclareBoldMathCommand{\Z}{Z}
\DeclareBoldMathCommand{\w}{w}
\DeclareBoldMathCommand{\W}{W}
\DeclareBoldMathCommand{\s}{s}
\DeclareBoldMathCommand{\S}{S}
\DeclareBoldMathCommand{\r}{r}
\DeclareBoldMathCommand{\R}{R}
\DeclareBoldMathCommand{\t}{t}
\DeclareBoldMathCommand{\T}{T}
\DeclareBoldMathCommand{\k}{k}
\DeclareBoldMathCommand{\K}{K}
\DeclareBoldMathCommand{\d}{d}
\DeclareBoldMathCommand{\D}{D}
\DeclareBoldMathCommand{\PA}{Pa}
\newcommand{\union}{%
  \text{\raisebox{2pt}{$\hspace{0.05cm}{\scriptstyle \bigcup}\hspace{0.05cm}$}}%
}
\tikzstyle{SCM}=[>={Stealth},
\title{The Limits of Predicting Agents from Behaviour}
\author[]{Alexis Bellot}
\author[]{Jonathan Richens}
\author[]{Tom Everitt}
\affil[]{Google DeepMind}
\begin{abstract}
    As the complexity of AI systems and their interactions with the world increases, generating explanations for their behaviour is important for safely deploying AI. For agents, the most natural abstractions for predicting behaviour attribute beliefs, intentions and goals to the system. If an agent behaves as if it has a certain goal or belief, then we can make reasonable predictions about how it will behave in novel situations, including those where comprehensive safety evaluations are untenable. How well can we infer an agent’s beliefs from their behaviour, and how reliably can these inferred beliefs predict the agent’s behaviour in novel situations? We provide a precise answer to this question under the assumption that the agent’s behaviour is guided by a world model. Our contribution is the derivation of novel bounds on the agent's behaviour in new (unseen) deployment environments, which represent a theoretical limit for predicting intentional agents from behavioural data alone. We discuss the implications of these results for several research areas including fairness and safety.
\end{abstract}
\begin{document}

\maketitle

% Incude paper content from external files
\section{Introduction}
\label{sec:intro}

Humans understand each other through the use of abstractions. We explain our intentions by appealing to our ``goals’’ and ``beliefs’’ about the world around us without knowing the underlying cognition going on inside our heads. According to \citet{dennett1989intentional,dennett2017bacteria}, the same is true of our understanding of other systems. For example, a bear hibernates during winter \textit{as if} it believes that the lower temperatures cause food scarcity. This is a useful description of the bear's behaviour, with real predictive power. For example, it gives us (human observers) the ability to anticipate how bears might act as the climate changes. There is a correspondence between beliefs and behaviour that is foundational to rational agents \citep{davidson1963actions}.

Artificial Intelligence (AI) systems appear to have similarly general capabilities, not totally unlike that of humans and animals. They can generate text that is fluent and accurate in response to a very diverse set of questions. Whenever they display consistent types of behaviour across many different tasks, we are tempted to apply our own mentalistic language more or less at face value \citep{shanahan2024talking}, taking seriously questions such as: What do the AIs know? What do they think, and believe? Taking the analogy further, it is \textit{as if} they learn ``world models’’ that mirror the causal relationships of the environment they are trained on, guiding their future plans and behaviour\footnote{Recent research suggests that an AI's behaviour, to the extent that it is consistent with rationality axioms, can be formally described by a (causal) world model \citep{halpern2024subjective}. The same conclusion can also be obtained for AIs capable of solving tasks in multiple environments \citep{richens2024robust}. For large language models, there is increasing empirical evidence for the ``world model'' hypothesis, see e.g., \citealp{toshniwal2022chess,li2022emergent,gurnee2023language,goldstein2024does} and \citealp{vafa2024evaluating}.}. And as a consequence, their interactions with an environment will leave clues that might give us the ability to predict their future behaviour in novel domains. This possibility engages with a core AI Safety problem: how to guarantee and predict whether AI systems will act safely and beneficially?

The main result of this paper is to offer a new perspective on this problem by showing that:
\begin{center}
    \begin{minipage}{0.85\linewidth}
    \centering
    \textit{With an assumption of competence and optimality, the behaviour of AI systems partially determines their actions in novel environments.}
    \end{minipage}
\end{center}

%The AI's future actions depend on its (implicit) internal model, i.e., its beliefs about the causal influence of its actions on other variables of interest, such as a utility function. 
Here behaviour means our observations of the decisions made by the AI system, contextual variables, and utility or reward values in some environment. The ``partial'' determination of actions in new environments is a consequence of our lack of knowledge about the AI's actual world model (different models may induce different optimal actions). However, even though we can't uniquely identify the AI's future behaviour and beliefs, we can narrow it down to a range of possible outcomes. This paper characterises those outcomes.

In the literature, the under-determination of agent ``beliefs'' and  preferences has been considered in the fields of inverse reinforcement learning \citep{abbeel2004apprenticeship,skalse2023misspecification,amin2016towards} and decision theory \citep{savage1972foundations,afriat1967construction,jeffrey1990logic}, among others. In settings with distribution shift between training and deployment environments, this under-determination can be understood as a consequence of the Causal Hierarchy Theorem, that defines precise limits on the kinds of inferences that can be drawn across domains \citep{bareinboim:etal2020,pearl2009causality}. It implies, for example, that behaviour in an environment subject to an intervention cannot be established from ``non-interventional'' data alone. \citet{robins1989analysis}, \citet{manski1990nonparametric} and \citet{pearl1999probabilities} showed that useful information in the form of bounds can nevertheless be extracted from ``non-interventional'' data, without actually knowing the underlying data-generating process. In the causality literature, several methods and algorithms exist to solve different versions of this problem, see e.g., \citealp{balke1997bounds,tian2000probabilities,zhang2021partial,bellot2023towards,rosenbaum2010design,tan2006distributional}.

This paper extends the causal formalism to reason about the possible behaviours and beliefs of an AI system, itself assumed to be governed by an unknown data generating process or world model. With this interpretation we are able to define mathematically notions such as an AI's preferred choice of action in novel environments, its perception of fairness, and its perception of harm due to the actions it takes. Our main contribution is a set of inequalities on these ``beliefs'' in terms of quantities that can in principle be estimated from behavioural data, that hold irrespective of the underlying cognitive architecture of the AI system as long as it can be \textit{represented} by a well-defined set of causal mechanisms (a world model) that tracks its behaviour (\Cref{sec:decision_making}). We then extend these results to characterize AI behaviour under several relaxations for applications in practice (\Cref{sec:relaxations}), ultimately with the goal of defining the theoretical limits of what can be inferred from data about AI behaviour in new (unseen) environments.

This has consequences for the wider AI Safety community and society. For example, we show that an AI's perception of the potential fairness and harm of its decisions (e.g., whether the AI's resource allocation is believed to be equitable, or its generations unbiased) can provably not be inferred from observing its behaviour alone. There are theoretical limits to how much we can understand about an AI's cognition and decision-making process from observations. We believe our results can help justify the claim that the design and inference of world models is important to ensure AIs can behave predictably and act safely and beneficially, as argued by \citealp{dalrymple2024towards,legg2023system,bengio2025superintelligent}. 
\section{Preliminaries}
\label{sec:preliminaries}
In this section we outline some basic principles that we use to reason about how beliefs might be (implicitly) defined within an AI system. 

We use capital letters to denote variables ($X$), small letters for their values ($x$), bold letters for sets of variables ($\X$) and their values ($\x$), and use $\supp$ to denote their domains of definition ($x\in\supp_X$). To denote $P(\Y = \y \mid \X = \x)$, we use the shorthand $P(\y\mid \x)$. We use $\I_{\{\cdot\}}$ for the indicator function equal to $1$ if the statement in $\{\cdot\}$ evaluates to true, and equal to $0$ otherwise.

Actions, plans, and hypothetical outcomes can be evaluated by symbolic operations on a model that represents the functional relationships in the world, known as a Structural Causal Model \citep[Definition 7.1.1]{pearl2009causality}, or SCM for short.

\begin{definition}[Structural Causal Model]
     \textit{An SCM $\M$ is a tuple $\M = \langle \V , \U, \1F, P \rangle$ where each observed variable $V \in \V$ is a deterministic function of a subset of variables $\boldsymbol{Pa}_V \subset \V$ and latent variables $\U_V \subset \U$, i.e., $v := f_V (\boldsymbol{pa}_V, \u_V), f_V \in \1F$. Each latent variable $U \in\U$ is distributed according to a probability measure $P(u)$. We assume the model to be recursive, i.e., that there are no cyclic dependencies among the variables.} 
\end{definition} 

In an SCM $\M$, each draw $\u \sim P(\u)$ evaluates to a potential response $\Y(\u)=\y$ and entails a distribution over the possible outcomes $P(\y)$. The power of SCMs is that they specify not only the joint distribution $P(\v)$ but also the distribution of variables under all interventions, including incompatible interventions (counterfactuals). Formally, an intervention $do(\x)$ is modelled as a symbolic operation where values of a set of variables $\X$ are set to constants $\x$, replacing the functions $\{f_X : X\in\X\}$ that would normally determine their values. This effectively induces a sub-model of $\M$, denoted $\M_\x$. The variables obtained in $\M_\x$ are denoted $\Y_{\x}$ and we will loosely write $P^{\M_\x}(\y) \equiv P_\x(\y) \equiv P(\y_\x)\equiv P(\y \mid do(\x))$ to denote the probabilities over the possible outcomes of $\Y$ in $\M_\x$. 

Different environments can be modelled by different SCMs. Let $\M_1= \langle \V , \U, \1F^1, P^1 \rangle,\M_2= \langle \V , \U, \1F^2, P^2 \rangle$ be the SCMs for two environments over the same set $\V$ and $\U$. We say that there is a discrepancy or a shift on a variable $X\in\V$ between them if either $f^1_X\neq f^2_X$ or $P^1(\U_X)\neq P^2(\U_X)$ or both. Shifts might therefore encode arbitrary changes in the causal mechanisms for a set of variables. For a reference SCM $\M$, a so-called ``shifted'' SCM will be represented by a sub-model $\M_\sigma$ where $\sigma$ represents the discrepancies between $\M$ and $\M_\sigma$. For example, an environment with a shift $\sigma$ on a set of variables $\X$ introduces (possibly arbitrary) discrepancies in the functional assignment or (independent) exogenous variables of $\X$ while keeping other mechanisms unchanged. See \citet[Chapter 4]{pearl2009causality} and \citet{correa2020general} for more details. We also make a note here that all proofs of statements are given in \Cref{sec:proofs} and that the derivations of examples are given in \Cref{sec:app_examples}.

\section{Agents, Beliefs, and the Environment}
\label{sec:framework}

In this section we lay out a framework to interface between the AI system's internal world model and our own observations of their behaviour in the real world. Both rely on the same SCM abstraction.

We assume the AI operates according to an SCM $\widehat \M$ over $\V$, its (implicit) world model\footnote{Here SCMs are meant to \textit{represent}, mathematically, the decision-making process going on “in the AI’s head” in a way that tracks its behaviour, without making any claims about the AI's \textit{actual} cognitive architecture.}, that guides its behaviour. $\V$ includes the AI's decision variable $A$, the inputs to those decisions $\C$, possible additional variables, and the utility variable $Y$, such as the training signal or a measurable target given to the AI \citep{everitt2021agent}. Beliefs\footnote{We might prefer to use terms like ``credences'' or ``subjective probabilities'' to emphasize the subjective nature of beliefs and avoid the connotation of strong conviction or certainty as done by \citep[Sec. 2.3]{sep-belief}.} are defined as quantifiable aspects of that model or derivations of it.

\begin{definition}[Beliefs]
    An AI belief is a probabilistic statement derived from its internal model $\widehat \M$.
\end{definition}

For example, a statement like $P^{\widehat \M_d}(Y=y)=0.8$ describes the subjective belief ``\textit{The AI is $80\%$ confident that taking decision $D=d$ will lead to event $Y=y$}''. The sub-model in this mathematical expression represents what the AI ``thinks'' the world looks like after taking the decision $D=d$.

We assume that the AI makes decisions $d$ by sampling from a policy $\pi(d \mid \c)$, which is a function mapping from the domain of the observed covariates $\C\subset\V$ (i.e., all the inputs given to the AI) to the probability space over the domain of the decision $D\in\V$. The choice of $\pi$ is assumed to be driven by its perceived utility\footnote{To account for possible uncertainty in the AI's ``satisfaction'' about a given state of the world $\w$ we assume $Y$ is a random variable (induced by $\U_Y\subset\U$), also known as a stochastic utility model \citep{manski1977structure}. We assume that the support of $Y$ is bounded in the $[0,1]$ interval.} $Y\in \V$ within the AI's model $\widehat \M$, that is,
\begin{align}
    \label{eq:optimal_policy}
    \text{arg} \max_{\pi} \3E_{P^{\widehat{\M}}}\left[\hspace{0.1cm}Y\mid do(\pi) \right].
\end{align}

The AI interacts with the real-world that is described by a (likely different) SCM $\M$ that encodes the true dynamics of the environment. In principle, we have no reason to expect that the model $\widehat \M$ internalized by the AI matches the underlying reality $\M$. 
AI systems might hope to reproduce some aspects of $\M$ (the AI might have learned, for instance, to mimic the distribution of the observed data). Competent AIs might go further and be able to reliably predict the effects of different decisions in the world. We define this as \textit{grounding} below.

\begin{definition}[Grounding]
    \label{def:grounding}
    Let $\widehat \M$ represent the AI's internal model. We say that the AI is grounded in a domain $\M$ if $ P^{\widehat \M_d}(\V)\equiv P^{\M_d}(\V)$ for any decision $d\in\supp_D$.
\end{definition}

Grounding tells us that the AI's beliefs about the effect of a particular decision $d$ in the training environment match the effects that would be observed in the real world, i.e. $\widehat P_{d}(\V)\equiv P_{d}(\V)$\footnote{We use the shorthand $P_d \equiv P^{\M_d}$ and $\widehat P_d \equiv P^{\widehat{\M}_d}$ to simplify the notation.}. It is an \textbf{assumption} on the relationship between our observations of AI behaviour $P(\V)$ with what might be going on in the AI's ``mind'' $\widehat P(\V)$. This might be reasonable, for example, if the AI is explicitly trained by reinforcement learning in $\M$.

By assumption, a grounded AI's choice of decision in environment $\M$ is in principle predictable from data since we can compute \Cref{eq:optimal_policy}. But this might not necessarily be the case in a new (unseen) environment.

\begin{example}[The Uncertain Medical AI]
    \label{ex:medical_assistant}
    Imagine an AI system assisting patients with their treatment $D$ for a disease $Y$ known to be influenced also by a third variable $Z$, blood pressure. The AI is competent and learns the precise effect of all treatments. In other words it is grounded in $\M$, i.e. $\widehat P_d(z,y) = P_d(z,y)$. For concreteness, let the environment $\M$ be given by,
    \begin{align*}
        Z \gets \I_{U=1 \text{ or } 4}, \quad
        Y \gets \begin{cases}
        Z\cdot \I_{U=4} + (1-Z)\cdot \I_{U= 1,3 \text{ or } 4}& \text{if }d=0\\
        Z\cdot\I_{U\neq 2} + (1-Z)\cdot \I_{U= 2\text{ or } 4}& \text{if }d=1,
        \end{cases}
    \end{align*}
    with equal probability $P$ for all values $U\in\{1,2,3,4,5\}$. Here $U$ is latent, summarizing all other contributions to both the disease and blood pressure, such as an individual's (unobserved) attitudes to health, fitness, etc. Could we confidently deploy this AI system more widely, for example, on individuals that also take a second drug that artificially improves their blood pressure (e.g., fixing $Z$ to $1$, replacing the original assignment)? If the AI system is instructed to maximize $Y$ on average, what decision does the AI believe is optimal? The answer is we do not know, meaning that in this case it is possible to find a second model $\widehat\M$ defined by the mechanisms:
    \begin{align*}
        Z \gets \I_{U=1 \text{ or } 4}, \quad
        Y \gets \begin{cases}
        Z\cdot \I_{U\neq1} + (1-Z)\cdot \I_{U= 3\text{ or } 4}& \text{if }d=0\\
        Z\cdot\I_{U= 1 \text{ or } 4} + (1-Z)\cdot \I_{U= 1\text{ or } 2}& \text{if }d=1,
        \end{cases}
    \end{align*}
    that entails exactly the same observations $\widehat P_d(z,y) = P_d(z,y)$ but induces different optimal decisions under the intervention $Z\gets 1$. Under $\M$, the highest utility $Y$ on average is given by $d=1$, while under $\widehat \M$ the highest utility $Y$ on average is given by $d=0$. A priori, we have no way of knowing which model ($\M$ or $\widehat\M$) is governing the AI's behaviour and so no way of knowing what decision will be favoured by the AI under the intervention. \hfill $\square$
\end{example}

This example illustrates a canonical point in a simple setting: as observers, with access to the AI's interactions in some domain, its behaviour outside of that domain might not be uniquely determined \citep{pearl2009causality}.

\section{The Limits of Behavioural Data}
\label{sec:decision_making}
In this section, we explore the limits of behavioural data for predicting the decisions of AIs in new environments.

As external observers, we do not have access to the mechanisms underlying the actual environment nor the agent's internal model. We assume that we must rely for our inferences on watching the agent's behaviour and its consequences. That is we have access to (samples of) $P_d(\V)$\footnote{Technically, the AI system may choose to follow an arbitrarily complex policy $\pi$ in the training domain, inducing a (assumed positive) distribution $P_\pi(\v)$. It holds that $P_d(\V)$ can be computed from any such $P_\pi(\V)$ as long as $P_\pi(\v)>0, \forall \v$, and vice versa, see e.g. \Cref{lemma:policy}. The positivity assumption $P_{d}(\v)>0$ rules out fully deterministic policies in the available data but might be reasonable if the AI spends some time exploring before committing to a course of action.} for all $d$. As a starting point, we might expect competent AIs to be \textit{weakly} predictable in the sense that a subset of decisions can be ruled out as provably sub-optimal given our observations.

\begin{definition}[Weak Predictability]
    We say that an AI is weakly predictable under a shift $\sigma$ in situation $\C=\c$ if there exists a decision $d^*$ that is provably sub-optimal, i.e.,
    \begin{align}
        d^* \neq \text{arg} \max_{d} \3E_{P^{\widehat\M}}\left[\hspace{0.1cm}Y\mid do(\sigma,d), \c \right],
    \end{align}
    for any valid SCM $\widehat\M$ describing the AI's internal model.
\end{definition}

Here, ``valid'' means that the AI's internal model is compatible with the observed data under our assumptions about the relationship between the data and the AI's internal model, e.g., grounding. Weak predictability means that there exists at least one decision that we can guarantee the AI will not take in the shifted environment. Specifically, we can rule out a decision $d^*$ if and only if we can find a (superior) alternative decision $d\neq d^*$ such that,
\begin{align}
    \label{eq:predictability_condition}
    &\min_{\widehat\M\in\3M} \left( \hspace{0.1cm}\Delta_{d \succ d^*} \hspace{0.1cm}\right)> 0,\quad \Delta_{d \succ d^*}:= \3E_{P^{\widehat\M}}\left[\hspace{0.1cm}Y\mid do(\sigma,d), \c \right] - \3E_{P^{\widehat\M}}\left[\hspace{0.1cm}Y\mid do(\sigma,d^*), \c \right].
\end{align}
$\3M$ denotes the set of ``valid'' SCMs. Here $\Delta$ can be interpreted as the AI's \textbf{preference gap} between two decisions in some situation $\C=\c$. When it evaluates to a positive number $d$ is preferred to $d^*$ and when it evaluates to a negative number $d^*$ is preferred to $d$ (in the AI's mind). If our inferences on $\Delta$ allow us to rule out decisions $d^*$ considered to be ``unsafe'' then weak predictability gives us an important safety guarantee.

We can strengthen this notion to define \textit{strong} predictability, that describes a situation in which all but a single AI decision can be ruled out.

\begin{definition}[Strong Predictability]
    We say that an AI is strongly predictable under a shift $\sigma$ in situation $\C=\c$ if the optimal decision is uniquely identifiable, i.e., there exists a single decision $d^*$ such that,
    \begin{align}
        d^* = \text{arg} \max_{d} \3E_{P^{\widehat\M}}\left[\hspace{0.1cm}Y\mid do(\sigma,d), \c \right],
    \end{align}
    for any valid SCM $\widehat\M$ describing the AI's internal model.
\end{definition}

\subsection{AI decisions out-of-domain: interventions} 
\label{sec:actions_out_of_distribution}

Our first result shows that, in some cases, a subset of AI decisions can be provably ruled out, i.e., the AI is weakly predictable.

\begin{theorem}
    \label{thm:bound_grounded_intervention}
    An AI grounded in a domain $\M$ is weakly predictable under a shift $\sigma:= do(\z), \Z\subset\V,$ in a context $\C=\c$ if and only if there exists a decision $d^*$ such that,
    \begin{align}
        \frac{\3E_{P_{d}}[\hspace{0.1cm}Y \mid \c, \z\hspace{0.1cm}]P_{d}(\c, \z)}{P_{d}(\c, \z) + 1 - P_{d}(\z)}  
        -\frac{\3E_{P_{d^*}}[\hspace{0.1cm}Y \mid \c, \z\hspace{0.1cm}]P_{d^*}(\c, \z) + 1 - P_{d^*}(\z) }{P_{d^*}(\c, \z) + 1 - P_{d^*}(\z)  }> 0, \nonumber
    \end{align}
    for some $d\neq d^*$.
\end{theorem}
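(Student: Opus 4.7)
The plan is to derive tight Manski-style bounds on the AI's perceived post-intervention utility $\3E_{P^{\widehat \M}}[Y \mid do(\z, d), \c]$ under each candidate decision $d$, and then to express weak predictability as the requirement that, for some $d$, the resulting lower bound on this quantity strictly exceeds the corresponding upper bound for $d^*$. This is exactly the structure of the inequality stated in the theorem.

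First I would rewrite
$$\3E_{P^{\widehat \M}}[Y \mid do(\z, d), \c] \;=\; \frac{\3E_{P^{\widehat \M}}[\,Y\,\I_{\C=\c}\,\mid do(\z, d)\,]}{P^{\widehat \M}(\C=\c \mid do(\z, d))}$$
and expand both numerator and denominator as integrals over the exogenous variables $\u$ of $\widehat \M$. The domain of $\u$ partitions into $\1S_d = \{\u : \Z(\u, d) = \z\}$ and its complement. By the counterfactual consistency axiom of SCMs, on $\1S_d$ the joint intervention $do(\z, d)$ agrees with $do(d)$ alone, so those contributions are pinned down to $\3E_{P_d}[Y \mid \c, \z]\, P_d(\c, \z)$ in the numerator and $P_d(\c, \z)$ in the denominator, using grounding to replace $\widehat P_d$ by $P_d$. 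On the complement, the integrands are unidentified counterfactuals, but since $Y \in [0,1]$ and $\I_{\C=\c} \in \{0,1\}$ while $P^{\widehat \M}(\u \notin \1S_d) = 1 - P_d(\z)$, they lie in $[0,\, 1 - P_d(\z)]$.

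Writing the ratio as $(N_0 + \alpha)/(D_0 + \beta)$ with $N_0 = \3E_{P_d}[Y\mid \c, \z]P_d(\c, \z)$, $D_0 = P_d(\c, \z)$, and $0 \le \alpha \le \beta \le 1 - P_d(\z)$ (the constraint $\alpha \le \beta$ comes from $Y \le 1$), and observing that $N_0 \le D_0$, the ratio is increasing in $\alpha$ and decreasing in $\beta$. Its extrema over the feasible region are thus attained at $(\alpha, \beta) = (0,\, 1 - P_d(\z))$ and at $\alpha = \beta = 1 - P_d(\z)$, yielding the minimum $N_0/(D_0 + 1 - P_d(\z))$ and the maximum $(N_0 + 1 - P_d(\z))/(D_0 + 1 - P_d(\z))$, respectively. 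These are precisely the two fractions displayed in the theorem.

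For the ``if'' direction, the assumed inequality then forces $\3E_{P^{\widehat \M}}[Y \mid do(\z, d), \c] > \3E_{P^{\widehat \M}}[Y \mid do(\z, d^*), \c]$ in every valid $\widehat \M$, so $d^*$ is never an $\arg\max$. For the ``only if'' direction I would, whenever the inequality fails for every $d \neq d^*$, exhibit a single valid $\widehat \M$ attaining the upper bound for $d^*$ and the lower bound for every other decision simultaneously, making $d^*$ optimal in that model. Such an $\widehat \M$ can be constructed by specifying, independently for each $(d, \u)$ with $\u \notin \1S_d$, the counterfactual responses under $do(\z, d)$ that realise the relevant extreme; grounding only constrains responses on $\1S_d$, so these choices are free. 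The main obstacle is this joint-tightness step: verifying that the per-decision extremes can be realised \emph{simultaneously} within one SCM consistent with every observed $P_d(\V)$, which in turn relies on the structural independence of counterfactuals across different interventions on $D$.
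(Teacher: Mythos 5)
Your proposal is correct and follows essentially the same route as the paper's proof: the same Manski-style decomposition of the post-intervention conditional expectation into an identified part (via consistency and grounding on the event $\Z_d(\u)=\z$) and a free part bounded by $1-P_d(\z)$, yielding the same two extremal fractions, followed by an explicit SCM that realises the lower bound for $d$ and the upper bound for $d^*$ simultaneously. The "joint-tightness" step you flag as the main obstacle is resolved exactly as you anticipate — the paper's construction sets the off-support responses to $Y=0$ for one decision and $Y=1$ for the other (with $\C$ forced to $\c$) as a piecewise-in-$d$ structural function, which grounding leaves entirely unconstrained.
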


All terms on the l.h.s are in principle computable from the AI's  behaviour. Loosely speaking, the value of this difference is determined (in part) by ``$P_{d^*}(\z)$'': if $\Z=\z$ (the value set by the intervention) is likely under the training distribution, the difference will more likely evaluate to a positive value. The ``if and only if'' condition means that whenever this inequality does not hold we can construct two SCMs $\widehat \M_1, \widehat \M_2$ for the grounded AI's internal model that generate the observed behaviour $P_d(\V), d\in\supp_D$, but that induce different optimal actions. That is, for all $d\neq d^*$,
\begin{align*}
    \3E_{P^{\widehat\M_1}}\left[\hspace{0.1cm}Y\mid do(\z,d), \c \right] >  \3E_{P^{\widehat\M_1}}\left[\hspace{0.1cm}Y\mid do(\z,d^*), \c \right], \quad\3E_{P^{\widehat\M_2}}\left[\hspace{0.1cm}Y\mid do(\z,d), \c \right] < \3E_{P^{\widehat\M_2}}\left[\hspace{0.1cm}Y\mid do(\z,d^*), \c \right].
\end{align*}

\textbf{Remark.} We can derive a similar condition for strongly predictable AIs by replacing ``\textit{for some $d\neq d^*$}'' with ``\textit{for all $d\neq d^*$}'' in \Cref{thm:bound_grounded_intervention}. 

We illustrate \Cref{thm:bound_grounded_intervention} with the following example.

\begin{example}[Grounded Medical AI]
    \label{ex:medical_assistant_grounded}
    In \Cref{ex:medical_assistant}, we have shown that there exists a particular intervened environment in which the AI's intentions cannot be determined as in principle the AI could believe that either decision is optimal. Is this true in general? \Cref{thm:bound_grounded_intervention} suggests that it depends on the likelihood of different events $P_{d}(z, y)$ in the observed data. For \Cref{ex:medical_assistant}, we can show that the medical AI is not weakly predictable as the expression in \Cref{thm:bound_grounded_intervention} is negative for all pairs of decisions. In other words, no decision can be ruled out in general: in some AI internal models $d_1$ is inferior to $d_0$ as,
    \begin{align}
        \min_{\widehat\M\in\3M} \left( \hspace{0.1cm}\Delta_{d_1\succ d_0} \hspace{0.1cm}\right) = P_{d_1}(Z=z,Y=1)+P_{d_0}(Z=z,Y=0)-1= -0.4
    \end{align}
    while in others $d_0$ is inferior to $d_1$ as,
    \begin{align}
        \min_{\widehat\M\in\3M} \left( \hspace{0.1cm}\Delta_{d_0\succ d_1} \hspace{0.1cm}\right) = P_{d_1}(Z=z,Y=0)+P_{d_0}(Z=z,Y=1)-1= -0.8
    \end{align}
    and we don't know which model the AI system has internalised.\hfill $\square$
\end{example}

In this example, AI behaviour does provide \textit{some} information as it can be constrained to larger values than its a priori minimum $\Delta=-1$, but not enough to rule out a decision completely. Our next result shows that \Cref{thm:bound_grounded_intervention} could be extended to get tight bounds for AI systems that are grounded in multiple environments.

\begin{theorem}
    \label{thm:bound_grounded_intervention_multiple_domains}
    Let $\sigma:= do(\z)$ be a shift on a set of variables $\Z\subset\V$. For $\R_i \subset \Z\subset\V, i=1,\dots,k$, consider an AI grounded in multiple domains $\{\M_{\r_i}:i=1,\dots,k\}$. The AI is weakly predictable in a context $\C=\c$ under a shift $\sigma:= do(\z)$ if and only if there exists a decision $d^*$ such that,
    \begin{align}
        \max_{i, j =1,\dots,k} A(\r_i, \r_j) > 0,
    \end{align}
    where,
    \begin{align*}
        A(\r_i, \r_j) := \frac{\3E_{P_{d,\r_i}}[\hspace{0.1cm}Y \mid \c, \z \backslash \r_i\hspace{0.1cm}]P_{d,\r_i}(\c, \z\backslash \r_i)}{P_{d,\r_i}(\c, \z\backslash \r_i) + 1 - P_{d,\r_i}(\z\backslash \r_i)}- \frac{\3E_{P_{d^*,\r_j}}[\hspace{0.1cm}Y \mid \c, \z\backslash \r_j\hspace{0.1cm}]P_{d^*,\r_j}(\c, \z\backslash \r_j) + 1 - P_{d^*,\r_j}(\z\backslash \r_j) }{P_{d^*,\r_j}(\c, \z\backslash \r_j) + 1 - P_{d^*,\r_j}(\z\backslash \r_j)  },
    \end{align*}
    for some $d\neq d^*$.
\end{theorem}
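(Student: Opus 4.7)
The plan is to reduce \Cref{thm:bound_grounded_intervention_multiple_domains} to \Cref{thm:bound_grounded_intervention} by applying the single-domain bound inside each grounded sub-domain and then combining the resulting inequalities across domains. The pivotal observation is that the target intervention factors, for any $i$, as $do(\z) = do(\r_i, \z\backslash\r_i)$, so that $\3E_{P^{\widehat\M}}[Y \mid do(\z,d),\c]$ equals $\3E_{P^{\widehat\M_{\r_i}}}[Y \mid do(\z\backslash\r_i,d),\c]$. Because the AI is grounded in $\M_{\r_i}$, the sub-model $\widehat\M_{\r_i}$ is itself a grounded internal model in the sense of \Cref{def:grounding} for a ``reduced'' problem whose shift target is $\Z\backslash\R_i$; \Cref{thm:bound_grounded_intervention} therefore directly furnishes tight bounds on the AI's belief about $Y$ in that reduced problem, evaluated on the interventional distribution $P_{\cdot,\r_i}(\V)$.

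For sufficiency, I would instantiate \Cref{thm:bound_grounded_intervention} within $\M_{\r_i}$ to obtain a lower bound on $\3E_{P^{\widehat\M}}[Y \mid do(\z,d),\c]$ and within $\M_{\r_j}$ to obtain an upper bound on $\3E_{P^{\widehat\M}}[Y \mid do(\z,d^*),\c]$; subtracting yields exactly $A(\r_i,\r_j)$. Since the two bounds depend on disjoint data ($P_{d,\r_i}$ versus $P_{d^*,\r_j}$), they can be optimised independently over $i$ and $j$, producing $\min_{\widehat\M\in\3M}\Delta_{d\succ d^*} \geq \max_{i,j} A(\r_i,\r_j)$. Whenever this maximum is strictly positive for some $d$, the AI must prefer $d$ over $d^*$, proving the ``if'' direction.

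For necessity, I would show that when $\max_{i,j} A(\r_i,\r_j) \leq 0$ one can exhibit two SCMs $\widehat\M_1,\widehat\M_2$ reproducing every grounded law $P^{\M_{\r_i}}(\V)$, $i=1,\dots,k$, for all decisions, yet disagreeing about whether $d \succ d^*$. Each would be built from the canonical response-function SCMs underlying the proof of \Cref{thm:bound_grounded_intervention}: within the relevant domain, such an SCM saturates the Fr\'echet-type bound on $\3E_{P^{\widehat\M}}[Y \mid do(\z\backslash\r_i,d),\c]$. The multi-domain analogue is then obtained by gluing the two saturating constructions corresponding to the pair $(i^*,j^*)$ attaining the max into a single global SCM on an enlarged exogenous space, indexed by the choice of grounded domain.

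The principal obstacle is this necessity direction: a single $\widehat\M$ must simultaneously reproduce $P^{\M_{\r_i}}(\V)$ for every $i$, even though different interventions $do(\r_i)$ pull on overlapping mechanisms and share the same latent distribution $P(\u)$. I would handle this via a product-of-canonical-response-functions representation indexed by $i$, so that the response of each endogenous variable to each $\r_i$-intervention is carried on its own latent coordinate, allowing the $k$ grounding constraints to be satisfied independently while still realising the extremal value of $\3E_{P^{\widehat\M}}[Y \mid do(\z,d),\c]$. Verifying that this gluing preserves recursivity, and that the resulting joint law remains consistent with the observational distributions $P_{d,\r_i}(\V)$ used by \Cref{thm:bound_grounded_intervention} in each domain, is the crux of the argument.
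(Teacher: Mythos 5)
Your proposal is correct and follows essentially the same route as the paper: the per-domain bounds are obtained by viewing grounding in $\M_{\r_i}$ as an instance of \Cref{thm:bound_grounded_intervention} with residual shift $do(\z\backslash\r_i)$, the lower bound for $d$ and upper bound for $d^*$ are drawn from independently chosen domains to give $\max_{i,j}A(\r_i,\r_j)$, and tightness is established by a single SCM on a product exogenous space (one latent coordinate per domain) that reproduces all grounded laws while saturating the extremal pair $(i^*,j^*)$ --- which is exactly the paper's two-domain construction with latents $\u_1,\u_2$ and a case-defined $Y$.
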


In this result, $\{\M_{\r_i}:i=1,\dots,k\}$ describes $k$ domains in which experiments on different subsets of $\Z$ have been conducted, i.e., $\{P_{d,\r_i}(\V):i=1,\dots,k\}$ is available. This includes possibly the null experiment $\R_i=\emptyset$ that refers to the unaltered domain $\M$. Note that grounding in multiple domains is useful for the prediction of the AI's preference gap because the resulting bounds in \Cref{thm:bound_grounded_intervention_multiple_domains} are tighter than those in \Cref{thm:bound_grounded_intervention} (this is given formally as \Cref{cor:multiple_domains} in the Appendix).

\Cref{fig:subset_model_based_belief_inference} illustrates how different assumptions and observations give us information about the possible world models that the AI is operating on, which then has implications for the AI's behaviour out-of-distribution. This knowledge allows us to reduce the uncertainty around the AI's preference gap $\Delta$, and possibly rule out certain actions that are unambiguously sub-optimal out-of-distribution, inferred solely from observed behaviour.

\subsection{AI decisions out-of-domain: general shifts} 

We might wonder about predictability under more general shifts such as an arbitrary change in a subset of the mechanisms $\{f_Z :Z\in\Z\}$ and distribution of variables $\{\U_Z, Z\in\Z\}$ in $\M$. For example, in practice we are likely able to convey to the AI that the mechanisms for a set of variables $\Z$ are expected to change but not know exactly how. For example, demographic properties of patients might change across hospitals. How could the AI interpret the consequences of such an under-specified shift? To begin to answer this question, the following theorem shows that in the extreme case where the nature of the shift is completely unknown the AI's preference gap is unconstrained.

\begin{theorem}
    \label{thm:bound_grounded_unknown_shift}
    Consider an AI grounded in a domain $\M$ made aware of an (under-specified) shift on non-empty $\Z\subset\V$. Then the AI is provably not weakly (or strongly) predictable in any context $\C=\c$.
\end{theorem}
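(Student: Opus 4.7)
The plan is to negate weak predictability by showing that for every candidate $d^* \in \supp_D$ and every context $\c$, there exists a valid grounded internal model $\widehat{\M}$ and a realization of the under-specified shift $\sigma$ such that $d^*$ is strictly optimal in $\widehat{\M}_\sigma$ given $\c$. Since $d^*$ is arbitrary, no decision can be provably ruled out. Strong predictability is strictly stronger than weak predictability, so it too fails.

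The construction exploits two independent sources of freedom. First, since the shift is under-specified, the post-shift functional assignments $\{f'_Z : Z \in \Z\}$ and exogenous distributions $\{P'(\U_Z) : Z \in \Z\}$ are unconstrained; I can therefore take the new mechanism for some $Z_0 \in \Z$ to be a deterministic function $Z_0 := g(D)$ of the decision variable, which reduces $\3E_{P^{\widehat{\M}_\sigma}}[Y \mid do(d), \c]$ to $\3E_{P^{\widehat{\M}}}[Y \mid do(d, g(d)), \c]$. Second, since grounding only requires $P^{\widehat{\M}_d}(\V) = P^{\M_d}(\V)$, the counterfactual expectations $\3E_{P^{\widehat{\M}}}[Y \mid do(d, \z), \c]$ are left with substantial freedom: by augmenting $\widehat{\M}$ with a latent common cause between $Z_0$ and $Y$ in the style of a response-function construction, calibrated so that $P^{\widehat{\M}_d}(\V)$ still matches $P^{\M_d}(\V)$, these expectations can be pushed towards the extremes of their partial-identification range. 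Setting $g(d^*) = \z^*$ to maximise $\3E_{P^{\widehat{\M}}}[Y \mid do(d^*, \z), \c]$ over $\z$ and $g(d) = \z_d$ to minimise $\3E_{P^{\widehat{\M}}}[Y \mid do(d, \z), \c]$ for each $d \neq d^*$, with a suitably chosen $\widehat{\M}$, then renders $d^*$ strictly optimal in $\widehat{\M}_\sigma$. When $Y \in \Z$ the construction is immediate: the shift can replace $Y$'s mechanism with $Y := \I\{D = d^*\}$, bypassing the latitude argument entirely.

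The main obstacle is formally establishing the counterfactual latitude exploited above, namely that for each $(d, \z)$ pair the range of $\3E_{P^{\widehat{\M}}}[Y \mid do(d, \z), \c]$ across grounded $\widehat{\M}$ is large enough to strictly separate $d^*$ from every alternative. This amounts to invoking the non-triviality of classical Manski / Balke--Pearl partial identification bounds in the absence of additional structural assumptions on the latents of $\widehat{\M}$, and is where most of the technical work of the proof would lie; once the latitude is in hand, Steps one and three combine mechanically to realize any target $d^*$ as optimal and the theorem follows.
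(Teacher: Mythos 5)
Your high-level strategy (exhibit, for every candidate $d^*$, a grounded internal model and a shift interpretation under which $d^*$ is optimal) is sound, and the special case $Y\in\Z$ is handled cleanly. But the core of your construction for $Y\notin\Z$ — realising the under-specified shift as a decision-dependent atomic retargeting $Z_0:=g(D)$ and then relying on the partial-identification latitude of $\3E_{P^{\widehat\M}}[Y\mid do(d,\z),\c]$ under grounding — has a genuine gap, and it is exactly the step you defer. That latitude is precisely what \Cref{thm:bound_grounded_intervention} characterises, and the whole point of that theorem is that it is \emph{sometimes narrow enough to rule a decision out}. The failure is concrete when $\Z\cap\C\neq\emptyset$: after the shift, conditioning on $\C=\c$ forces $g(d)$ to coincide with the $\Z$-component of $\c$ for every $d$ (otherwise the conditioning event has probability zero and the preference gap is undefined), so $g$ collapses to a constant and your construction reduces to a single atomic intervention $do(\z)$ — the regime in which weak predictability can genuinely hold. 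Even when $\Z\cap\C=\emptyset$, your route can at best deliver $\min_{\widehat\M}\Delta_{d\succ d^*}\le 0$ with possible ties, and it cannot deliver the stronger claim accompanying \Cref{thm:bound_grounded_unknown_shift} that $\min_{\widehat\M}\Delta=-1$ for every pair, i.e.\ that the preference gap is completely unconstrained.

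The freedom you are not using is that an under-specified shift replaces not only the functional assignment of $\Z$ but also its exogenous variables, and the \emph{new} exogenous variables may be arbitrarily correlated with $\U_Y$. The paper's proof works in a canonical response-function parameterisation $(r_z,r_y)$ that reproduces any observed $P_d(z,y\mid\c)$, and models the shift as an arbitrary re-specification of the post-shift joint over $(r_z,r_y)$ subject only to preserving the marginal of $r_y$. By adversarially re-correlating the new $Z$-mechanism with $Y$'s response type, both $P_{\sigma,d}(y=1\mid\c)$ and $P_{\sigma,d}(y=1\mid z,\c)$ can be driven to $0$ or to $1$ independently for each decision, so $\Delta$ spans all of $[-1,1]$ in every context. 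An atomic intervention — decision-dependent or not — leaves the distribution of $\U_Y$ and its coupling to the rest of the model intact, which is exactly why it yields the nontrivial Manski-type bounds rather than the vacuous ones the theorem asserts. To repair your proof you would need to replace the $Z_0:=g(D)$ reduction with a shift whose new noise is chosen adversarially as a function of $Y$'s response type.
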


This result means that no decision could ever be ruled out from AI behaviour. We can show moreover that $\min_{\widehat\M\in\3M} \left( \hspace{0.1cm}\Delta \hspace{0.1cm}\right) = -1$ for any pair of decisions, meaning that the observed data (no matter what it is) gives us no information on AI decision-making.

In practice, however, it might be realistic to have access to some information in the shifted environment, such as covariate data, i.e., (samples from) $P_{\sigma,d}(\c)$, that could be given to the AI for it to update its internal model accordingly (with some abuse of terminology we say that the AI is grounded in $P_{\sigma,d}(\c)$). The next theorem shows that this additional information coupled with the AI's  behaviour makes the AI more predictable.

\begin{theorem}
    \label{thm:bound_grounded_partially_known_shift}
    Consider an AI grounded in a domain $\M$ and $P_{\sigma,d}(\C)$ made aware of a shift $\sigma$ on $\Z\subset\C$. The AI is weakly predictable under this shift in a context $\C=\c$ if there exists a decision $d^*$ such that,
    \begin{align*}
        1 - \frac{2 + \3E_{P_{d^*}}[\hspace{0.1cm}Y\mid\c\hspace{0.1cm}]P_{d^*}(\c) - \3E_{P_{d}}[\hspace{0.1cm}Y\mid\c\hspace{0.1cm}]P_{d}(\c) }{P_{\sigma,d^*}(\c)}
        +\frac{P_{d}(\c) - 2P_{d}(\z)}{P_{\sigma,d^*}(\c)}> 0,\quad \text{for some } d\neq d^*.
    \end{align*}
\end{theorem}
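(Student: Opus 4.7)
Weak predictability under shift $\sigma$ requires proving that for every valid $\widehat\M$ the preference gap $\Delta_{d \succ d^*} := \3E_{P^{\widehat\M}}[Y \mid do(\sigma, d), \c] - \3E_{P^{\widehat\M}}[Y \mid do(\sigma, d^*), \c]$ is strictly positive. Valid internal models are constrained by grounding in the training distribution ($\widehat P_d = P_d$) and in the shifted covariate distribution ($\widehat P_{\sigma, d}(\C) = P_{\sigma, d}(\C)$), so the problem reduces to bounding $\Delta_{d \succ d^*}$ over all couplings of the training counterfactuals $(Y_d, \C_d)$ and the shifted counterfactuals $(Y_{\sigma, d}, \C_{\sigma, d})$ that respect these marginals.

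The plan is to write each term as $\3E_{P^{\widehat\M}}[Y_{\sigma, d}\I_{\C_{\sigma, d} = \c}] / P_{\sigma, d}(\c)$ and lower-bound (resp.\ upper-bound) the numerator for $d$ (resp.\ $d^*$). To do so, I would couple the shifted and training worlds on a common latent space and exploit that, whenever $\C_{\sigma, d}(\omega) = \C_d(\omega)$ on a sample path $\omega$, the unchanged mechanism for $Y$ forces $Y_{\sigma, d}(\omega) = Y_d(\omega)$. This yields
\[
\3E_{P^{\widehat\M}}[Y_{\sigma, d}\I_{\C_{\sigma, d} = \c}] = \3E_{P^{\widehat\M}}[Y_d \I_{\C_d = \C_{\sigma, d} = \c}] + \3E_{P^{\widehat\M}}[Y_{\sigma, d} \I_{\C_{\sigma, d} = \c, \C_d \neq \c}],
\]
where the first summand can be related to the training quantity $\3E_{P_d}[Y \mid \c] P_d(\c)$ up to a Fréchet slack, and the second is controlled by a joint probability bounded by the two known marginals $P_d(\c)$ and $P_{\sigma, d}(\c)$. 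The inclusion $\Z \subset \C$ refines the slack: decomposing $\{\C_d \neq \c\} = \{\Z_d \neq \z\} \cup \{\Z_d = \z, \Z'_d \neq \z'\}$ (with $\Z' := \C \setminus \Z$) and bounding $\widehat P(\C_{\sigma, d} = \c, \Z_d \neq \z) \leq 1 - P_d(\z)$ in place of the coarser $1 - P_d(\c)$ is what introduces the $P_d(\z)$ term seen in the statement. I would then subtract the bounds on the $d$ and $d^*$ numerators, bring them over the common denominator $P_{\sigma, d^*}(\c)$, and rearrange to recover the stated expression; positivity then certifies $\Delta_{d \succ d^*} > 0$.

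\textbf{Main obstacle.} The hard part is controlling the Fréchet slack without any additional structural assumptions: the definition of $\sigma$ permits the mechanism of $\Z$ to be replaced by an arbitrary function of fresh independent exogenous variables, so the joint coupling of $\C_d$ and $\C_{\sigma, d}$ in $\widehat\M$ is essentially unconstrained beyond the two observable marginals. The $\Z \subset \C$ refinement, which exploits that only $\Z$'s mechanism is perturbed, is what makes the bound nontrivial. However, reducing to a single common denominator $P_{\sigma, d^*}(\c)$ (rather than keeping the natural $P_{\sigma, d}(\c)$ and $P_{\sigma, d^*}(\c)$ separately) introduces further conservative slack; this accounts for the weaker ``if'' (rather than ``iff'') nature of the condition, in contrast to \Cref{thm:bound_grounded_intervention}.
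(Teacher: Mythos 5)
Your proposal is correct and follows essentially the same route as the paper's proof: both hinge on the observation that, because only $\Z$'s mechanism is perturbed, the post-agreement (or post-intervention on $\Z$) value of $Y$ is invariant across the shifted and training worlds, and both then absorb the unidentified joint behaviour into a Fr\'echet-type slack of $1-P_d(\z)$ before combining the $d$ lower bound with the $d^*$ upper bound over the identified denominator $P_{\sigma,d^*}(\c)$. The paper phrases the bridge as the nested counterfactual $\3E[\hspace{0.1cm}Y_{\z}\I_{\r}(\R_{\z})\hspace{0.1cm}]$ rather than your event-level coupling $\{\C_{\sigma,d}=\C_d=\c\}$, but these are the same argument in different notation, and your diagnosis of why the condition is only sufficient (the conservative common-denominator step) matches the paper's own caveat that the bound is not tight.
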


This bound is not tight in general, however, meaning that it is possible that the AI is actually predictable in settings where \Cref{thm:bound_grounded_partially_known_shift} suggests it might not be.

\begin{example}[Shifted Medical AI]
    \label{ex:shifted_medical_ai}
    The AI from \Cref{ex:medical_assistant_grounded}, originally developed from data primarily from young patients, is now considered for deployment on an older patient population. Their probability of having high blood pressure $P_\sigma(Z=1)=0.9$ is known to be substantially higher than that observed during training $P(Z=1)=0.4$: there is a shift in the underlying mechanisms of $Z$. How do these changes influence the AI's beliefs on $\Delta$? \Cref{thm:bound_grounded_partially_known_shift} suggests that the medical AI might not be weakly predictable as the expression evaluates to a negative value for all pairs of decisions. The lower bounds on the AI preference gap are given by $\min_{\widehat\M\in\3M} \left( \hspace{0.1cm}\Delta_{d_1\succ d_0} \hspace{0.1cm}\right)\geq -0.55$ and $\min_{\widehat\M\in\3M} \left( \hspace{0.1cm}\Delta_{d_0\succ d_1} \hspace{0.1cm}\right)\geq-1$. That is, no decision is always inferior to any other decision.
    \hfill $\square$
\end{example}

\begin{figure*}[t]
    \centering
    \includegraphics[width = 14cm]{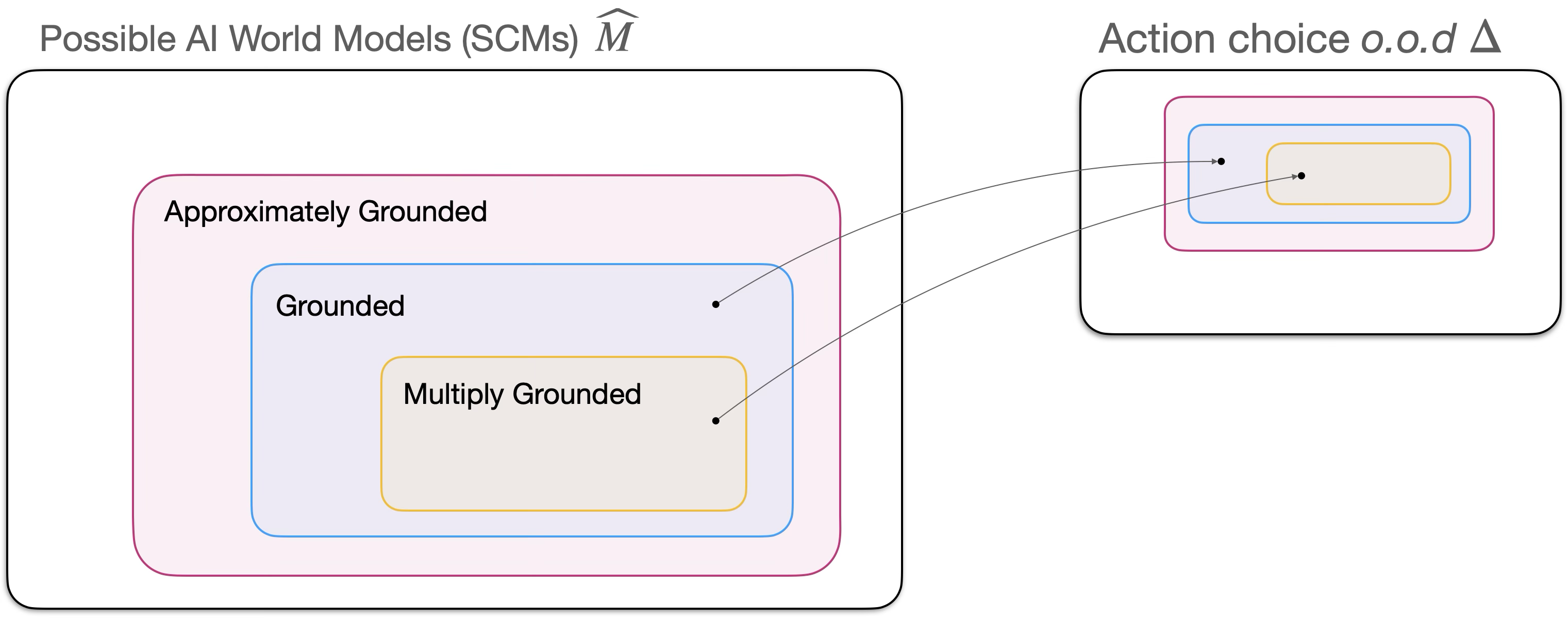}
    \caption{Grounding and observations in multiple environments constrains the AI's world model and improves our prediction of AI behaviour out-of-distribution (o.o.d). Approximate grounding is defined in \Cref{sec:relaxations}.}
    \label{fig:subset_model_based_belief_inference}
\end{figure*}
\subsection{AI's perceived fairness of decisions}
\label{sec:fairness_harm}
An AI's policy, even if optimal on average, has the potential to bring about a state of the world that is intrinsically harmful or unfair. Harm and fairness can be defined relative to a causal model \citep{beckers2022causal,plevcko2024causal}. This means that a notion of \textit{perceived} or \textit{subjective} harm and fairness could be attributed to AI systems that operate according to an (implicit) causal model. As a consequence, it is conceivable that AIs could be held morally accountable for the harm and unfairness that they cause. How might one estimate the AI's beliefs about the harm and unfairness that its decisions cause?

To ground our discussion, we consider here explicitly \textit{counterfactual} accounts of fairness and harm. These appeal to hypothetical situations, imagining ``what might have been if ...'', that can force us to confront our assumptions and values in a way that our regular thought processes might not\footnote{Alternative accounts to harm and fairness have been proposed \citep{barocas2016big,zhang2018fairness,plevcko2024causal}, sometimes motivated by scenarios where counterfactual accounts give incomplete results. For some of them, the AI's beliefs can be shown to be similarly constrained by its external behaviour. We provide a longer discussion in \Cref{sec:app_fairness_and_harm}.}. For example, the counterfactual event $(Y_x = 1 \mid X = x_0)$ refers to the outcome $(Y = 1)$ under an intervention $X = x$ when under normal circumstances $X$ would have evaluated to $x_0$. In the literature, probabilities over counterfactuals emerge from the definition of an SCM. For a set of (counterfactual) events $(\z_\w, \dots, \y_\x)$,
\begin{align}
    P(\z_\w, \dots, \y_\x) = \int_{\u: \Z_\w(\u)=\z_\w, \dots, \Y_\x(\u) = \y_\x} P(\u).
\end{align}

%\paragraph{Counterfactual Fairness}
\citet{kusner2017counterfactual} made a concrete proposal arguing that an AI's decision is said to be fair towards an individual if, from the AI's perspective, it entails the same utility in the actual world and in a counterfactual world where the individual belonged to a different group (defined by a sensitive attribute, e.g., gender, race). We adapt this notion to define an AI's counterfactual fairness gap.

\begin{definition}[Counterfactual Fairness Gap] Let $Z\in\{z_0,z_1\}$ be a protected attribute and $z_0$ a baseline value of $Z$. For a given utility $Y$, define an AI's counterfactual fairness gap relative to a decision $d$, in a given context $\c$, as
\begin{align}
    \Upsilon(d, \c) := \3E_{\widehat P}\left[\hspace{0.1cm}Y_{d,z_1} \mid z_0, \c\hspace{0.1cm}\right] - \3E_{\widehat P}\left[\hspace{0.1cm}Y_d \mid z_0, \c\hspace{0.1cm}\right].
\end{align}
\end{definition}

We say that an AI ``intends'' to be fair with respect to an attribute $Z$ if under any context $\C=\c$ and decision $D=d$ the counterfactual fairness gap $\Upsilon$ evaluates to 0. This means that, under its own internal world model, changing the value of $Z$ on the subset of situations with context $\c$ in which $Z$ was observed to $z_0$ does not change the AI's expected utility. In the following theorem we show that, unfortunately, the answer to this question is impossible to obtain given only the AI's external behaviour.

\begin{theorem}
    \label{thm:counterfactual_fairness}
    Consider an agent with utility $Y$ grounded in a domain $\M$. Then,
    \begin{align}
       - \3E_{P_d}[\hspace{0.1cm}Y\hspace{0.1cm} \mid z_0, \c] \leq \Upsilon(d, \c)  \leq 1- \3E_{P_d}[\hspace{0.1cm}Y\hspace{0.1cm} \mid z_0, \c].
    \end{align}
    This bound is tight.
\end{theorem}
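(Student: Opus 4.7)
The plan is to decompose $\Upsilon(d,\c)$ into its two expectations, pin down the second one using grounding, and bound the first by the support of $Y$; tightness is then argued by constructing two grounded SCMs that realise the extremes of the counterfactual.

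By the grounding assumption, $\widehat P_d(\V) \equiv P_d(\V)$, so
\begin{equation*}
    \3E_{\widehat P}[Y_d \mid z_0, \c] \;=\; \3E_{P_d}[Y \mid z_0, \c].
\end{equation*}
Since $Y$ is supported in $[0,1]$, the counterfactual expectation $\3E_{\widehat P}[Y_{d,z_1} \mid z_0, \c]$ also lies in $[0,1]$. Subtracting $\3E_{P_d}[Y \mid z_0, \c]$ from the chain $0 \le \3E_{\widehat P}[Y_{d,z_1} \mid z_0, \c] \le 1$ immediately yields the two bounds in the theorem, so the real content of the result lies in the tightness claim.

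For tightness, I would construct a pair of SCMs $\widehat \M_1, \widehat \M_2$, both grounded in $\M$, such that $\3E_{\widehat P}[Y_{d,z_1} \mid z_0, \c]$ equals $0$ under $\widehat \M_1$ and $1$ under $\widehat \M_2$. The key observation is that grounding only pins down Layer~2 quantities $P_d(\V)$, while $Y_{d,z_1}\mid Z=z_0,\c$ is a cross-world (Layer~3) object that depends on the joint response function of $Y$ over \emph{both} values of $Z$. Fixing the marginal mechanism of $Z$, the conditional mechanism of $Y$ evaluated at the \emph{observed} $Z$, and the conditional distribution of $\C$, still leaves the coupling between the response $Y(d, z_1, \cdot)$ and the latent event $\{Z = z_0\}$ completely unconstrained. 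In $\widehat \M_1$ I would align the latents $\U_Y, \U_Z$ so that whenever the latents realise $Z = z_0$, the response function returns $0$ at $z_1$; in $\widehat \M_2$ I would do the opposite, forcing the response to be $1$.

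The main obstacle is verifying that this re-coupling is consistent with an SCM that reproduces $P_d(\V)$ for every $d$. Working in a canonical response-function representation (one latent per observed variable, valued in the set of functions from parent configurations to the variable's domain), this reduces to a transportation-type problem: redistribute latent mass over response functions of $Y$ so that the marginal conditionals $\widehat P_d(y \mid z, \c)$ match $P_d(y \mid z, \c)$ for every observed $(d, z)$, while concentrating the mass on responses that evaluate to $0$ (respectively $1$) at $z_1$ within latent configurations consistent with $Z = z_0$. Since observations at $Z = z_0$ place no constraint on the value of the response function at $z_1$, such a coupling always exists, yielding the desired $\widehat \M_1, \widehat \M_2$ and establishing that both endpoints of the interval are attained.
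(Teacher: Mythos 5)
Your proof is correct and follows essentially the same route as the paper's: identify $\3E_{\widehat P}[Y_d\mid z_0,\c]$ via grounding (the paper makes explicit the needed fact that $Z_d=Z$, $\C_d=\C$), observe that the cross-world term $\3E_{\widehat P}[Y_{d,z_1}\mid z_0,\c]$ lies in $[0,1]$, and prove tightness by constructing grounded SCMs whose response function for $Y$ at the non-factual value $z_1$ is forced to $0$ or $1$ on the latent event $\{Z=z_0\}$. The paper simply writes out this construction concretely where you argue its existence via a canonical response-function coupling, but the content is the same.
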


The bound is tight in the sense that for each context, decision, and baseline attribute, we can find compatible models for which the equalities hold. The counterfactual fairness gap $\Upsilon$ is under-constrained. Since $\Upsilon=0$ is consistent with any external behaviour we can never conclude that the AI system "intends" to be unfair. Moreover, since the width of the bound is equal to 1, we can also never conclude that the AI is anywhere "close" to being fair, according to this counterfactual criterion.

\subsection{AI's perceived harm of decisions}

Prominent definitions of harm are similarly counterfactual in nature: the \textit{counterfactual comparative account of harm} defines a decision $d$ to harm a person if and only if she would have been better off if $d$ had not been taken \citep{hanser2008metaphysics,richens2022counterfactual,beckers2022causal,mueller2023personalized}. It is a contrast between events in hypothetical scenarios in which different decisions are made. Here, we quantify how ``well off'' a particular situation $\W=\w$ is with a binary utility variable $Y \gets f_Y(\W,\U_Y)\in\{0,1\}$ that we assume is tracked in experiments, i.e., $Y\in\V$. The following definition describes this notion of harm mathematically.

\begin{definition}[Counterfactual Harm Gap] Consider an AI with internal model $\widehat \M$ and utility $Y\in\{0,1\}$. The AI's expected counterfactual harm of a decision $d_1$ with respect to a baseline $d_0$, in context $\c$, is
\label{def:counterfactual_harm}
\begin{align}
    \Omega(d_1, d_0, \c):=\3E_{\widehat P}\left[\hspace{0.1cm}\max\{0,Y_{d_0} - Y_{d_1}\} \mid \c \hspace{0.1cm}\right].
\end{align}
\end{definition}

Operationally, the counterfactual harm gap $\Omega$ is the expected increase in utility had the AI made a default decision $d_0$, with respect to a different decision $d_1$ that the AI is contemplating. Counterfactual harm is therefore lower bounded at 0 with larger values indicating more harm. The following theorem shows that the external behaviour constraints the AI's perception of its counterfactual harm.

\begin{theorem}
    \label{thm:counterfactual_harm}
    Consider an AI with utility $Y$ grounded in a domain $\M$. Then,
    \begin{align*}
        \max\{0, \3E_{P_d}\left[\hspace{0.1cm}Y \mid \c \hspace{0.1cm}\right] + \3E_{P_{d_0}}\left[\hspace{0.1cm}Y \mid \c \hspace{0.1cm}\right]-1\} \leq \Omega(d, d_0, \c) \leq \min\{\3E_{P_{d}}\left[\hspace{0.1cm}Y \mid \c \hspace{0.1cm}\right], \3E_{P_{d_0}}\left[\hspace{0.1cm}Y \mid \c \hspace{0.1cm}\right]\}.
        % &\Omega\geq\max\{0, \3E_{P_{d_1}}\left[\hspace{0.1cm}Y \mid \c \hspace{0.1cm}\right] + \3E_{P_{d_0}}\left[\hspace{0.1cm}Y \mid \c \hspace{0.1cm}\right]-1\}\\
        % &\Omega \leq \min\{\3E_{P_{d_1}}\left[\hspace{0.1cm}Y \mid \c \hspace{0.1cm}\right], \3E_{P_{d_0}}\left[\hspace{0.1cm}Y \mid \c \hspace{0.1cm}\right]\}
    \end{align*}
    This bound is tight.
\end{theorem}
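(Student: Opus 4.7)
The plan is to reduce $\Omega$ to a probability over a joint counterfactual event, identify the marginals of that joint from grounding, apply the Fréchet--Hoeffding inequalities, and establish tightness by constructing extremal response-function SCMs.

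\textbf{Step 1 (reduction to a binary joint).} Because $Y\in\{0,1\}$, the quantity $\max\{0,Y_{d_0}-Y_d\}$ is exactly the indicator $\I_{\{Y_{d_0}=1,\,Y_d=0\}}$, so
$\Omega(d,d_0,\c)=\widehat P(Y_{d_0}=1,\,Y_d=0\mid\c),$
which is a joint probability over two incompatible counterfactual worlds inside $\widehat\M$ and is in general not pinned down by its marginals.

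\textbf{Step 2 (identification of marginals).} Grounding (\Cref{def:grounding}) equates each interventional distribution between $\widehat\M$ and $\M$, so
$\widehat P(Y_{d_0}=1\mid\c)=\3E_{P_{d_0}}[\,Y\mid\c\,],\qquad \widehat P(Y_d=1\mid\c)=\3E_{P_d}[\,Y\mid\c\,].$
Thus the two marginals of the joint in Step~1 are determined by behavioural data, even though the coupling between $Y_d$ and $Y_{d_0}$ is not.

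\textbf{Step 3 (Fréchet--Hoeffding).} Applying the marginal-only Fréchet--Hoeffding inequalities to the two Bernoulli counterfactual events in Step~1, and substituting the identified marginals from Step~2, yields tight lower and upper envelopes on $\Omega(d,d_0,\c)$; rearranging in terms of $\3E_{P_d}[\,Y\mid\c\,]$ and $\3E_{P_{d_0}}[\,Y\mid\c\,]$ produces the inequalities claimed in the statement.

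\textbf{Step 4 (tightness).} To show that each extreme is actually attained by some valid $\widehat\M\in\3M$, I would build two grounded SCMs using the Balke--Pearl canonical response-function representation: partition the latent space feeding $Y$ into the four response types $(Y_d(\u),Y_{d_0}(\u))\in\{0,1\}^2$, and choose their $\c$-conditional probabilities so that $\{Y_{d_0}=1\}$ and $\{Y_d=0\}$ are either coupled maximally (attaining the upper bound) or minimally (attaining the lower bound), while leaving the mechanisms for $\V\setminus\{Y\}$ identical to those of $\M$ so that grounding on the remaining variables is automatic. The routine work is Steps~1--3, essentially a Fréchet--Hoeffding calculation on identified marginals. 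The main obstacle is Step~4: the response-type probabilities must be chosen pointwise in $\c$ and must simultaneously reproduce $P_e(\V)$ for \emph{every} decision $e\in\supp_D$, not only for the pair $(d,d_0)$; handling the other decisions cleanly---e.g., by enlarging the response-type space to encode $Y_e(\u)$ for all $e$ and by using the freedom in the remaining types to preserve each $P_e$---is what makes the tightness construction non-trivial.
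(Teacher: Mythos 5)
Your proposal takes essentially the same route as the paper: reduce $\Omega(d,d_0,\c)$ to the joint counterfactual probability $\widehat P(Y_{d_0}=1, Y_d=0\mid\c)$, note that grounding identifies the two marginals from behavioural data, and apply the Fr\'echet--Hoeffding envelope; the paper simply outsources both the bound and its tightness to \citet[Sec.~4.2.2]{tian2000probabilities} rather than carrying out the computation and the canonical-model construction explicitly, so your Steps 3--4 are a self-contained version of what the paper cites. One caveat: a literal Fr\'echet computation on the events $\{Y_{d_0}=1\}$ and $\{Y_d=0\}$ yields $\max\{0,\ \3E_{P_{d_0}}[Y\mid\c]-\3E_{P_d}[Y\mid\c]\}\leq\Omega\leq\min\{\3E_{P_{d_0}}[Y\mid\c],\ 1-\3E_{P_d}[Y\mid\c]\}$, i.e.\ the displayed inequalities in the theorem correspond to the joint $\{Y_d=1,Y_{d_0}=1\}$ rather than to $\Omega$ as defined, so your claim that "rearranging produces the inequalities claimed in the statement" does not go through literally --- but this complementation mismatch is present in the paper's own proof as well, and your method, carried out carefully, gives the correct (tight) bounds.
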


This result it is an extension of bounds on the probability of causation given by \cite{pearl1999probabilities} and \cite{tian2000probabilities}. It suggests that an AI's beliefs about the harm that its decisions cause can be inferred approximately from data.
\section{Discussion: The ``Practical'' Limits of Behavioural Data}
\label{sec:relaxations}

The inductive biases implied by causal models and rational behaviour are powerful constraints on AI behaviour. But they might not capture the practical limitations of AI decision-making. In this section we show that grounding, expected utility maximization, observed data, etc., can be relaxed in practice.

\subsection{Approximate grounding}
Grounding implies that the AI's beliefs on the likelihood of events in the environment matches the observed probabilities. In practice, it might be reasonable to allow for some amount of error, and consider a notion of ``approximate'' grounding.

\begin{definition}[Approximate Grounding]
    \label{def:approximate_grounding}
    Let $\widehat \M$ represent the AI's internal model. Given a discrepancy measure $\psi$, we say that the AI is approximately grounded in a domain $\M$ to a degree $\delta>0$ if $\psi(\widehat P_d, P_d) \leq \delta$ for any $d\in\supp_D$.
\end{definition}

The choice of $\psi$ and $\delta$, in practice, depend on what error model is reasonable for the AI and problem at hand (we give an example below). Approximate grounding specifies a looser relationship between our observations of AI behaviour $P$ with what might be going on in the AI's ``mind'' $\widehat P$. For example, the world model of an approximately grounded AI is compatible with one distribution in the set $\{\widehat P_d:  \psi(\widehat P_d, P_d) \leq \delta\}$.

A more conservative bound (than \Cref{thm:bound_grounded_intervention}) on predictability could be derived for AIs that are approximately grounded in an environment $\M$.

\begin{corollary}
    \label{cor:bound_approximately_grounded_intervention}
    Given a discrepancy measure $\psi$, an AI approximately grounded in a domain $\M$ is weakly predictable in a context $\C=\c$ under a shift $\sigma:= do(\z), \Z\subset\V,$ if and only if there exists a decision $d^*$ such that,
    \begin{align}
        \min_{\widehat P: \hspace{0.1cm} \psi(\widehat P, P)\leq \delta} \left\{\frac{\3E_{\widehat P_{d}}[\hspace{0.1cm}Y \mid \c, \z\hspace{0.1cm}]\widehat P_{d}(\c, \z)}{\widehat P_{d}(\c, \z) + 1 - \widehat P_{d}(\z)}  
        -\frac{\3E_{\widehat P_{d^*}}[\hspace{0.1cm}Y \mid \c, \z\hspace{0.1cm}]\widehat P_{d^*}(\c, \z) + 1 - \widehat P_{d^*}(\z) }{\widehat P_{d^*}(\c, \z) + 1 - \widehat P_{d^*}(\z)}\right \}> 0, \quad \text{for some } d\neq d^*.
    \end{align}
\end{corollary}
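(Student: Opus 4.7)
The plan is to reduce Corollary 1 to Theorem 1 by a natural decomposition of the class of internal models compatible with approximate grounding. Let $\3M_\delta := \{\widehat\M : \psi(P^{\widehat\M_d}, P_d) \leq \delta \text{ for every } d \in \supp_D\}$ denote the set of valid internal models under $\delta$-approximate grounding. The key observation is that this class stratifies over its ``grounding targets'': for each distribution $\widehat P$ with $\psi(\widehat P, P) \leq \delta$, let $\3M(\widehat P) := \{\widehat\M : P^{\widehat\M_d}(\V) \equiv \widehat P_d(\V), \forall d\}$ denote the set of SCMs \emph{exactly} grounded in $\widehat P$. Then $\3M_\delta = \bigcup_{\widehat P : \psi(\widehat P, P) \leq \delta} \3M(\widehat P)$.

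Consequently, the worst-case preference gap factors as a double minimum:
\begin{align*}
\min_{\widehat\M \in \3M_\delta} \Delta_{d \succ d^*} \;=\; \min_{\widehat P : \psi(\widehat P, P) \leq \delta} \; \min_{\widehat\M \in \3M(\widehat P)} \Delta_{d \succ d^*}.
\end{align*}
Applying Theorem 1 with $\widehat P$ in place of the observed $P$, the inner minimum equals precisely the expression in Theorem 1 evaluated at $\widehat P$. Substituting this back, and recognizing that weak predictability under approximate grounding is, by definition, the statement that there exist $d^*$ and some $d \neq d^*$ for which the above double minimum is strictly positive, the corollary follows.

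The ``if and only if'' then inherits directly from Theorem 1: the forward direction is immediate (positivity of the outer minimum rules out $d^*$ for every $\widehat\M \in \3M_\delta$), while the converse uses the tightness clause of Theorem 1, which supplies, for each $\widehat P$ in the ball, an exactly-grounded witness SCM attaining the inner minimum — these witnesses lift to an element of $\3M_\delta$ whenever the corollary's expression is non-positive, demonstrating that weak predictability fails.

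The main technical obstacle is not algebraic but rather checking that $\3M(\widehat P)$ is non-empty throughout the $\delta$-ball, i.e., that every positive candidate family $\{\widehat P_d(\V)\}_d$ is actually realized by some SCM so that the tightness half of Theorem 1 can be invoked uniformly. This is a standard realizability fact (any positive family of interventional distributions admits an SCM representation in the spirit of the canonical construction behind Definition 1), and it justifies treating the outer minimization as a bona fide infimum over admissible internal models rather than over vacuous grounding targets.
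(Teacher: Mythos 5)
Your proposal is correct and follows essentially the same route as the paper's own proof: both reduce the corollary to \Cref{thm:bound_grounded_intervention} by observing that the class of approximately grounded models decomposes as a union, over distributions $\widehat P$ in the $\psi$-ball of radius $\delta$, of the exactly-grounded classes $\3M(\widehat P)$, so that the worst-case preference gap becomes the double minimum $\min_{\widehat P}\min_{\widehat\M\in\3M(\widehat P)}\Delta_{d\succ d^*}$ with the inner minimum given by the tight expression of \Cref{thm:bound_grounded_intervention} evaluated at $\widehat P$. If anything, you are more explicit than the paper about the stratification and about the realizability of every candidate family $\{\widehat P_d\}_d$ by some SCM, which is needed for the tightness half of the argument and which the paper leaves implicit.
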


The same proof strategy in \Cref{cor:bound_approximately_grounded_intervention} can be applied to all bounds on behaviour in \Cref{sec:decision_making} to get results under approximate grounding. We can compare quantitatively the two notions of grounding with an example.

\begin{example}[Approximately Grounded Medical AI]
    \label{ex:medical_assistant_approx_grounded}
    The results in \Cref{ex:medical_assistant_grounded} exploit the grounding relationship $\widehat P_d(\V)=P_d(\V)$ in $\M$. We might want to relax the equality by assuming that the AI is instead \textit{approximately grounded}. Minimum values on the AI's preference gap $\Delta$ would then be given by,
    \begin{align}
        \label{eq:ex_medical_assistant_approx_grounded}
        &\min_{\widehat\M\in\3M} \left( \hspace{0.1cm}\Delta_{d_1\succ d_0} \hspace{0.1cm}\right) = \min_{\widehat P: \hspace{0.1cm} \psi(\widehat P, P)\leq \delta} \left\{\widehat P_{d_1}(Z=z,Y=1)-\widehat P_{d_0}(Z=z,Y=1)+\widehat P_{d_0}(Z=z)-1\right\},\\
        &\min_{\widehat\M\in\3M} \left( \hspace{0.1cm}\Delta_{d_0\succ d_1} \hspace{0.1cm}\right) = \min_{\widehat P: \hspace{0.1cm} \psi(\widehat P, P)\leq \delta} \left\{-\widehat P_{d_1}(Z=z,Y=1)+\widehat P_{d_0}(Z=z,Y=1)-1+\widehat P_{d_1}(Z=z)\right\}.
    \end{align}
    These terms now capture an additional source of uncertainty due to external behaviour more loosely constraining $\widehat\M$. An empirical estimate of this quantity could be obtained by sampling distributions $\widehat P$ close to $P$ according to the distributional distance $\psi$ and threshold $\delta$, and taking the empirical minimum, as follows. Given that the data $(z,d,y) \sim P$ is discretely valued in this example, we could sample probabilities $\{\widehat P_d(z,y)\}_{z,y}$ from a Dirichlet distribution centred at the vector $\{P_d(z,y)\}_{z,y}$ with a small variance. The distance of each proposal from the reference distribution could then be evaluated according to $\psi$ and each proposal either accepted or rejected using $\delta$. For illustration, we implement a version of this idea setting $\psi$ to be the total variation distance and $\delta=0.1$. The two minimum values now evaluate to $-0.55$ and $-0.88$, respectively, which is slightly lower than under the assumption of grounding in \Cref{ex:medical_assistant_grounded} (that evaluate to $-0.4$ and $-0.8$, respectively). \hfill $\square$
\end{example}

\subsection{Approximate expected utility maximization}

In real-world environments it might be appropriate to treat the rationality of AI systems as ``approximate'' or ``bounded'' in some sense: AIs might choose actions that only \textit{approximately} maximize expected utility (rather than exactly maximize expected utility), given their model.

Mirroring \Cref{eq:predictability_condition}, we might say that a ``bounded'' AI is weakly predictable in some context $\C=\c$ if and only if there exists a decision $d^*$ such that,
\begin{align}
    \label{eq:bounded_rationality}
    \min_{\widehat\M\in\3M} \left( \hspace{0.1cm}\Delta_{d\succ d^*} \hspace{0.1cm}\right)> \lambda, \quad \Delta:= \3E_{P^{\widehat\M}}\left[\hspace{0.1cm}Y\mid do(\sigma,d), \c \right] - \3E_{P^{\widehat\M}}\left[\hspace{0.1cm}Y\mid do(\sigma,d^*), \c \right], \quad \text{for some } d\neq d^*.
\end{align}
$\lambda > 0$ is a constant that determines how much better a decision $d$ needs to be relative to decision $d^*$ for the AI to reliably rule out $d^*$ in favour of others. This representation appeals to the idea of imperfect discrimination, suggesting that the AI discerns between two alternatives only if they yield a sufficiently different utility \citep{dziewulski2021comprehensive}.

We might tighten our conditions on the observational data to reflect this behaviour and get a new set of results describing when AIs can be expected to be predictable. For instance, as a corollary to \Cref{thm:bound_grounded_intervention} we have the following.

\begin{corollary}
    \label{cor:bound_grounded_intervention_bounded_agents}
    An AI grounded in a domain $\M$ and bounded in the sense of \Cref{eq:bounded_rationality} is weakly predictable in some context $\C=\c$ under a shift $\sigma:= do(\z), \Z\subset\V,$ if and only if there exists a decision $d^*$ such that,
    \begin{align}
        \frac{\3E_{P_{d}}[\hspace{0.1cm}Y \mid \c, \z\hspace{0.1cm}]P_{d}(\c, \z)}{P_{d}(\c, \z) + 1 - P_{d}(\z)}  
        -\frac{\3E_{P_{d^*}}[\hspace{0.1cm}Y \mid \c, \z\hspace{0.1cm}]P_{d^*}(\c, \z) + 1 - P_{d^*}(\z) }{P_{d^*}(\c, \z) + 1 - P_{d^*}(\z)  }> \lambda, \quad \text{for some } d\neq d^*.
    \end{align}
\end{corollary}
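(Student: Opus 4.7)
The plan is to reduce the claim directly to \Cref{thm:bound_grounded_intervention}, observing that bounded rationality only shifts the decision threshold from $0$ to $\lambda$ while leaving the underlying minimization over valid world models untouched. No new causal or optimization argument is required; only the already-established tight characterization of the minimum preference gap.

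First, I would recall that the proof of \Cref{thm:bound_grounded_intervention} establishes an exact identity rather than a one-sided inequality: the left-hand side of the condition in that theorem coincides with $\min_{\widehat\M\in\3M}\Delta_{d\succ d^*}$, where $\3M$ denotes the set of grounded SCMs consistent with the observations $\{P_d(\V):d\in\supp_D\}$, and moreover this minimum is attained by an explicit SCM construction. I would invoke this identity as a black box; no re-derivation of the bound itself is needed.

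Next, I would translate the bounded-predictability criterion in \Cref{eq:bounded_rationality} into its SCM-optimization form: there exists a decision $d^*$ and some $d\neq d^*$ such that $\min_{\widehat\M\in\3M}\Delta_{d\succ d^*}>\lambda$. Substituting the identity from \Cref{thm:bound_grounded_intervention} gives exactly the stated inequality at threshold $\lambda$. For the ``if'' direction, any strict excess of the left-hand side above $\lambda$ propagates to every valid $\widehat\M$, so $d^*$ is provably ruled out in favour of $d$ under the bounded criterion. For the ``only if'' direction, the tightness clause of \Cref{thm:bound_grounded_intervention} supplies a grounded SCM achieving $\Delta_{d\succ d^*}\leq\lambda$ for each candidate $d$, so no decision dominates $d^*$ by the required margin across all valid internal models, and bounded weak predictability fails.

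The main, and essentially only, subtlety is bookkeeping around the existential quantifier over $d$ and the fact that the witnessing SCM from \Cref{thm:bound_grounded_intervention} may in principle depend on which $d$ is being paired with $d^*$. Since the predictability criterion itself is existential in $d$ (it suffices to find \emph{some} dominating decision), we apply the tightness separately for each candidate $d$: the ``only if'' direction only requires, for each $d$, a single valid SCM making $\Delta_{d\succ d^*}\leq\lambda$, which is exactly what the construction in the proof of \Cref{thm:bound_grounded_intervention} provides. No step of the argument depends on $\lambda$, so the proof transports verbatim from the $\lambda=0$ case, which is why this is stated as a corollary rather than proved from scratch.
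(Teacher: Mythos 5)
Your proposal is correct and matches the paper's (implicit) treatment: the paper gives no separate proof of \Cref{cor:bound_grounded_intervention_bounded_agents}, precisely because the proof of \Cref{thm:bound_grounded_intervention} already establishes the exact identity between the displayed expression and $\min_{\widehat\M\in\3M}\Delta_{d\succ d^*}$ (including tightness via the explicit SCM construction), so replacing the threshold $0$ by $\lambda$ is all that is needed. Your handling of the quantifier bookkeeping, allowing the witnessing SCM to depend on the pair $(d,d^*)$, is the right reading of the tightness clause and introduces no gap.
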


Note the addition of the scalar $\lambda>0$ in the inequality. Similar corollaries could be stated for all results in \Cref{sec:decision_making}.

\begin{figure*}[t]
    \centering
    \includegraphics[width = 15cm]{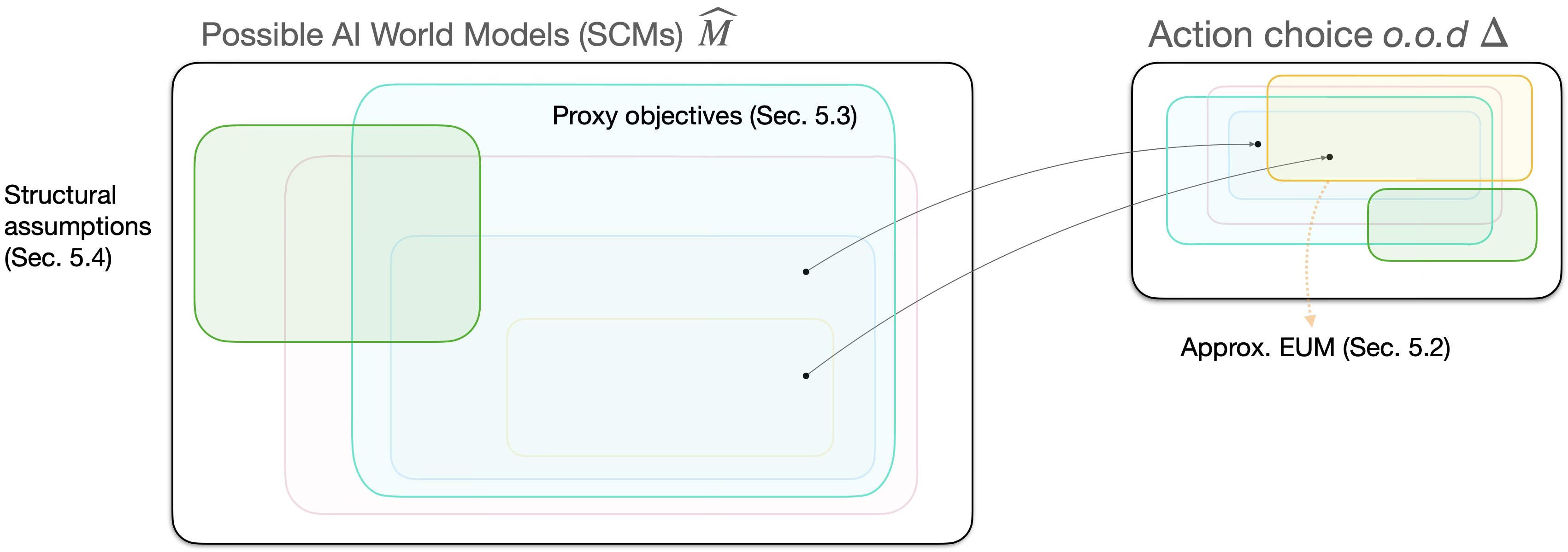}
    \caption{Building on \Cref{fig:subset_model_based_belief_inference}, AIs that are approximate expected utility maximizers (EUM), that internalize proxy objectives, or that obey known causal structure carve out different constraints on the set of possible AI models (from an observer's perspective) which may be exploited to improve our prediction of AI choices out-of-distribution (o.o.d). }
    \label{fig:model_based_belief_inference}
\end{figure*}

\subsection{Approximate inner alignment}
\label{sec:proxy}

A further assumption embedded in our results so far is the exact observation of an AI's utility in the data. In general, we might expect an AI system to have internalized a \textit{proxy} $Y^*$ that reflects properties correlated with, but distinct from, the observed utility $Y$ we ultimately wish to optimize, a setting we refer to as approximate inner alignment \citep{hubinger2019risks}. 

As observers, we face a problem of \textit{partial observability}: we don't have empirical access to the AI's actual utility function $Y^*$ and notions such as the preference gap $\Delta$ are therefore not computable. Without any assumptions on the relationship between $Y$ and $Y^*$, the preference gap $\Delta$ will be unconstrained and no inference about the AI's intended action out-of-distribution is possible. However, the observed $Y$ will typically be statistically related to the AI's implicit utility $Y^*$, especially if optimizing for $Y^*$ serves the AI well during training where success is measured by the observed values of $Y$. Under assumptions specifying how ``statistically related'' observed and proxy utility objectives are, we can expect that wider but possibly informative bounds could still be derived for the AI's beliefs. To show this in a simple setting, consider again the medical AI example.

\begin{example}[Partial Observability]
    \label{ex:partial_observability}
    Imagine that the Medical AI in \Cref{ex:medical_assistant_grounded} has internalized its own concept of an individual's disease progression $Y^*$. It is implicitly optimizing for that internal construction instead of the intended disease bio-marker $Y$. We know, or can assume, that the observed $Y$ is closely correlated with $Y^*$: in particular, that $P_d(Y^*=1 \mid Y=1, Z=z) \geq \alpha$ for some high value of $\alpha$ and all decisions $d$ and situations $z$. In words, whenever the bio-marker suggests health $(Y=1)$, with high probability the AI's interpretation also suggests health $(Y^*=1)$. This then constraints the possible values of $\Delta$ (under an intervention $Z\gets 1$) as $P_d(Y^*=1 \mid Z=z)$ is no longer arbitrarily defined. In fact could show that,%\footnote{We use the shorthand $P_d(z,y)=P_d(Z=z,Y=1)$.},
    \begin{align}
        &\min_{\widehat\M\in\3M} \left( \hspace{0.1cm}\Delta_{d_1\succ d_0} \hspace{0.1cm}\right) \geq \alpha P_{d_1}(Z=z, Y=1) - 1,\\
        &\min_{\widehat\M\in\3M} \left( \hspace{0.1cm}\Delta_{d_0\succ d_1} \hspace{0.1cm}\right) \geq \alpha P_{d_0}(Z=z, Y=1) - 1.
    \end{align}
    With $\alpha=0.9$ the bound evaluates to $-0.64$ and $-0.82$ respectively which is slightly lower than in \Cref{ex:medical_assistant_grounded}. We could verify also that if with $\alpha=0$, i.e., we don't know anything about the relationship between $Y$ and $Y^*$, the bounds become uninformative: evaluating to $-1$.\hfill $\square$
\end{example}

This suggests that behaviour out-of-distribution in (sufficiently constrained) settings of approximate inner alignment could be bounded in principle. Importantly, as the example shows, with the proposed framework we do not require knowing the relationship between $Y$ and $Y^*$ out-of-distribution: that uncertainty is naturally folded into the bounds.

\subsection{Assumptions on structure}

The uncertainty in AI decision-making out-of-distribution is ultimately a consequence of our lack of information about the AI's underlying cognition and internal mechanisms that produce a decision in a given situation, i.e., $\widehat\M$. In the causal inference literature, a common inductive bias to improve upon the ``data-driven'' bounds proposed so far is to assume qualitative knowledge about the underlying mechanisms in the form of a causal diagram, see e.g. \cite[Chapter 3]{pearl2009causality}. 
Here we illustrate how mild restrictions on the location of unobserved confounders in $\M$ lead to tighter bounds.

\begin{example}[Partial Unconfoundedness]
    Consider again our grounded medical AI from \Cref{ex:medical_assistant_grounded}. We might have reason to believe that the association between the intervened variable $Z$ and the utility $Y$ is conditionally unconfounded, meaning that there exists a variable $W\in\{w_0,w_1\}, W\in\V$ such that $P_{d,z}(y \mid w) = P_{d}(y \mid w, z)$. This restriction goes beyond grounding an asserts an equality between probabilities under different shifts that could be communicated to the AI for it to update its world model $\widehat\M$. We could then show that,
    \begin{align}
        \min_{\widehat\M\in\3M} \left( \hspace{0.1cm}\Delta_{d_1\succ d_0} \hspace{0.1cm}\right) &\geq \{1-P_{d_1}(Z=z,W=w_1)\} P_{d_1}(Y=1 \mid  Z=z, W=w_0) - P_{d_0}(Y=1,Z=z) \nonumber\\
        &+P_{d_1}(Y=1,Z=z,W=w_1) - \{1-P_{d_0}(Z=z)\} P_{d_0}(Y=1 \mid Z=z,W=w_1),\\
        \min_{\widehat\M\in\3M} \left( \hspace{0.1cm}\Delta_{d_0\succ d_1} \hspace{0.1cm}\right) &\geq \{1-P_{d_0}(Z=z,W=w_1)\} P_{d_0}(Y=1 \mid  Z=z, W=w_0) - P_{d_1}(Y=1,Z=z) \nonumber\\
        &+P_{d_0}(Y=1,Z=z,W=w_1) - \{1-P_{d_1}(Z=z)\} P_{d_1}(Y=1 \mid Z=z,W=w_1).
    \end{align}
    We show in \Cref{sec:app_examples} that these bounds are strictly tighter than the ones given in \Cref{ex:medical_assistant_grounded}.\hfill $\square$
\end{example}

Systematic bounds with access to a causal diagram have been shown by e.g., \citet{zhang2021partial,jalaldoust2024partial}, and could be explored further for making inference on AI decision-making. 

\Cref{fig:model_based_belief_inference} illustrates how some of these relaxations can be understood within our model-based formalism. 

\section{Conclusion}
An important consideration to safely interact with AI systems is to form expectations as to how they might act in the future. In this paper, we answer this question under the assumption that AI behaviour can be tracked by a well-specified collection of causal mechanisms (a structural causal model) that represents the AI's world model. This abstraction implies a consistency in behaviour that can in principle be exploited to infer the AI's choice of action in novel environments, out-of-distribution. Building on the theory of causal identification, we provide general bounds on AI decision-making that represent the theoretical limits of what can be inferred about AI behaviour given our framework. We hope our results can help justify the claim that the design and inference of world models is important to ensure AIs act safely and beneficially.

\section*{Acknowledgements}
Thanks to David Lindner and Damiano Fornasiere for comments on a draft of this paper.

% Bibliography components
% \bibliographystyle{abbrvnat}
% \nobibliography*
% \bibliography{template_refs}

\newpage

\bibliography{bibliography}

\begin{thebibliography}{64}
\providecommand{\natexlab}[1]{#1}
\providecommand{\url}[1]{\texttt{#1}}
\expandafter\ifx\csname urlstyle\endcsname\relax
  \providecommand{\doi}[1]{doi: #1}\else
  \providecommand{\doi}{doi: \begingroup \urlstyle{rm}\Url}\fi

\bibitem[Abbeel and Ng(2004)]{abbeel2004apprenticeship}
Pieter Abbeel and Andrew~Y Ng.
\newblock Apprenticeship learning via inverse reinforcement learning.
\newblock In \emph{Proceedings of the twenty-first international conference on Machine learning}, page~1, 2004.

\bibitem[Afriat(1967)]{afriat1967construction}
Sydney~N Afriat.
\newblock The construction of utility functions from expenditure data.
\newblock \emph{International economic review}, 8:\penalty0 67--77, 1967.

\bibitem[Amin and Singh(2016)]{amin2016towards}
Kareem Amin and Satinder Singh.
\newblock Towards resolving unidentifiability in inverse reinforcement learning.
\newblock \emph{arXiv preprint arXiv:1601.06569}, 2016.

\bibitem[Balke and Pearl(1997)]{balke1997bounds}
Alexander Balke and Judea Pearl.
\newblock Bounds on treatment effects from studies with imperfect compliance.
\newblock \emph{Journal of the American Statistical Association}, 92\penalty0 (439):\penalty0 1171--1176, 1997.

\bibitem[Bareinboim et~al.(2022)Bareinboim, Correa, Ibeling, and Icard]{bareinboim:etal2020}
Elias Bareinboim, Juan~D. Correa, Duligur Ibeling, and Thomas Icard.
\newblock On pearl’s hierarchy and the foundations of causal inference.
\newblock In \emph{Probabilistic and Causal Inference: The Works of Judea Pearl}, page 507–556. Association for Computing Machinery, NY, USA, 1st edition, 2022.

\bibitem[Barocas and Selbst(2016)]{barocas2016big}
Solon Barocas and Andrew~D Selbst.
\newblock Big data's disparate impact.
\newblock \emph{Calif. L. Rev.}, 104:\penalty0 671, 2016.

\bibitem[Beckers et~al.(2022)Beckers, Chockler, and Halpern]{beckers2022causal}
Sander Beckers, Hana Chockler, and Joseph Halpern.
\newblock A causal analysis of harm.
\newblock \emph{Advances in Neural Information Processing Systems}, 35:\penalty0 2365--2376, 2022.

\bibitem[Bellot(2024)]{bellot2023towards}
Alexis Bellot.
\newblock Towards bounding causal effects under {M}arkov equivalence.
\newblock In \emph{The 40th Conference on Uncertainty in Artificial Intelligence}. PMLR, 2024.

\bibitem[Bellot and Chiappa(2024)]{bellot2024towards}
Alexis Bellot and Silvia Chiappa.
\newblock Towards estimating bounds on the effect of policies under unobserved confounding.
\newblock In \emph{The Thirty-eighth Annual Conference on Neural Information Processing Systems}, 2024.

\bibitem[Bellot et~al.(2024)Bellot, Malek, and Chiappa]{bellot2024transportability}
Alexis Bellot, Alan Malek, and Silvia Chiappa.
\newblock Transportability for bandits with data from different environments.
\newblock \emph{Advances in Neural Information Processing Systems}, 36, 2024.

\bibitem[Bengio et~al.(2024)Bengio, Cohen, Malkin, MacDermott, Fornasiere, Greiner, and Kaddar]{bengio2024can}
Yoshua Bengio, Michael~K Cohen, Nikolay Malkin, Matt MacDermott, Damiano Fornasiere, Pietro Greiner, and Younesse Kaddar.
\newblock Can a bayesian oracle prevent harm from an agent?
\newblock \emph{arXiv preprint arXiv:2408.05284}, 2024.

\bibitem[Bengio et~al.(2025)Bengio, Cohen, Fornasiere, Ghosn, Greiner, MacDermott, Mindermann, Oberman, Richardson, Richardson, et~al.]{bengio2025superintelligent}
Yoshua Bengio, Michael Cohen, Damiano Fornasiere, Joumana Ghosn, Pietro Greiner, Matt MacDermott, S{\"o}ren Mindermann, Adam Oberman, Jesse Richardson, Oliver Richardson, et~al.
\newblock Superintelligent agents pose catastrophic risks: Can scientist ai offer a safer path?
\newblock \emph{arXiv preprint arXiv:2502.15657}, 2025.

\bibitem[Chickering and Pearl(1996)]{chickering1996clinician}
David~Maxwell Chickering and Judea Pearl.
\newblock A clinician's tool for analyzing non-compliance.
\newblock In \emph{Proceedings of the National Conference on Artificial Intelligence}, pages 1269--1276, 1996.

\bibitem[Correa and Bareinboim(2020{\natexlab{a}})]{correa2020calculus}
Juan Correa and Elias Bareinboim.
\newblock A calculus for stochastic interventions: Causal effect identification and surrogate experiments.
\newblock In \emph{Proceedings of the AAAI conference on artificial intelligence}, volume~34, pages 10093--10100, 2020{\natexlab{a}}.

\bibitem[Correa and Bareinboim(2020{\natexlab{b}})]{correa2020general}
Juan Correa and Elias Bareinboim.
\newblock General transportability of soft interventions: Completeness results.
\newblock \emph{Advances in Neural Information Processing Systems}, 33:\penalty0 10902--10912, 2020{\natexlab{b}}.

\bibitem[Dalrymple et~al.(2024)Dalrymple, Skalse, Bengio, Russell, Tegmark, Seshia, Omohundro, Szegedy, Goldhaber, Ammann, et~al.]{dalrymple2024towards}
David Dalrymple, Joar Skalse, Yoshua Bengio, Stuart Russell, Max Tegmark, Sanjit Seshia, Steve Omohundro, Christian Szegedy, Ben Goldhaber, Nora Ammann, et~al.
\newblock Towards guaranteed safe ai: A framework for ensuring robust and reliable ai systems.
\newblock \emph{arXiv preprint arXiv:2405.06624}, 2024.

\bibitem[Davidson(1963)]{davidson1963actions}
Donald Davidson.
\newblock Actions, reasons, and causes.
\newblock \emph{The Journal of Philosophy}, 60\penalty0 (23):\penalty0 685--700, 1963.

\bibitem[Dennett(1989)]{dennett1989intentional}
Daniel~C Dennett.
\newblock \emph{The intentional stance}.
\newblock MIT press, 1989.

\bibitem[Dennett(2017)]{dennett2017bacteria}
Daniel~C Dennett.
\newblock \emph{From bacteria to Bach and back: The evolution of minds}.
\newblock WW Norton \& Company, 2017.

\bibitem[Dziewulski(2021)]{dziewulski2021comprehensive}
Pawel Dziewulski.
\newblock A comprehensive revealed preference approach to approximate utility maximisation.
\newblock \emph{Tech Report}, 2021.

\bibitem[Everitt et~al.(2021)Everitt, Carey, Langlois, Ortega, and Legg]{everitt2021agent}
Tom Everitt, Ryan Carey, Eric~D Langlois, Pedro~A Ortega, and Shane Legg.
\newblock Agent incentives: A causal perspective.
\newblock In \emph{Proceedings of the AAAI Conference on Artificial Intelligence}, volume~35, pages 11487--11495, 2021.

\bibitem[Finkelstein and Shpitser(2020)]{finkelstein2020deriving}
Noam Finkelstein and Ilya Shpitser.
\newblock Deriving bounds and inequality constraints using logical relations among counterfactuals.
\newblock In \emph{Conference on Uncertainty in Artificial Intelligence}, pages 1348--1357. PMLR, 2020.

\bibitem[Geiger et~al.(2021)Geiger, Lu, Icard, and Potts]{geiger2021causal}
Atticus Geiger, Hanson Lu, Thomas Icard, and Christopher Potts.
\newblock Causal abstractions of neural networks.
\newblock \emph{Advances in Neural Information Processing Systems}, 34:\penalty0 9574--9586, 2021.

\bibitem[Geiger et~al.(2024)Geiger, Wu, Potts, Icard, and Goodman]{geiger2024finding}
Atticus Geiger, Zhengxuan Wu, Christopher Potts, Thomas Icard, and Noah Goodman.
\newblock Finding alignments between interpretable causal variables and distributed neural representations.
\newblock In \emph{Causal Learning and Reasoning}, pages 160--187. PMLR, 2024.

\bibitem[Goldstein and Levinstein(2024)]{goldstein2024does}
Simon Goldstein and Benjamin~A Levinstein.
\newblock Does chatgpt have a mind?
\newblock \emph{arXiv preprint arXiv:2407.11015}, 2024.

\bibitem[Gurnee and Tegmark(2023)]{gurnee2023language}
Wes Gurnee and Max Tegmark.
\newblock Language models represent space and time.
\newblock \emph{arXiv preprint arXiv:2310.02207}, 2023.

\bibitem[Halpern and Piermont(2024)]{halpern2024subjective}
Joseph~Y Halpern and Evan Piermont.
\newblock Subjective causality.
\newblock \emph{arXiv preprint arXiv:2401.10937}, 2024.

\bibitem[Hanser(2008)]{hanser2008metaphysics}
Matthew Hanser.
\newblock The metaphysics of harm.
\newblock \emph{Philosophy and Phenomenological Research}, 77\penalty0 (2):\penalty0 421--450, 2008.

\bibitem[Hubinger et~al.(2019)Hubinger, van Merwijk, Mikulik, Skalse, and Garrabrant]{hubinger2019risks}
Evan Hubinger, Chris van Merwijk, Vladimir Mikulik, Joar Skalse, and Scott Garrabrant.
\newblock Risks from learned optimization in advanced machine learning systems.
\newblock \emph{arXiv preprint arXiv:1906.01820}, 2019.

\bibitem[Jalaldoust et~al.(2024)Jalaldoust, Bellot, and Bareinboim]{jalaldoust2024partial}
Kasra Jalaldoust, Alexis Bellot, and Elias Bareinboim.
\newblock Partial transportability for domain generalization.
\newblock In \emph{The Thirty-eighth Annual Conference on Neural Information Processing Systems}, 2024.

\bibitem[Jeffrey(1990)]{jeffrey1990logic}
Richard~C Jeffrey.
\newblock \emph{The logic of decision}.
\newblock University of Chicago press, 1990.

\bibitem[Jesson et~al.(2021)Jesson, Mindermann, Gal, and Shalit]{jesson2021quantifying}
Andrew Jesson, S{\"o}ren Mindermann, Yarin Gal, and Uri Shalit.
\newblock Quantifying ignorance in individual-level causal-effect estimates under hidden confounding.
\newblock In \emph{International Conference on Machine Learning}, pages 4829--4838. PMLR, 2021.

\bibitem[Joshi et~al.(2024)Joshi, Zhang, and Bareinboim]{joshi2024towards}
Shalmali Joshi, Junzhe Zhang, and Elias Bareinboim.
\newblock Towards safe policy learning under partial identifiability: A causal approach.
\newblock In \emph{Proceedings of the AAAI Conference on Artificial Intelligence}, volume~38, pages 13004--13012, 2024.

\bibitem[Joyce(1999)]{joyce1999foundations}
James~M Joyce.
\newblock \emph{The foundations of causal decision theory}.
\newblock Cambridge University Press, 1999.

\bibitem[Kim et~al.(2021)Kim, Garg, Shiragur, and Ermon]{kim2021reward}
Kuno Kim, Shivam Garg, Kirankumar Shiragur, and Stefano Ermon.
\newblock Reward identification in inverse reinforcement learning.
\newblock In \emph{International Conference on Machine Learning}, pages 5496--5505. PMLR, 2021.

\bibitem[Kusner et~al.(2017)Kusner, Loftus, Russell, and Silva]{kusner2017counterfactual}
Matt~J Kusner, Joshua Loftus, Chris Russell, and Ricardo Silva.
\newblock Counterfactual fairness.
\newblock \emph{Advances in neural information processing systems}, 30, 2017.

\bibitem[Legg(2023)]{legg2023system}
Shane Legg.
\newblock System 2 safety.
\newblock \url{https://www.youtube.com/watch?v=8IUIGVVLbCg&ab_channel=FAR\%E2\%80\%A4AI}, 2023.
\newblock Accessed: 2025-01-24.

\bibitem[Li et~al.(2022)Li, Hopkins, Bau, Vi{\'e}gas, Pfister, and Wattenberg]{li2022emergent}
Kenneth Li, Aspen~K Hopkins, David Bau, Fernanda Vi{\'e}gas, Hanspeter Pfister, and Martin Wattenberg.
\newblock Emergent world representations: Exploring a sequence model trained on a synthetic task.
\newblock \emph{arXiv preprint arXiv:2210.13382}, 2022.

\bibitem[Manski(1977)]{manski1977structure}
Charles~F Manski.
\newblock The structure of random utility models.
\newblock \emph{Theory and decision}, 8\penalty0 (3):\penalty0 229, 1977.

\bibitem[Manski(1990)]{manski1990nonparametric}
Charles~F Manski.
\newblock Nonparametric bounds on treatment effects.
\newblock \emph{The American Economic Review}, 80\penalty0 (2):\penalty0 319--323, 1990.

\bibitem[Mueller and Pearl(2023)]{mueller2023personalized}
Scott Mueller and Judea Pearl.
\newblock Personalized decision making--a conceptual introduction.
\newblock \emph{Journal of Causal Inference}, 11\penalty0 (1):\penalty0 20220050, 2023.

\bibitem[Ng et~al.(2000)Ng, Russell, et~al.]{ng2000algorithms}
Andrew~Y Ng, Stuart Russell, et~al.
\newblock Algorithms for inverse reinforcement learning.
\newblock In \emph{Icml}, volume~1, page~2, 2000.

\bibitem[Pearl(1999)]{pearl1999probabilities}
Judea Pearl.
\newblock Probabilities of causation: three counterfactual interpretations and their identification.
\newblock In \emph{Probabilistic and Causal Inference: The Works of Judea Pearl}, pages 317--372, 1999.

\bibitem[Pearl(2009)]{pearl2009causality}
Judea Pearl.
\newblock \emph{Causality}.
\newblock Cambridge university press, 2009.

\bibitem[Plecko et~al.(2024)Plecko, Bareinboim, et~al.]{plevcko2024causal}
Drago Plecko, Elias Bareinboim, et~al.
\newblock Causal fairness analysis: a causal toolkit for fair machine learning.
\newblock \emph{Foundations and Trends{\textregistered} in Machine Learning}, 17\penalty0 (3):\penalty0 304--589, 2024.

\bibitem[Richens and Everitt(2024)]{richens2024robust}
Jonathan Richens and Tom Everitt.
\newblock Robust agents learn causal world models.
\newblock \emph{arXiv preprint arXiv:2402.10877}, 2024.

\bibitem[Richens et~al.(2022)Richens, Beard, and Thompson]{richens2022counterfactual}
Jonathan Richens, Rory Beard, and Daniel~H Thompson.
\newblock Counterfactual harm.
\newblock \emph{Advances in Neural Information Processing Systems}, 35:\penalty0 36350--36365, 2022.

\bibitem[Robins(1989)]{robins1989analysis}
James~M Robins.
\newblock The analysis of randomized and non-randomized aids treatment trials using a new approach to causal inference in longitudinal studies.
\newblock \emph{Health service research methodology: a focus on AIDS}, pages 113--159, 1989.

\bibitem[Rosenbaum et~al.(2010)Rosenbaum, Rosenbaum, and Briskman]{rosenbaum2010design}
Paul~R Rosenbaum, P~Briskman Rosenbaum, and Briskman.
\newblock \emph{Design of observational studies}, volume~10.
\newblock Springer, 2010.

\bibitem[Rothkopf and Dimitrakakis(2011)]{rothkopf2011preference}
Constantin~A Rothkopf and Christos Dimitrakakis.
\newblock Preference elicitation and inverse reinforcement learning.
\newblock In \emph{Machine Learning and Knowledge Discovery in Databases: European Conference, ECML PKDD 2011, Athens, Greece, September 5-9, 2011, Proceedings, Part III 22}, pages 34--48. Springer, 2011.

\bibitem[Savage(1972)]{savage1972foundations}
Leonard~J Savage.
\newblock \emph{The foundations of statistics}.
\newblock Courier Corporation, 1972.

\bibitem[Schwitzgebel(2024)]{sep-belief}
Eric Schwitzgebel.
\newblock {Belief}.
\newblock In Edward~N. Zalta and Uri Nodelman, editors, \emph{The {Stanford} Encyclopedia of Philosophy}. Metaphysics Research Lab, Stanford University, {S}pring 2024 edition, 2024.

\bibitem[Shanahan(2024)]{shanahan2024talking}
Murray Shanahan.
\newblock Talking about large language models.
\newblock \emph{Communications of the ACM}, 67\penalty0 (2):\penalty0 68--79, 2024.

\bibitem[Skalse and Abate(2023)]{skalse2023misspecification}
Joar Skalse and Alessandro Abate.
\newblock Misspecification in inverse reinforcement learning.
\newblock In \emph{Proceedings of the AAAI Conference on Artificial Intelligence}, volume~37, pages 15136--15143, 2023.

\bibitem[Skalse et~al.(2023)Skalse, Farrugia-Roberts, Russell, Abate, and Gleave]{skalse2023invariance}
Joar Max~Viktor Skalse, Matthew Farrugia-Roberts, Stuart Russell, Alessandro Abate, and Adam Gleave.
\newblock Invariance in policy optimisation and partial identifiability in reward learning.
\newblock In \emph{International Conference on Machine Learning}, pages 32033--32058. PMLR, 2023.

\bibitem[Tan(2006)]{tan2006distributional}
Zhiqiang Tan.
\newblock A distributional approach for causal inference using propensity scores.
\newblock \emph{Journal of the American Statistical Association}, 101\penalty0 (476):\penalty0 1619--1637, 2006.

\bibitem[Tian and Pearl(2000)]{tian2000probabilities}
Jin Tian and Judea Pearl.
\newblock Probabilities of causation: Bounds and identification.
\newblock \emph{Annals of Mathematics and Artificial Intelligence}, 28\penalty0 (1):\penalty0 287--313, 2000.

\bibitem[Toshniwal et~al.(2022)Toshniwal, Wiseman, Livescu, and Gimpel]{toshniwal2022chess}
Shubham Toshniwal, Sam Wiseman, Karen Livescu, and Kevin Gimpel.
\newblock Chess as a testbed for language model state tracking.
\newblock In \emph{Proceedings of the AAAI Conference on Artificial Intelligence}, volume~36, pages 11385--11393, 2022.

\bibitem[Vafa et~al.(2024)Vafa, Chen, Kleinberg, Mullainathan, and Rambachan]{vafa2024evaluating}
Keyon Vafa, Justin~Y Chen, Jon Kleinberg, Sendhil Mullainathan, and Ashesh Rambachan.
\newblock Evaluating the world model implicit in a generative model.
\newblock \emph{arXiv preprint arXiv:2406.03689}, 2024.

\bibitem[Yadlowsky et~al.(2018)Yadlowsky, Namkoong, Basu, Duchi, and Tian]{yadlowsky2018bounds}
Steve Yadlowsky, Hongseok Namkoong, Sanjay Basu, John Duchi, and Lu~Tian.
\newblock Bounds on the conditional and average treatment effect with unobserved confounding factors.
\newblock \emph{arXiv preprint arXiv:1808.09521}, 2018.

\bibitem[Zhang(2020)]{zhang2020designing}
Junzhe Zhang.
\newblock Designing optimal dynamic treatment regimes: A causal reinforcement learning approach.
\newblock In \emph{International Conference on Machine Learning}, pages 11012--11022. PMLR, 2020.

\bibitem[Zhang and Bareinboim(2018)]{zhang2018fairness}
Junzhe Zhang and Elias Bareinboim.
\newblock Fairness in decision-making—the causal explanation formula.
\newblock In \emph{Proceedings of the AAAI Conference on Artificial Intelligence}, volume~32, 2018.

\bibitem[Zhang and Bareinboim(2021)]{zhang2021bounding}
Junzhe Zhang and Elias Bareinboim.
\newblock Bounding causal effects on continuous outcome.
\newblock In \emph{Proceedings of the AAAI Conference on Artificial Intelligence}, volume~35, pages 12207--12215, 2021.

\bibitem[Zhang et~al.(2021)Zhang, Tian, and Bareinboim]{zhang2021partial}
Junzhe Zhang, Jin Tian, and Elias Bareinboim.
\newblock Partial counterfactual identification from observational and experimental data.
\newblock \emph{arXiv preprint arXiv:2110.05690}, 2021.

\end{thebibliography}
\bibliographystyle{plainnat}

\newpage
\appendix
\section{Discussion -- Examples}
\label{sec:app_examples}

In this section, we provide additional details to better appreciate the examples provided in the main body of this work.

In \Cref{ex:medical_assistant}, we introduce two SCMs that might serve as internal world models for an AI agent but that induce different optimal decisions if evaluated out-of-distribution. Let $\M^1_d:=\langle \V:\{D,Z,Y\}, \U:U, \1F_1, P\rangle$ be given by
\begin{align*}
    &\1F_1 := \begin{cases}
    D \gets &d, \\
    Z \gets &\I_{U=1 \text{ or } 4}, \\
    Y \gets &\begin{cases}
    Z\cdot \I_{U=4} + (1-Z)\cdot \I_{U= 1,3 \text{ or } 4}& \text{if }d=0\\
    Z\cdot\I_{U\neq 2} + (1-Z)\cdot \I_{U= 2\text{ or } 4}& \text{if }d=1
    %\I_{D=0,Z=0,U=1,3 \text{ or } 4} + \I_{D=1,Z=0,U=2 \text{ or } 4} + \I_{D=1,Z=1,U=1,3,4 \text{ or } 5} +\I_{D=0,Z=1,U=4} ,
    \end{cases}
    \end{cases},
    \\
    &P(U=u) = 0.2 \quad \text{ for } u\in \{1,2,3,4,5\}.
\end{align*}
and $\M^2_d:=\langle \V:\{D,Z,Y\}, \U:U, \1F_2, P\rangle$ be given by
\begin{align*}
    % \1F_2:\begin{cases}
    % Z \gets &\I_{U=1 \text{ or } 4}, \\
    % Y \gets &\I_{D=0,Z=0,U=3 \text{ or } 4} + \I_{D=0,Z=1,U=2,3,4 \text{ or } 5} + \I_{D=1,Z=0,U=1 \text{ or } 2} + \I_{D=1,Z=1,U=1 \text{ or } 4},
    % \end{cases}\\
    &\1F_2 := \begin{cases}
    D \gets &d, \\
    Z \gets &\I_{U=1 \text{ or } 4}, \\
    Y \gets &\begin{cases}
    Z\cdot \I_{U\neq1} + (1-Z)\cdot \I_{U= 3\text{ or } 4}& \text{if }d=0\\
    Z\cdot\I_{U= 1 \text{ or } 4} + (1-Z)\cdot \I_{U= 1\text{ or } 2}& \text{if }d=1
    \end{cases}
    \end{cases},
    \\
    &P(U=u) = 0.2 \quad \text{ for } u\in \{1,2,3,4,5\}.
\end{align*}
The endogenous variables $\V:\{D,Z,Y\}$ represent, respectively, the medical treatment $D$, a clinical outcome of interest $Y$, and an auxiliary variable $Z$. The exogenous variable $U$ is a latent variable that influences the values of $Z$ and $Y$ obtained in experiments.

Under the definition of an SCM, these specifications induce a mapping of events in the space of $P(\U)$ to $P(\V)$. In the context of $\M^1$ and $\M^2$, each entry in  \Cref{tab:mapping_m1,tab:mapping_m2} corresponds to an event in the space of $\U$ and a corresponding realisation of $\V$ according to the functions $\1F_1$ and $\1F_2$. A particular probability can be evaluated according to $\M^1$ and $\M^2$, for example,
\begin{align}
    P^{\M^1_{d=1}}(Z=1,Y=1) = \sum_{Z_{d=1}(\u) = 1, Y_{d=1}(\u)=1}P(\u) =P(U=1 \text{ or } 4)=0.4,
\end{align}
which is just the sum of the probabilities of the events in the space of $\U$ consistent with the events $(Z_{d=1} = 1, Y_{d=1}=1)$. Since both tables lead to the same realisations of events $\V=\v$, we can conclude that probabilities of the form $P_{d}(z,y)$ evaluate to the same values under $\M_1$ and $\M_2$. That is, both models are valid internal representations of AI models that are grounded in an environment with data sampled according to $P_{d}(z,y)$.

We could similarly evaluate probability expressions under different sub-models of $\M^1$ and $\M^2$. In particular, consider the sub-models obtained by fixing $Z\gets 1$ given by $\M^1_{d,z=1}$ and $\M^2_{d,z=1}$ with the following updated structural functions,
\begin{align*}
    &\1F_{1,z} := \begin{cases}
    D \gets &d, \\
    Z \gets &1, \\
    Y \gets &\begin{cases}
    Z\cdot \I_{U=4} + (1-Z)\cdot \I_{U= 1,3 \text{ or } 4}& \text{if }d=0\\
    Z\cdot\I_{U\neq 2} + (1-Z)\cdot \I_{U= 2\text{ or } 4}& \text{if }d=1
    %\I_{D=0,Z=0,U=1,3 \text{ or } 4} + \I_{D=1,Z=0,U=2 \text{ or } 4} + \I_{D=1,Z=1,U=1,3,4 \text{ or } 5} +\I_{D=0,Z=1,U=4} ,
    \end{cases}
    \end{cases},
\end{align*}
and,
\begin{align*}
    % \1F_2:\begin{cases}
    % Z \gets &\I_{U=1 \text{ or } 4}, \\
    % Y \gets &\I_{D=0,Z=0,U=3 \text{ or } 4} + \I_{D=0,Z=1,U=2,3,4 \text{ or } 5} + \I_{D=1,Z=0,U=1 \text{ or } 2} + \I_{D=1,Z=1,U=1 \text{ or } 4},
    % \end{cases}\\
    &\1F_{2,z} := \begin{cases}
    D \gets &d, \\
    Z \gets &1, \\
    Y \gets &\begin{cases}
    Z\cdot \I_{U\neq1} + (1-Z)\cdot \I_{U= 3\text{ or } 4}& \text{if }d=0\\
    Z\cdot\I_{U= 1 \text{ or } 4} + (1-Z)\cdot \I_{U= 1\text{ or } 2}& \text{if }d=1
    \end{cases}
    \end{cases}.
\end{align*}
Probabilities of events under these two models might now take different values. For example,
\begin{align}
    P^{\M^1_{d=1,z=1}}(Y=1) &= \sum_{Y_{d=1,z=1}(\u)=1}P(\u) =P(U\neq 2)=0.8,\\
    P^{\M^2_{d=1,z=1}}(Y=1) &= \sum_{Y_{d=1,z=1}(\u)=1}P(\u) =P(U= 1 \text{ or }4) =0.4,
\end{align}
and similarly,
\begin{align}
    P^{\M^1_{d=0,z=1}}(Y=1) &= \sum_{Y_{d=1,z=1}(\u)=1}P(\u) =P(U=4) =0.2,\\
    P^{\M^2_{d=0,z=1}}(Y=1) &= \sum_{Y_{d=1,z=1}(\u)=1}P(\u) =P(U\neq 1) =0.8.
\end{align}
Under an interventions on $Z$ (out-of-distribution) the decision $d$ that leads to maximum utility $Y$ changes under $\M^1$ and $\M^2$. Specifically, under $\M^1$ decision $d=1$ is favoured (as $P^{\M^1_{d=1,z=1}}(Y=1) > P^{\M^1_{d=0,z=1}}(Y=1)$) while under $\M^2$ decision $d=0$ is favoured (as $P^{\M^2_{d=1,z=1}}(Y=1) < P^{\M^2_{d=0,z=1}}(Y=1)$). This illustrates the possible under-determination of an AI's choice of action out-of-distribution given observations of their external behaviour only, as multiple (contradicting) world models are equally consistent with the observed data.

\begin{table}
\centering
\begin{tabular}{|c c c c c c c c|} 
 \hline
 $U$ & $D_{d=0}$ & $Z_{d=0}$ & $Y_{d=0}$ & $D_{d=1}$ & $Z_{d=1}$ & $Y_{d=1}$ & $P(u)$ \\ [0.5ex] 
 \hline\hline
 1 & 0 & 1 & 0 & 1 & 1 & 1  & 0.2 \\ 
 \hline
 2 & 0 & 0 & 0 & 1 & 0 & 1  & 0.2 \\ 
 \hline
 3 & 0 & 0 & 1 & 1 & 0 & 0  & 0.2 \\ 
 \hline
 4 & 0 & 1 & 1 & 1 & 1 & 1  & 0.2 \\ 
 \hline
 5 & 0 & 0 & 0 & 1 & 0 & 0  & 0.2 \\ 
 \hline
\end{tabular}
\caption{Mapping of events in the space of $\U$ to $\V$ in the context of $\M^1$.}
\label{tab:mapping_m1}
\end{table}

\begin{table}
\centering
\begin{tabular}{|c c c c c c c c|} 
 \hline
 $U$ & $D_{d=0}$ & $Z_{d=0}$ & $Y_{d=0}$ & $D_{d=1}$ & $Z_{d=1}$ & $Y_{d=1}$ & $P(u)$ \\ [0.5ex] 
 \hline\hline
 1 & 0 & 1 & 0 & 1 & 1 & 1  & 0.2 \\ 
 \hline
 2 & 0 & 0 & 0 & 1 & 0 & 1  & 0.2 \\ 
 \hline
 3 & 0 & 0 & 1 & 1 & 0 & 0  & 0.2 \\ 
 \hline
 4 & 0 & 1 & 1 & 1 & 1 & 1  & 0.2 \\ 
 \hline
 5 & 0 & 0 & 0 & 1 & 0 & 0  & 0.2 \\ 
 \hline
\end{tabular}
\caption{Mapping of events in the space of $\U$ to $\V$ in the context of $\M^2$.}
\label{tab:mapping_m2}
\end{table}

In more realistic settings, we might wonder about AI behaviour under arbitrary shifts $\sigma$, not only atomic interventions. We follow \citet{correa2020calculus} to define a shift $\sigma$ on $\Z\subset\V$ in $\M:\langle \V, \U, \1F, P\rangle$ as inducing a sub-model $\M_\sigma$ in which the mechanism for $\Z$, that is $\{f_z :Z\in\Z)\}$ and exogenous variables $\U_Z, Z\in\Z$, are replaced by those specified by $\sigma$ as:
\begin{align}
    \M_\sigma:\langle \V, \U_\sigma, \1F_\sigma, P\rangle, \qquad \U_\sigma = \U \hspace{0.1cm}\union\hspace{0.1cm} \bigcup_{Z\in\Z} \U_{Z,\sigma}, \quad \1F_\sigma = \1F \hspace{0.1cm}\union\hspace{0.1cm} \{f_{Z,\sigma} :Z\in\Z\} \hspace{0.1cm}\backslash\hspace{0.1cm} \{f_Z :Z\in\Z\},
\end{align}
where $\bigcup_{Z\in\Z} \U_{Z,\sigma}$ and $\{f_{Z,\sigma} :Z\in\Z\}$ define the new assignments for $\Z$ (and could be arbitrarily defined as long as they induce a valid SCM). We have shown in \Cref{thm:bound_grounded_unknown_shift} that unless some knowledge of $\sigma$ (beyond the variables it affects) or its consequences are known, the AI is not predictable. Furthermore, the AI's preference gap $\Delta$ for each context $\C=\c$ and pairs of decisions $(d,d^*)$ is unconstrained.

In practice though, it might be realistic to have access to covariate data in the shifted environment, i.e., $P_{\sigma,d}(\c)$, and that we could communicate this information to the AI for it to update its internal model accordingly. \Cref{ex:shifted_medical_ai} illustrates the inference that could be conducted in that case using the Medical AI defined above. In particular, the exact nature of the shift $\sigma$ is unknown but we do have access to its consequences on the distribution of covariates. This is plausible in many scenarios. For example, in medicine demographic data is typically available for most regions on earth but the precise effects of treatments is not because not all populations benefit from the same access to medication. For illustration assume that, the Medical AI is considered to be deployed in a population that varies in its level of blood pressure $Z$, potentially due to a different underlying biological mechanism that in turn also affects other variables in the system. We do know that the baseline high blood pressure is high, given by $P_\sigma(Z=1)=0.9$: higher than that observed during training $P(Z=1)=0.4$.

By \Cref{thm:bound_grounded_partially_known_shift}, we can establish that in this setting the preference gap in situations where $Z=1$ is no worse than,
\begin{align}
    \Delta_{d_1 \succ d_0} &\geq 1 - \{2 - P_{d_1}(Z=1, Y=1)-P_{d_0}(Z=1, Y=0)\} \hspace{0.1cm}/ \hspace{0.1cm}P_{\sigma,d_0}(Z=1) = -0.55,\\
    \Delta_{d_1 \succ d_0} &\geq 1 - \{2 - P_{d_0}(Z=1, Y=0)-P_{d_1}(Z=1, Y=1)\} \hspace{0.1cm}/ \hspace{0.1cm}P_{\sigma,d_0}(Z=1) = -1,
\end{align}
for the Medical AI. Interestingly, note also that if we were to be in a shifted environment with $P_\sigma(Z=1)=1$, which is equivalent to an atomic intervention $Z\gets 1$, the bounds reduce to the ones given by \Cref{thm:bound_grounded_intervention}, evaluating to $-0.4$ and $-0.8$ respectively, as also shown above.

Continuing with the grounded Medical AI deployed under an atomic intervention, imagine that the Medical AI has internalized its own concept of an individual's disease progression $Y^*$, as in \Cref{ex:partial_observability}. It is implicitly optimizing for that internal construction of his, instead of the intended disease bio-marker $Y$ to be optimized. We know, or can assume, that the observed $Y$ is known to be closely correlated with $Y^*$: in particular, that $P_d(Y^*=1 \mid Y=1, Z=z) \geq \alpha$ for some high value of $\alpha$ and all decisions $d$ and situations $z$. In words, whenever the bio-marker suggests health $(Y=1)$, with high probability the AI's interpretation also suggests health $(Y^*=1)$. This then constraints the possible values of $\Delta$ (under an intervention $Z\gets1$) as $P_d(Y^*=1 \mid Z=z)$ is no longer arbitrarily defined. The bounds derived in \Cref{ex:medical_assistant_grounded} on the AI's belief on optimal decisions under an intervention $\sigma:= \{Z\gets z\}$ continue to hold:
\begin{align}
    \label{eq:ex_medical_assistant_grounded_app}
     &\Delta_{d_1 \succ d_0} \geq P_{d_1}(z, y^*)-P_{d_0}(z, y^*)+P_{d_0}(z)-1\\
     &\Delta_{d_0 \succ d_1} \geq P_{d_0}(z, y^*)-P_{d_1}(z, y^*)+P_{d_1}(z)-1,
\end{align}
where we have used the shorthand $P_{d}(z,y^*)=P_{d}(Z=z,Y^*=1)$. But the distributions $\{P_{d}(z, y^*)\}_d$ can only be partially inferred from our assumption on the relationship between $Y^*$ and $Y$. For instance, notice that,
\begin{align}
    P_{d}(Z=z, Y^*=1) &= P_{d}(Y^*=1 \mid Z=z)P_{d}(Z=z)\\
    &= \{P_{d}(Y^*=1 \mid Y=1, Z=z)P_{d}(Y=1 \mid Z=z) \\
    &+ P_{d}(Y^*=1 \mid Y=0, Z=z)P_{d}(Y=0 \mid Z=z)\}P_{d}(Z=z),
\end{align}
The values of $P_{d}(Y^*=1 \mid Y=1, Z=z)$ and $P_{d}(Y^*=1 \mid Y=0, Z=z)$ are partially known: $P_d(Y^*=1 \mid Y=1, Z=z) \geq \alpha$ while $P_d(Y^*=1 \mid Y=0, Z=z)$ is unconstrained. In particular,
\begin{align}
    P_{d}(Z=z, Y^*=1) &\geq \alpha P_{d}(Y=1 \mid Z=z)P_{d}(Z=z)\\
    P_{d}(Z=z, Y^*=1) &\leq P_{d}(Z=z).
\end{align}
Putting these terms into \Cref{eq:ex_medical_assistant_grounded_app} such as to derive correct lower and upper bounds we obtain,
\begin{align}
   &\Delta_{d_1 \succ d_0}\geq \alpha P_{d_1}(Z=z, Y=1)-1\\
   &\Delta_{d_0 \succ d_1} \geq \alpha P_{d_0}(Z=z, Y=1)-1.
\end{align}
Looking at \Cref{tab:mapping_m1,tab:mapping_m2}, we can then conclude that for $\alpha=0.9$ and $\sigma:= \{Z\gets 1\}$, the bound evaluates to $-0.64$ and $-0.82$, respectively.

Moving now onto incorporating assumption on structure in the real world $\M$, consider again the grounded medical AI with observed utility $Y$. One possible inductive bias we might introduce is the absence of an unobserved common cause between the variable $Z$ that shifts out-of-distribution and the utility $Y$. We say that $Z$ and $Y$ is conditionally unconfounded given $W$ if there exists an observed variable $W\in\{w,\tilde w\}, W\in\V$ such that $\3E_{P_{d,z}}[Y \mid w] = \3E_{P_{d}}[y \mid w, z]$. This restriction goes beyond grounding an asserts an equality between probabilities under different shifts that could, nevertheless, be communicated to the AI for it to update its world model $\widehat\M$, that is $\3E_{\widehat P_{d,z}}[Y \mid w] = \3E_{\widehat P_{d,z}}[Y \mid w, z]$.

We could then leverage the following decomposition to obtain tighter bounds,
\begin{align}
    \3E_{\widehat P_{d,z}}[Y] &= \sum_{w} \3E_{\widehat P_{d,z}}[Y \mid w] \widehat P_{d,z}(w) & \text{marginalizing over } W\\
    &= \sum_{w} \3E_{\widehat P_{d}}[Y \mid w,z]P_{d,z}(w) & \text{by assumption} \\
    &= \{\3E_{\widehat P_{d}}[Y \mid w,z] - \3E_{\widehat P_{d}}[Y \mid \tilde w,z]\}\widehat P_{d,z}(w) + \3E_{\widehat P_{d}}[Y \mid \tilde w,z]
\end{align}
We can then proceed to bound $\widehat P_{d,z}(w)$ to obtain,
\begin{align}
    \widehat P_{d}(w, z) \leq \widehat P_{d,z}(w) \leq \widehat P_{d}(w, z) + 1 - \widehat P_{d}(z).
\end{align}
Without loss of generality assume $\{\3E_{\widehat P_{d}}[Y \mid w,z] - \3E_{\widehat P_{d}}[Y \mid \tilde w,z]\}\geq 0$. We could then show that,
\begin{align}
    \3E_{\widehat P_{d,z}}[Y] &\geq \{\3E_{\widehat P_{d}}[Y \mid w,z] - \3E_{\widehat P_{d}}[Y \mid \tilde w,z]\}\widehat P_{d}(w, z) + \3E_{\widehat P_{d}}[Y \mid \tilde w,z]\\
    \3E_{\widehat P_{d,z}}[Y] &\leq \{\3E_{\widehat P_{d}}[Y \mid w,z] - \3E_{\widehat P_{d}}[Y \mid \tilde w,z]\}\{\widehat P_{d}(w, z) + 1 - \widehat P_{d}(z)\} + \3E_{\widehat P_{d}}[Y \mid \tilde w,z].
\end{align}
We could verify that these bounds are superior to what we would have obtained with the assumption of conditional unconfoundedness by noting that,
\begin{align}
    \3E_{\widehat P_{d,z}}[Y] &\geq \{\3E_{\widehat P_{d}}[Y \mid w,z] - \3E_{\widehat P_{d}}[Y \mid \tilde w,z]\}\widehat P_{d}(w, z) + \3E_{\widehat P_{d}}[Y \mid \tilde w,z]\\
    &= \3E_{\widehat P_{d}}[Y \mid w,z]\widehat P_{d}(z,w) + \{1-\widehat P_{d}(w, z)\}\3E_{\widehat P_{d}}[Y \mid \tilde w,z]\\
    &\geq \3E_{\widehat P_{d}}[Y \mid w,z]\widehat P_{d}(z,w) + \widehat P_{d}(\tilde w, z)\3E_{\widehat P_{d}}[Y \mid \tilde w,z] \\
    &=\3E_{\widehat P_{d}}[Y \mid z]\widehat P_{d}(z),
\end{align}
where the last inequality holds since $P_{d}(\tilde w, z) \leq 1-P_{d}(w, z)$ giving the ``assumption-free'' lower bound. This shows that the derived lower bound is better. For the upper bound, note that,
\begin{align}
    \3E_{\widehat P_{d,z}}[Y] &\leq \{\3E_{\widehat P_{d}}[Y \mid w,z] - \3E_{\widehat P_{d}}[Y \mid \tilde w,z]\}\{\widehat P_{d}(w, z) + 1 - \widehat P_{d}(z)\} + \3E_{\widehat P_{d}}[Y \mid \tilde w,z]\\
    &=\3E_{\widehat P_{d}}[Y \mid w,z]\{\widehat P_{d}(w, z) + 1 - \widehat P_{d}(z)\} - \3E_{\widehat P_{d}}[Y \mid \tilde w,z]\{\widehat P_{d}(w, z) + 1 -1 - \widehat P_{d}(z)\}\\
    &=\3E_{\widehat P_{d}}[Y \mid w,z]\widehat P_{d}(z,w) + \3E_{\widehat P_{d}}[Y \mid w,z]\{1-\widehat P_d(z)\} - \3E_{\widehat P_{d}}[Y \mid \tilde w,z]\{\widehat P_{d}(w, z) - \widehat P_{d}(z)\}\\
    &=\3E_{\widehat P_{d}}[Y \mid w,z]\widehat P_{d}(z,w) + \3E_{\widehat P_{d}}[Y \mid w,z]\{1-\widehat P_d(z)\} + \widehat P_{d}(y, z,\tilde w)\\
    &=\3E_{\widehat P_{d}}[Y \mid z]\widehat P_{d}(z) + \3E_{\widehat P_{d}}[Y \mid w,z]\{1-\widehat P_d(z)\}\\
    &\leq \3E_{\widehat P_{d}}[Y \mid z]\widehat P_{d}(z) + 1-\widehat P_d(z),
\end{align}
where the last inequality holds since $\3E_{\widehat P_{d}}[Y \mid w,z]\leq 1$ giving the ``assumption-free'' upper bound. This shows that the derived upper bound is better. By combining these results we obtain, together with the assumption of grounding,
\begin{align}
    \Delta_{d_1 \succ d_0}& \geq
    %&P_{d_1}(y,z,w) + (1-P_{d_1}(z,w ))P_{d_1}(y \mid  z, w') +(P_{d_1}(y \mid  z, w)-P_{d_1}(y \mid  z, w))(1-P_{d_1}(z ))\nonumber\\
    %& - P_{d_0}(y,z,w ) - (1-P_{d_0}(z,w ))P_{d_0}(y \mid  z, w')\nonumber
    \3E_{P_{d_1}}[Y \mid w,z]P_{d}(z,w) + A_1 \3E_{P_{d_1}}[Y \mid \tilde w,z] - \3E_{P_{d_0}}[Y \mid z]P_{d_0}(z) - A_2 \3E_{P_{d_0}}[Y \mid w,z]\\
    \Delta_{d_1 \succ d_0}& \leq
    %&P_{d_1}(y,z,w) + (1-P_{d_1}(z,w ))P_{d_1}(y \mid  z, w') +(P_{d_1}(y \mid  z, w)-P_{d_1}(y \mid  z, w))(1-P_{d_1}(z ))\nonumber\\
    %& - P_{d_0}(y,z,w ) - (1-P_{d_0}(z,w ))P_{d_0}(y \mid  z, w')\nonumber
    \3E_{P_{d_1}}[Y \mid z]P_{d_1}(z) + A_3 \3E_{P_{d_1}}[Y \mid w,z] - \3E_{P_{d_0}}[Y \mid w,z]P_{d}(z,w) - A_4\3E_{P_{d_0}}[Y \mid \tilde w,z],
\end{align}
where $A_1 := 1-P_{d_1}(z,w ), A_2:=1-P_{d_0}(z), A_3:=1-P_{d_1}(z), A_4:=1-P_{d_0}(z,w)$.
\newpage
\section{Related work}

An important consideration to safely interact with AI systems is to form expectations as to how they might act in the future. This research program draws on different areas that are related to the results we present in this paper. 

\subsection{Do current AIs represent the world?} 
World models are important because they offer a path between pattern recognition and a more genuine form of understanding. It is plausible that world models will play an increasing role (explicitly or implicitly) to improve reasoning capabilities and safety. For example, \cite{dalrymple2024towards} lists having a world model as a key component towards designing ``guaranteed safe AI''. In the literature, several works have argued that LLM activations carry information that correlates with meaningful concepts in the world and that causally influence LLM outputs. Early examples come from AIs trained on board games such as Othello and logic games. \cite{li2022emergent} showed that a model trained on natural language descriptions of Othello moves developed internal representations of the board state, which it used to predict valid moves in unseen board configurations. \cite{vafa2024evaluating,gurnee2023language}, among others, also build on this approach to study navigation tasks and logic puzzles, and representations of space and time. The emergence of causal models in LLMs has also been studied by \cite{geiger2021causal} and more recently in \citep{geiger2024finding}. The extent to which this evidence supports genuine folk psychological concepts -- desires, beliefs, intentions -- is also debated by \cite{goldstein2024does}.

\subsection{Causal Inference}
We might wonder whether the behaviour of AIs, to the extent that they carry a world model representation that guides their decisions out-of-distribution, can be predicted before deployment. The causal inference literature studies this question in the context of the prediction of causal effects. \cite{robins1989analysis,manski1990nonparametric} in the early 1990's showed that useful inference about causal effects could be drawn without making identifying assumptions beyond the observed data, and that they could be refined for studies with imperfect compliance under a set of instrumental variable assumptions. Closed-form expressions for bounds on causal effects were also derived in discrete systems with more general assumptions represented in causal diagrams \citep{zhang2020designing,bellot2023towards}, using both observational and interventional data \citep{joshi2024towards}, and to bound the effect of policies \citep{bellot2024towards,zhang2021bounding}. A separate body of work instead proposed to use polynomial optimization to calculate causal bounds from a given causal diagram \citep{balke1997bounds,chickering1996clinician}. This approach involves creating a set of standard models, parameterized by the causal diagram, and then converting the bounding problem into a sequence of equivalent linear (or polynomial) programs \citep{finkelstein2020deriving,zhang2021partial,jalaldoust2024partial}. 

In parallel, a number of works have adopted sensitivity assumptions (as an alternative or in combination with a causal diagram) that quantify the degree of unobserved confounding through various data statistics, such as odds ratios, propensity scores, etc. Prominent examples include \cite{tan2006distributional}'s sensitivity model and \cite{rosenbaum2010design}'s sensitivity model. Several methods have proposed bounds with favourable statistical properties based on these models, see e.g. \cite{jesson2021quantifying,yadlowsky2018bounds}.

\subsection{Reinforcement Learning}
The problem of inferring what objective an agent is pursuing based on the actions and data observed by that agent is studied in Inverse Reinforcement Learning (IRL) \citep{ng2000algorithms}. Several papers have studied the partial identifiability of various reward learning models \citep{skalse2023misspecification,kim2021reward,ng2000algorithms,skalse2023invariance}, and share a similar objective to that of this work. There are two differences that are worth mentioning. First, our work complements these approaches by studying the partial identifiability of world models, that capture the assignment of reward but also the relationship between other auxiliary variables in the environment. This enables us to reason about the effect of shifts and interventions, and give guarantees in specific out-of-distribution problems. Second, our objective is not necessarily to characterize compatible world models explicitly, but rather understand their implications on decision-making, i.e., what are the set of possible actions that an AI might take given our uncertainty about their world model.

Our work is related also to the study of \cite{bengio2024can} that consider deriving (probabilistic) bounds on the probability of harm given data. They similarly argue that multiple theories, in their case transition probabilities from one state to another in a Markov Decision Process (MDP), might explain the dependencies in data to a larger or lesser degree. Each transition model might then be associated with a posterior probability given the data that implies a corresponding posterior probability of harm. Our results, in contrast, are not probabilistic in nature. We provide closed-form bounds that can be interpreted as capturing \textit{all} possible behaviours implied by the data, with probability 1 (and is a possible limitation of our work). The class of world models we consider (i.e., SCMs) is also much more general than transition models in MDPs allowing us to reason about expected AI behaviour under shifts in the environment, out of distribution.

\subsection{Decision Theory}
Inverse reinforcement learning is closely related to the study of revealed preferences in psychology and economics, that similarly aims to infer preferences from behaviour \citep{rothkopf2011preference}. Causal and counterfactual accounts of decision theory are an active area of research, see e.g., \citep{joyce1999foundations}. Recently a representation theorem was shown that explicitly connects rational behaviour with structural causal models \citep{halpern2024subjective}. The authors showed that whenever the set of preferences of an agent over interventions satisfy axioms that relate to the proper interpretation of counterfactuals and rationality we can model behaviour as emerging from an SCM. The same conclusion can also be obtained for agents capable of solving tasks in multiple environments \citep{richens2024robust}, in essence, robustness over multiple environments is equivalent (in the limit) to operating according to a causal model of the environment.

\subsection{Limitations}
The following present the main limitations of our work that will be important to address for developing a more complete understanding of AI behaviour.

In this work, we start from the assumption that past and future behaviour of an AI system is consistent with an underlying world model that can be represented as an SCM. In general, this presupposes a certain rationality and consistency in the AI's outputs that might not be realistic for all systems. Some relaxations are discussed in \Cref{sec:relaxations}.

Structural Causal Models generally suppose the system is acyclic and without feedback, and don't naturally capture systems evolving continuously in time (perhaps better described using differential equations). Our bounds similarly rely on this assumption and may give unreliable inferences if applied to systems in which feedback is important. 

We have stated our guarantees in the infinite sample limit, without quantifying the finite-sample estimation uncertainty. Consequently, we should exercise caution when using the proposed bounds in small sample scenarios where estimators may be inaccurate. Finite-sample properties could be explored similarly to \citep{bengio2024can} by parameterizing the AI's underlying model and making inference on the corresponding latent variable model to get high-probability bounds. An example parameterization of SCMs and probabilistic inference for decision-making across environments is given in \citep{bellot2024transportability,jalaldoust2024partial}. We expect that similar techniques could be applied in our setting.

We do not exploit the verbal behaviour of AI systems. In the context of LLMs, in principle, we might ask the system about its future behaviour explicitly, e.g., ``\textit{Were I to intervene in the environment, what action do you believe is optimal?}''. It might not be obvious, however, that we can trust that what they ``say'' ultimately matches with what they will ``do''.

Decision-making, in practice, involves many considerations that go beyond expected-utility-maximization formalisms. For example, we might train AI systems to be virtuous, e.g., the AI is trained to never pick actions that can be considered harmful (defined according to certain natural language specification) no matter its expected utility. These considerations would change the kind of predictions we could make about the future behaviour of AI systems.
\newpage
\section{Proofs and additional results}
\label{sec:proofs}

This section provides proofs for the statements made in the main body of this work.

Before we start, we recall a few basic results that will be used in the derivation of our proofs.

\begin{definition}[The Axioms of Counterfactuals, Chapter 7.3.1 \cite{pearl2009causality}]
    For any three sets of endogenous variables $\X, \Y,\W$ in a causal model and $\x, \w$ in the domains of $\X$ and $\W$, the following holds:
    \begin{itemize}
        \item Composition: $\W_\x = \w$ implies that $\Y_{\x, \w} = \Y_{\x}$.
        \item Effectiveness: $\X_{\w, \x} = \x$.
        \item Reversibility: $\Y_{\x, \w} = \y$ and $\W_{\x, \y} = \w$ imply that $\Y_\x = \y$.
    \end{itemize}
\end{definition}

\begin{theorem}[Soundness and Completeness of the Axioms Theorems 7.3.3, 7.3.6 \cite{pearl2009causality}]
    The Axioms of counterfactuals are sound and complete for all causal models.
\end{theorem}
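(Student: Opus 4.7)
The plan is to separate the two claims, since soundness and completeness require quite different methods. For soundness, I would verify that each of composition, effectiveness, and reversibility holds in every recursive SCM $\M = \langle \V, \U, \1F, P\rangle$, by direct unfolding of the submodel semantics from \Cref{sec:preliminaries}. Effectiveness is immediate: the intervention $\doo(\x)$ replaces $f_X$ by the constant assignment $\x$, so $\X_{\w,\x}(\u) = \x$ by construction. Composition follows because, whenever $\W_\x(\u) = \w$, the additional intervention $\doo(\w)$ in the submodel $\M_{\x,\w}$ modifies only a function whose value was already $\w$, hence the evaluation of $\Y$ is unaltered. Reversibility is the delicate case and is where I would actually use acyclicity: proceeding by induction along a topological ordering of $\V$, I would show that the two premises $\Y_{\x,\w} = \y$ and $\W_{\x,\y} = \w$ admit only a single joint fixed point of the structural equations, which necessarily coincides with $(\Y_\x(\u), \W_\x(\u))$, forcing $\Y_\x = \y$.

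For completeness, the plan is a canonical-model construction of response-function type. Given any set $\Gamma$ of counterfactual sentences closed under the three axioms, I would build a recursive SCM $\M^\star$ whose counterfactual theory is exactly $\Gamma$. The exogenous space $\U^\star$ is taken to enumerate all joint potential-response assignments $(\Y_\x : \X \subseteq \V,\, \x \in \supp_\X)$ consistent with $\Gamma$, and each structural function $f_V^\star$ is defined to return the response prescribed by the drawn $\u^\star$. The three axioms do exactly the work needed for this construction to succeed: effectiveness tells us how to read off the value of any intervened variable from $\u^\star$, composition enforces consistency between nested interventions so that $f_V^\star$ is single-valued, and reversibility rules out ill-founded joint responses and thereby guarantees that $\M^\star$ is recursive. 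I would close the argument by checking that a counterfactual sentence is satisfied in $\M^\star$ on some $\u^\star$ of positive probability if and only if it is derivable from the axioms together with $\Gamma$.

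The main obstacle I anticipate is the reversibility axiom, on both sides of the argument. On the soundness side it is the only axiom whose derivation genuinely uses acyclicity and that requires a careful joint argument across the interlinked submodels $\M_{\x,\w}$ and $\M_{\x,\y}$ rather than a single unfolding of definitions. On the completeness side it is the axiom that carves the canonical space $\U^\star$ down to those response profiles corresponding to well-defined recursive SCMs, and showing that this cut is neither too coarse (so that $\M^\star$ really is recursive) nor too fine (so that no model in $\Gamma$ is excluded) is the heart of the representation argument. Composition and effectiveness are largely bookkeeping; reversibility is where the structural content of \emph{recursive SCM} is packaged into the axiomatic theory, and getting the canonical construction to respect it is the key step.
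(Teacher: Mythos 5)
The paper does not prove this statement: it is quoted verbatim as background (Theorems 7.3.3 and 7.3.6 of \citet{pearl2009causality}, due originally to Galles--Pearl and Halpern) in the preamble of the proofs appendix, and is never argued for. So there is no in-paper proof to compare against; what can be assessed is whether your sketch matches the standard argument in the cited literature. In broad shape it does --- soundness by unfolding the submodel semantics, completeness by a canonical response-function model whose exogenous states enumerate the joint potential-response profiles consistent with the axioms --- and your treatment of effectiveness and composition is fine.

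The place where your sketch goes wrong is the role you assign to reversibility, on both sides. On the soundness side, reversibility is not the delicate case for the models this paper uses: under the paper's recursiveness assumption it is an easy corollary of composition. By acyclicity one of the two dependencies in the antecedent must be degenerate --- either $\Y$ does not affect $\W$ in $\M_\x$, in which case $\W_{\x,\y}=\W_\x=\w$ and composition gives $\Y_{\x,\w}=\Y_\x=\y$, or $\W$ does not affect $\Y$, in which case $\Y_{\x,\w}=\Y_\x=\y$ directly. No fixed-point induction along a topological order is needed; reversibility only does independent work in \emph{non}-recursive models, which is exactly why Pearl states it separately. On the completeness side, your claim that ``reversibility rules out ill-founded joint responses and thereby guarantees that $\M^\star$ is recursive'' misattributes its function: reversibility is sound even in cyclic models, so it cannot enforce recursiveness of the canonical model. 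Halpern's completeness proof (the one Pearl's Theorem 7.3.6 rests on) requires an additional recursiveness axiom --- the existence of a causal ordering --- precisely because composition, effectiveness, and reversibility together do not pin down the recursive class. Your canonical construction as described would therefore not be guaranteed to produce a recursive $\M^\star$, and the ``heart of the representation argument'' you identify is being asked of the wrong axiom. You should also make explicit the formal language over which completeness is claimed, since the statement is language-relative in Halpern's treatment.
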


The following rules to manipulate experimental distributions produced by policies extend the do-calculus and will be used in the next Lemma. To make sense of these, note that graphically, each SCM $\M$ is associated with a causal diagram $\G$ over $\V$, where $V \rightarrow W$ if $V$ appears as an argument of $f_{W}$ in $\M$, and $V \dashleftarrow\dashrightarrow W$ if $\U_V \cap \U_W \neq \emptyset$, \textit{i.e.} $V$ and $W$ share an unobserved confounder. For a causal diagram $\G$ over $\V$, the $\X$-lower-manipulation of $\1G$ deletes all those edges that are out of variables in $\X$, and otherwise keeps $\1G$ as it is. The resulting graph is denoted as $\G_{\underline{\X}}$. The $\X$-upper-manipulation of $\G$ deletes all those edges that are into variables in $\X$, and otherwise keeps $\1G$ as it is. The resulting graph is denoted as $\1G_{\overline\X}$. We use $\indep_d$ to denote $d$-separation in causal diagrams \citep[Def.~1.2.3]{pearl2009causality}.

\begin{theorem}[Inference Rules $\sigma$-calculus \cite{correa2020calculus}]
    \label{thm:soft_calculus}
    Let $\G$ be a causal diagram compatible with an SCM $\M$, with endogenous variables $\V$. For any disjoint subsets $\X, \Y,\Z \subseteq \V$, two disjoint subsets $\T,\W \subseteq \V \backslash (\Z\union\Y)$ (i.e., possibly including $\X$), the following rules are valid for any intervention strategies $\pi_\X$, $\pi_\Z$, and $\pi_\Z'$ such that $\G_{\pi_\X \pi_\Z}$, $\G_{\pi_\X \pi_\Z'}$ have no cycles:
     \begin{itemize}
         \item Rule 1 (Insertion/Deletion of observations):
         \begin{align*}
             P_{\pi_\X}(\y \mid \w, \t) = P_{\pi_\X}(\y \mid \w) \quad \text{ if } (\T \indep_d \Y \mid \W) \text{ in } \G_{\pi_\X}.
         \end{align*}
         \item Rule 2 (Change of regimes under observation):
         \begin{align*}
             P_{\pi_\X, \pi_\Z}(\y \mid \z, \w) = P_{\pi_\X, \pi_\Z'}(\y \mid \z, \w) \quad \text{ if } (\Y \indep_d \Z \mid \W) \text{ in } \G_{\pi_\X,\pi_\Z,\underline{\Z}} \text{ and } \G_{\pi_\X,\pi_\Z',\underline{\Z}}
         \end{align*}
         \item Rule 3 (Change of regimes without observation):
         \begin{align*}
            P_{\pi_\X, \pi_\Z}(\y \mid \w) = P_{\pi_\X, \pi_\Z'}(\y \mid \w) \quad \text{ if } (\Y \indep_d \Z \mid \W) \text{ in } \G_{\pi_\X,\pi_\Z,\overline{\Z(\W)}} \text{ and } \G_{\pi_\X,\pi_\Z',\overline{\Z(\W)}}
         \end{align*}
     \end{itemize}
     where $\Z(\W)$ is the set of elements in $\Z$ that are not ancestors of $\W$ in $\G_{\pi_\X}$. 
 \end{theorem}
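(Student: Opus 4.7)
My plan is to reduce each rule to a standard soundness-of-d-separation argument, applied to a suitably constructed SCM. For an SCM $\M$ and a policy $\pi_\X$ on $\X$, the interventional distribution $P_{\pi_\X}$ is the distribution induced by a modified SCM $\M_{\pi_\X}$ in which the structural functions for $\X$ are replaced by those specified by $\pi_\X$; the associated causal diagram is precisely $\G_{\pi_\X}$, differing from $\G$ only in the incoming edges to $\X$ (determined by which variables $\pi_\X$ reads as inputs). With this semantic grounding in place, Rule 1 follows almost immediately: since $P_{\pi_\X}$ is induced by an SCM whose diagram is $\G_{\pi_\X}$, any d-separation $(\T \indep_d \Y \mid \W)$ in $\G_{\pi_\X}$ yields the claimed conditional independence in $P_{\pi_\X}$ by the standard soundness theorem for SCMs.

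Rules 2 and 3 are more delicate because they compare $P_{\pi_\X, \pi_\Z}$ with $P_{\pi_\X, \pi_\Z'}$, i.e., distributions generated by distinct SCMs. The strategy I would take is to embed both policies jointly by introducing an auxiliary regime indicator $R$ (a latent switch whose value selects between $\pi_\Z$ and $\pi_\Z'$): the desired invariance then becomes a conditional independence of $\Y$ from $R$ in this enlarged SCM, given the conditioning set of the rule, which can be read off the manipulated graph. For Rule 2, because we condition on $\Z = \z$, any effect of $\pi_\Z$ flowing through the outgoing edges of $\Z$ is blocked by conditioning; the appropriate test graph is $\G_{\pi_\X, \pi_\Z, \underline{\Z}}$, and the stated hypothesis then d-separates $R$ from $\Y$ given $\{\Z, \W\}$. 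For Rule 3 we marginalize over $\Z$ instead, which motivates the $\overline{\Z(\W)}$ manipulation that severs incoming edges only to those members of $\Z$ that cannot feed into $\W$.

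The main obstacle I anticipate is justifying the asymmetric graph manipulation in Rule 3. A factorization argument seems necessary: for variables in $\Z$ that are non-ancestors of $\W$, marginalization decouples their factors from their parents, so the incoming edges can be deleted without changing the marginal of interest; for variables in $\Z$ that are ancestors of $\W$, however, their mechanisms continue to shape the distribution of $\W$ and their incoming edges must be retained. The technical crux is then to show that, in this asymmetrically manipulated graph, no d-connecting path between the regime indicator $R$ and $\Y$ given $\W$ survives the hypothesis $(\Y \indep_d \Z \mid \W)$, so that policy-invariance of $P(\y \mid \w)$ indeed holds. I expect this step to require careful bookkeeping of which edges are preserved and which paths are opened or blocked by conditioning on $\W$, together with an inductive argument over the topological order of $\Z$ to justify the selective deletion of incoming edges.
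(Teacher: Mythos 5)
This statement is not proved in the paper at all: it is background material imported verbatim from \citet{correa2020calculus} and used only as a black box in the proof of \Cref{lemma:policy}. So there is no in-paper argument to compare against; the relevant question is whether your sketch would stand on its own as a proof of the cited result. Your overall strategy is a legitimate and standard one --- it is essentially the augmented-graph route used for Pearl's do-calculus, where a regime indicator $R$ with $R \rightarrow Z$ for each $Z \in \Z$ turns policy-invariance into the conditional independence $\Y \indep R \mid \cdot$, which is then certified by d-separation in the augmented diagram. Rule 1 as you present it is essentially complete, since $P_{\pi_\X}$ really is the distribution of an SCM whose diagram is $\G_{\pi_\X}$ and the global Markov property does the rest.

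The gap is that the hard half of the argument is named but not carried out. For Rules 2 and 3 the entire content of the theorem is the implication from the stated d-separation of $\Y$ and $\Z$ in the \emph{manipulated} graphs ($\G_{\pi_\X,\pi_\Z,\underline{\Z}}$, resp.\ $\G_{\pi_\X,\pi_\Z,\overline{\Z(\W)}}$) to d-separation of the regime node $R$ from $\Y$ in the \emph{augmented} graph; you state that you ``expect this step to require careful bookkeeping'' and an induction over the topological order of $\Z$, which is precisely the part that cannot be waved at --- in the do-calculus literature this is the step that occupies the proof (the ancestor/non-ancestor case split for $\Z(\W)$, and the argument that a d-connecting path from $R$ through some $Z\in\Z$ would yield a d-connecting path from $Z$ to $\Y$ in the manipulated graph). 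Two further points need tightening: for Rule 2, ``conditioning on $\Z=\z$ blocks the effect of $\pi_\Z$ through outgoing edges'' is not quite the right mechanism --- the issue is that the two policies may read different parent sets and hence induce different dependence between $\Z$ and its ancestors, which is why the hypothesis must hold in \emph{both} lower-manipulated graphs $\G_{\pi_\X,\pi_\Z,\underline{\Z}}$ and $\G_{\pi_\X,\pi_\Z',\underline{\Z}}$, a requirement your sketch does not engage with; and your factorization claim for Rule 3 (that marginalization ``decouples'' the factors of non-ancestors of $\W$ from their parents) needs to also account for $\Y$, not just $\W$, since a member of $\Z$ that is a non-ancestor of $\W$ may still be an ancestor of $\Y$. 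As it stands the proposal is a correct plan with the decisive lemma left unproved.
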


\begin{lemma}
    \label{lemma:policy}
    Let $\pi:\supp_\C \times \supp_D \mapsto [0,1]$ be a (probabilistic) policy mapping contexts $\c$ to decisions $d$. Then $P_d(\V)$ may be computed from $P_\pi(\V)$.
\end{lemma}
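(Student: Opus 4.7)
The plan is to exploit the observation that the policy regime $P_\pi$ and the atomic interventional regime $P_d$ differ only in how the decision variable $D$ is generated: under the policy, $D$ is drawn from $\pi(d\mid\c)$, whereas under $do(d)$ it is set to the constant $d$. Once we condition on $D=d$ and $\C=\c$, the mechanisms that determine the remaining variables $\V\setminus\{D\}$ are identical in both regimes, so $P_d(\v)$ can be reconstructed from $P_\pi(\v)$ by a pointwise re-weighting at a single evaluation.

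Three observations would carry the argument. First, the policy is itself identifiable from $P_\pi$, because under the policy regime the conditional $P_\pi(d\mid\c)$ coincides with $\pi(d\mid\c)$ by definition of the regime. Second, since $\C$ collects the (non-descendant) inputs of $D$, the marginal on the inputs satisfies $P_\pi(\c)=P(\c)=P_d(\c)$. Third, and this is the key step, I would invoke Rule 2 of the $\sigma$-calculus (Theorem~\ref{thm:soft_calculus}) with $\Y=\V\setminus\{D\}$, $\Z=\{D\}$, and $\W=\C$, comparing the regimes $\pi$ and $do(d)$, to obtain
\[
P_\pi(\V\setminus\{D\}\mid D=d,\C=\c)\,=\,P_d(\V\setminus\{D\}\mid\C=\c).
\]
Combining the three observations yields
\[
P_d(\v)\,=\,\I_{D=d}\,P_d(\c)\,P_d(\V\setminus\{D\}\mid\c)\,=\,\I_{D=d}\,\frac{P_\pi(\v)}{P_\pi(D=d\mid\c)},
\]
which expresses $P_d(\v)$ entirely in terms of $P_\pi$. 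The positivity assumption $P_\pi(\v)>0$ enters precisely at this point: it guarantees $P_\pi(D=d\mid\c)>0$ for every $(\c,d)$, so the quotient is well defined.

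The main obstacle will be verifying the $\sigma$-calculus step cleanly: the required d-separation $(\V\setminus\{D\}\indep_d D\mid\C)$ must hold in the appropriate lower-manipulated diagrams associated with both $\pi$ and $do(d)$. This holds because every path connecting $D$ to a variable in $\V\setminus\{D\}$ is either blocked by conditioning on $D$ itself, enters $D$ through $\C$ (already in the conditioning set), or propagates through exogenous latents whose joint distribution is left unchanged when one mechanism for $D$ is swapped for another. If one prefers to avoid graphical reasoning, an algebraic route works just as well: marginalising the latents $\U$, both regimes share a common kernel $Q$ for $\V\setminus\{D\}$ given $(D,\C)$, yielding $P_\pi(\v)=P(\c)\,\pi(d\mid\c)\,Q$ and $P_d(\v)=P(\c)\,\I_{D=d}\,Q$, from which the re-weighting formula drops out by direct division.
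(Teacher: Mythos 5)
Your proposal is correct and follows essentially the same route as the paper: both factor $P_d(\v)$ into the context marginal and the conditional of the remaining variables, justify the invariance $P_d(\c)=P_\pi(\c)$ via the non-descendant/Rule-3 argument, and apply Rule 2 of the $\sigma$-calculus to equate $P_d(\y\mid\c)$ with $P_\pi(\y\mid d,\c)$, which is exactly your re-weighting formula $P_d(\v)=\I_{D=d}\,P_\pi(\v)/P_\pi(d\mid\c)$. The only nitpick is that in your Rule 2 invocation $\Y$ and $\W$ must be disjoint, so you should take $\Y=\V\setminus(\{D\}\cup\C)$ rather than $\V\setminus\{D\}$; this is cosmetic and does not affect the argument.
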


\begin{proof}
    Let $\V=\C\union D\union \Y$ and $\G$ be an arbitrary causal diagram summarizing the SCM of the environment. The following derivation shows the claim,
    \begin{align}
        P_d(\v) &= P_d(\y \mid \c)P_d(\c) &\text{ by the rules of total probability}\\
        &= P_d(\y \mid \c)P_\pi(\c) &\text{ by rule 3 of the $\sigma$-calculus since $D \indep \C$ in $\G_{\overline{D}}$ and $\G_{\pi,\overline{D}}$}\\
        &= P_\pi(\y \mid d, \c)P_\pi(\c) &\text{ by rule 2 of the $\sigma$-calculus since $D \indep \R \mid \C$ in $\G_{\pi,\underline{D}}$}
    \end{align}
    That is we have shown $P_d(\v)$ can be expressed as a functional of $P_\pi(\v)$. Here note that the equalities hold in any causal graph $\G$ by definition of $\pi$.
\end{proof}

We start by providing proofs for the results on the AI's choice of action out-of-distribution given in \Cref{sec:actions_out_of_distribution}.

\textbf{\Cref{thm:bound_grounded_intervention} restated.} An AI grounded in a domain $\M$ is weakly predictable under a shift $\sigma:= do(\z), \Z\subset\V,$ in a context $\C=\c$ if and only if there exists a decision $d^*$ such that,
\begin{align}
    \frac{\3E_{P_{d}}[\hspace{0.1cm}Y \mid \c, \z\hspace{0.1cm}]P_{d}(\c, \z)}{P_{d}(\c, \z) + 1 - P_{d}(\z)}  
    -\frac{\3E_{P_{d^*}}[\hspace{0.1cm}Y \mid \c, \z\hspace{0.1cm}]P_{d^*}(\c, \z) + 1 - P_{d^*}(\z) }{P_{d^*}(\c, \z) + 1 - P_{d^*}(\z)  }> 0, \quad \text{for some } d\neq d^*.
\end{align}

\begin{proof}
    Recall that the AI is weakly predictable in a context $\C=\c$ if and only if there exists a decision $d^*$ such that,
    \begin{align}
        \min_{\widehat\M\in\3M} \left( \hspace{0.1cm}\Delta_{d \succ d^*} \hspace{0.1cm}\right)> 0, \quad \Delta_{d \succ d^*}:= \3E_{P^{\widehat\M}}\left[\hspace{0.1cm}Y\mid do(\sigma,d), \c \right] - \3E_{P^{\widehat\M}}\left[\hspace{0.1cm}Y\mid do(\sigma,d^*), \c \right], \quad \text{for some } d\neq d^*.
    \end{align}
    $\3M$ denotes the set of compatible SCMs, i.e., that generate the data under our assumptions. $\Delta$ is the AI's preference gap between two decisions in some situation $\C=\c$. We will consider the derivation of bounds on each term of the difference in $\Delta$ separately. Firstly, note that,
    \begin{align}
        \3E_{\widehat P_{\sigma,d}}\left[\hspace{0.1cm}Y \mid \C=\c\hspace{0.1cm}\right] = \3E_{\widehat P_{\z,d}}[\hspace{0.1cm}Y\I_\c(\C)\hspace{0.1cm}] \hspace{0.1cm} / \hspace{0.1cm} \widehat P_{\z,d}(\c)
    \end{align}
    \paragraph{Analytical Lower Bound} A lower bound on this ratio can be obtained by minimizing the numerator and maximizing the denominator, for example using the following derivation:
    \begin{align}
       \3E_{\widehat P_{\z,d}}[\hspace{0.1cm}Y\I_\c(\C)\hspace{0.1cm}] 
       &= \sum_{\tilde \z} \3E_{\widehat P_{d}}[\hspace{0.1cm}Y_\z\I_{\c,\tilde \z}(\C_\z, \Z) \hspace{0.1cm}] &\text{marginalizing over } \z\\
       &\geq \3E_{\widehat P_{d}}[\hspace{0.1cm}Y_\z\I_{\c,\z}(\C_\z, \Z)\hspace{0.1cm}] &\text{since summands }>0\\
       &= \3E_{\widehat P_{d}}[\hspace{0.1cm}Y\I_{\c,\z}(\C,\Z) \hspace{0.1cm}] &\text{by consistency}\\
       &= \3E_{P_{d}}[\hspace{0.1cm}Y\mid \c, \z \hspace{0.1cm}]P_{d}(\c,\z) &\text{by grounding}\\\\
       \widehat P_{\z,d}(\c) &\stackrel{(1)}{=} 1 - \widehat P_{\z,d}(\c')\\
       &=1 - \sum_{\tilde \z}\widehat P_d(\c'_{\z}, \tilde \z) &\text{marginalizing over } \z\\
       &\leq 1 - \widehat P_{d}(\c'_\z,\z) &\text{since summands }>0\\
       &=\widehat P_{d}(\c,\z) + 1 - \widehat P_{d}(\z) &\text{by consistency}\\
       &=P_{d}(\c,\z) + 1 - \widehat P_{d}(\z)&\text{by grounding.}
    \end{align}
    (1) holds by defining $\c'$ to stand for any combination of variables $\C\backslash \Z$ other than $\c\backslash \z$.
    
    This implies then that,
    \begin{align}
        \3E_{\widehat P_{\sigma,d}}\left[\hspace{0.1cm}Y \mid \C=\c\hspace{0.1cm}\right] \geq \frac{\3E_{P_{d}}[\hspace{0.1cm}Y\mid \c, \z \hspace{0.1cm}]P_{d}(\c,\z)}{P_{d}(\c,\z) + 1 - P_{d}(\z)}.
    \end{align}
    \paragraph{Analytical Upper Bound} For the upper bound, we start by noting that,
    \begin{align}
        \3E_{\widehat P_{\sigma,d}}\left[\hspace{0.1cm}Y \mid \C=\c\hspace{0.1cm}\right] &= 1 - \3E_{\widehat P_{\sigma,d}}\left[\hspace{0.1cm}1-Y \mid \C=\c\hspace{0.1cm}\right] \\
        &= 1-\3E_{\widehat P_{\z,d}}[\hspace{0.1cm}(1-Y)\I_\c(\C)\hspace{0.1cm}] \hspace{0.1cm} / \hspace{0.1cm} \widehat P_{\z,d}(\c)
    \end{align}
    Leveraging the bounds derived above we obtain,
    \begin{align}
        \3E_{\widehat P_{\sigma,d}}\left[\hspace{0.1cm}Y \mid \C=\c\hspace{0.1cm}\right] &\leq 1 -  \frac{\3E_{ P_{d}}[\hspace{0.1cm}(1-Y) \I_{\c,\z}(\C, \Z)\hspace{0.1cm}]}{P_{d}(\c,\z) + P_{d}(\z')}\\
        &=\frac{\3E_{P_{d}}[\hspace{0.1cm}Y\mid \c, \z \hspace{0.1cm}]P_{d}(\c,\z) + 1 - P_{d}(\z) }{P_{d}(\c,\z) + 1 - P_{d}(\z)}
    \end{align}
    By setting $d=d_1$ in the lower bound and $d=d_0$ in the upper bound of the expected utility, we obtain a lower bound on the difference of expected utilities:
    \begin{align}
       \Delta_{d_1 \succ d_0} \geq \frac{\3E_{P_{d_1}}[\hspace{0.1cm}Y \mid \c, \z\hspace{0.1cm}]P_{d_1}(\c, \z)}{P_{d_1}(\c, \z) + 1 - P_{d_1}(\z)}  - 
        \frac{\3E_{P_{d_0}}[\hspace{0.1cm}Y \mid \c, \z\hspace{0.1cm}]P_{d_0}(\c, \z) + P_{d_0}(\z') }{P_{d_0}(\c, \z) + 1 - P_{d_0}(\z)  }.
    \end{align}
    And similarly, by setting $d=d_1$ in the upper bound and $d=d_0$ in the lower bound of the expected utility, we obtain an upper bound on the difference of expected utilities:
    \begin{align}
        \Delta_{d_1 \succ d_0} \leq \frac{\3E_{P_{d_1}}[\hspace{0.1cm}Y \mid \c, \z\hspace{0.1cm}]P_{d_1}(\c, \z) + 1 - P_{d_1}(\z) }{P_{d_1}(\c, \z) + 1 - P_{d_1}(\z)  } - 
        \frac{\3E_{P_{d_0}}[\hspace{0.1cm}Y \mid \c, \z\hspace{0.1cm}]P_{d_0}(\c, \z)}{P_{d_0}(\c, \z) + 1 - P_{d_0}(\z)}.
    \end{align}
    
    We now show that these bounds are tight by constructing SCMs (that is, possible world models of the AI system) that evaluate to the lower and upper bounds while generating the distribution of agent interactions $\widehat P_{d_1},\widehat P_{d_0}$.
    
    \paragraph{Tightness Lower Bound for $\Delta$} For the lower bound we will consider the following SCM,
    \begin{align}
        \1M^1_d =: \begin{cases}
        \Z \gets f_\Z(\u)
        \\
        \C \gets \begin{cases}
        f_C(u, \z) \text{ if } f_\Z(\u) = \z\\
        1 \text{ otherwise. }
        \end{cases}
        \\
        D \gets d
        \\
        Y \gets
        \begin{cases}
        f_Y(d,\c,\z,\u) \text{ if } f_\Z(\u) = \z\\
        1 \text{ if } f_\Z(\u) \neq \z, d=d_0\\
        0 \text{ if } f_\Z(\u) \neq \z, d=d_1
        \end{cases}
        \\
        P(\U)
        \end{cases}
    \end{align}
    Here $\{f_\Z,f_\C,f_Y, \1U, P(\U)\}$ are chosen to match the observed trajectory of agent interactions, i.e., such that $P^{\1M^1_d}(\v) = P^{\widehat \M_d}(\v)$ for all $\v\in\supp_\V$.
    Consider evaluating,
    \begin{align}
        \3E_{P^{\M^1_{\sigma,d}}}\left[\hspace{0.1cm}Y \mid \C=\c\hspace{0.1cm}\right] = \3E_{P^{\M^1_{\sigma,d}}}[\hspace{0.1cm}Y\I_\c(\C)\hspace{0.1cm}] \hspace{0.1cm} / \hspace{0.1cm} P^{\M^1_{\sigma,d}}(\c)
    \end{align}
    The numerator (under $\1M^1_{d_1}$) evaluates to,
    \begin{align}
        \3E_{P^{\1M^1_{d_1}}}[\hspace{0.1cm}Y_{\z}&\I_\c(\C_{\z})\hspace{0.1cm}] \\
        &= \sum_{\u} \3E_{P^{\1M^1_{d_1}}}[\hspace{0.1cm}Y_{\z}\I_\c(\C_{\z}) \mid \u\hspace{0.1cm}]P^{\1M^1_{d_1}}(\u)\\
        &= \sum_{\u} \3E_{P^{\1M^1_{d_1}}}[\hspace{0.1cm}Y\I_\c(\C) \mid \z, \u\hspace{0.1cm}]P^{\1M^1_{d_1}}(\u)\\
        &= \3E_{P^{\1M^1_{d_1}}}[\hspace{0.1cm}Y\I_\c(\C) \mid \z, \{\u: f_\Z(\u) = \z\}\hspace{0.1cm}]P^{\1M^1_{d_1}}(\{\u: f_\Z(\u) = \z\}) \\
        &+ \3E_{P^{\1M^1_{d_1}}}[\hspace{0.1cm}Y\I_\c(\C) \mid \z, \{\u: f_\Z(\u) \neq \z\}\hspace{0.1cm}]P^{\1M^1_{d_1}}(\{\u: f_\Z(\u) \neq \z\})\\
        &= \3E_{P^{\1M^1_{d_1}}}[\hspace{0.1cm}Y\I_\c(\C) \mid \z \hspace{0.1cm}]P^{\1M^1_{d_1}}(\z)\\
        &= \3E_{P^{\1M^1_{d_1}}}[\hspace{0.1cm}Y \mid \c,\z \hspace{0.1cm}]P^{\1M^1_{d_1}}(\c,\z)
    \end{align}
    The denominator under $\1M^1_{d_1}$ evaluates to,
    \begin{align}
        P^{\M^1_{\sigma,d_1}}(\c) 
        &= \sum_{\u} P^{\M^1_{d_1}}(\c_\z \mid \u)P^{\1M^1_{d_1}}(\u)\\
        &= \sum_{\u} P^{\M^1_{d_1}}(\c \mid \z, \u)P^{\1M^1_{d_1}}(\u)\\
        &= P^{\M^1_{d_1}}(\c \mid \z, \{\u: f_\Z(\u) = \z\})P^{\1M^1_{d_1}}(\{\u: f_\Z(\u) = \z\}) \\
        &+ P^{\M^1_{d_1}}(\c \mid \z, \{\u: f_\Z(\u) \neq \z\})P^{\1M^1_{d_1}}(\{\u: f_\Z(\u) \neq \z\})\\
        &= P^{\M^1_{d_1}}(\c \mid \z)P^{\1M^1_{d_1}}(\z) + 1-P^{\1M^1_{d_1}}(\z)\\
        &= P^{\M^1_{d_1}}(\c, \z) +1- P^{\1M^1_{d_1}}(\z)
    \end{align}
    
    The numerator under $\1M^1_{d_0}$ evaluates to,
    \begin{align}
        \3E_{P^{\1M^1_{d_0}}}[\hspace{0.1cm}Y_{\z}&\I_\c(\C_{\z})\hspace{0.1cm}] \\
        &= \sum_{\u} \3E_{P^{\1M^1_{d_0}}}[\hspace{0.1cm}Y_{\z}\I_\c(\C_{\z}) \mid \u\hspace{0.1cm}]P^{\1M^1_{d_0}}(\u)\\
        &= \sum_{\u} \3E_{P^{\1M^1_{d_0}}}[\hspace{0.1cm}Y\I_\c(\C) \mid \z, \u\hspace{0.1cm}]P^{\1M^1_{d_0}}(\u)\\
        &= \3E_{P^{\1M^1_{d_0}}}[\hspace{0.1cm}Y\I_\c(\C) \mid \z, \{\u: f_\Z(\u) = \z\}\hspace{0.1cm}]P^{\1M^1_{d_0}}(\{\u: f_\Z(\u) = \z\}) \\
        &+ \3E_{P^{\1M^1_{d_0}}}[\hspace{0.1cm}Y\I_\c(\C) \mid \z, \{\u: f_\Z(\u) \neq \z\}\hspace{0.1cm}]P^{\1M^1_{d_0}}(\{\u: f_\Z(\u) \neq \z\})\\
        &= \3E_{P^{\1M^1_{d_0}}}[\hspace{0.1cm}Y\I_\c(\C) \mid \z \hspace{0.1cm}]P^{\1M^1_{d_0}}(\z) + 1 - P^{\1M^1_{d_0}}(\z)\\
        &= \3E_{P^{\1M^1_{d_0}}}[\hspace{0.1cm}Y \mid \c, \z \hspace{0.1cm}]P^{\1M^1_{d_0}}(\c,\z) + 1 - P^{\1M^1_{d_0}}(\z)
    \end{align}
    The denominator under $\1M^1_{d_0}$ evaluates to,
    \begin{align}
        P^{\M^1_{\sigma,d_0}}(\c) 
        &= \sum_{\u} P^{\M^1_{d_0}}(\c_\z \mid \u)P^{\1M^1_{d_0}}(\u)\\
        &= \sum_{\u} P^{\M^1_{d_0}}(\c \mid \z, \u)P^{\1M^1_{d_0}}(\u)\\
        &= P^{\M^1_{d_0}}(\c \mid \z, \{\u: f_\Z(\u) = \z\})P^{\1M^1_{d_0}}(\{\u: f_\Z(\u) = \z\}) \\
        &+ P^{\M^1_{d_0}}(\c \mid \z, \{\u: f_\Z(\u) \neq \z\})P^{\1M^1_{d_0}}(\{\u: f_\Z(\u) \neq \z\})\\
        &= P^{\M^1_{d_0}}(\c \mid \z)P^{\1M^1_{d_0}}(\z) + 1 - P^{\1M^1_{d_0}}(\z)\\
        &= P^{\M^1_{d_0}}(\c, \z) + 1 - P^{\1M^1_{d_0}}(\z)
    \end{align}
    Combining these results we get the analytical lower bound: 
    \begin{align}
       \Delta_{d_1 \succ d_0}=\frac{\3E_{P_{d_1}}[\hspace{0.1cm}Y \mid \c, \z\hspace{0.1cm}]P_{d_1}(\c, \z)}{P_{d_1}(\c, \z) + 1 - P_{d_1}(\z)}  - 
        \frac{\3E_{P_{d_0}}[\hspace{0.1cm}Y \mid \c, \z\hspace{0.1cm}]P_{d_0}(\c, \z) + 1 - P_{d_0}(\z) }{P_{d_0}(\c, \z) + 1 - P_{d_0}(\z)  }.
    \end{align}
    This shows that for a given $\C=\c$ and pair of decisions $(d_1,d_0)$ we can always find an SCM that evaluates to the lower bound that we report. So if, and only if, we can find a decision $d^*$ such that the lower bound can be evaluated to be greater than zero for some $d\neq d^*$ will the AI be weakly predictable, as claimed. 
\end{proof}

\textbf{\Cref{cor:bound_approximately_grounded_intervention} restated.} Given a discrepancy measure $\psi$, an AI approximately grounded in a domain $\M$ is weakly predictable in a context $\C=\c$ under a shift $\sigma:= do(\z), \Z\subset\V,$ if and only if there exists a decision $d^*$ such that,
\begin{align}
    \min_{\widehat P: \hspace{0.1cm} \psi(\widehat P, P)\leq \delta} \left\{\frac{\3E_{\widehat P_{d}}[\hspace{0.1cm}Y \mid \c, \z\hspace{0.1cm}]\widehat P_{d}(\c, \z)}{\widehat P_{d}(\c, \z) + 1 - \widehat P_{d}(\z)}  
    -\frac{\3E_{\widehat P_{d^*}}[\hspace{0.1cm}Y \mid \c, \z\hspace{0.1cm}]\widehat P_{d^*}(\c, \z) + 1 - \widehat P_{d^*}(\z) }{\widehat P_{d^*}(\c, \z) + 1 - \widehat P_{d^*}(\z)}\right \}> 0, \quad \text{for some } d\neq d^*.
\end{align}

\begin{proof}
    For approximately grounded AI systems, we can state the bound from \Cref{thm:bound_grounded_intervention} as,
    \begin{align}
        \min_{\widehat\M\in\3M} \left( \hspace{0.1cm}\Delta_{d \succ d^*} \hspace{0.1cm}\right)=  \frac{\3E_{\widehat P_{d}}[\hspace{0.1cm}Y \mid \c, \z\hspace{0.1cm}]\widehat P_{d}(\c, \z)}{\widehat P_{d}(\c, \z) + 1 - \widehat P_{d}(\z)}  
        -\frac{\3E_{\widehat P_{d^*}}[\hspace{0.1cm}Y \mid \c, \z\hspace{0.1cm}]\widehat P_{d^*}(\c, \z) + 1 - \widehat P_{d^*}(\z) }{\widehat P_{d^*}(\c, \z) + 1 - \widehat P_{d^*}(\z)  }.
    \end{align}
    $\widehat P_{d}$ is constrained to be close to $P_{d}$ according to distance $\psi$ and threshold $\delta$. We get valid bounds by reporting the worst-case bounds under this looser constraint:
    \begin{align}
        \min_{\widehat\M\in\3M} \left( \hspace{0.1cm}\Delta_{d \succ d^*} \hspace{0.1cm}\right)= \min_{\widehat P: \hspace{0.1cm} \psi(\widehat P, P)\leq \delta} \left\{\frac{\3E_{\widehat P_{d}}[\hspace{0.1cm}Y \mid \c, \z\hspace{0.1cm}]\widehat P_{d}(\c, \z)}{P_{d}(\c, \z) + 1 - P_{d}(\z)}  
        -\frac{\3E_{\widehat P_{d^*}}[\hspace{0.1cm}Y \mid \c, \z\hspace{0.1cm}]\widehat P_{d^*}(\c, \z) + 1 - \widehat P_{d^*}(\z) }{\widehat P_{d^*}(\c, \z) + 1 - \widehat P_{d^*}(\z)  } \right\}.
    \end{align}
    This shows that for a given $\C=\c$, the  $\min_{\widehat\M\in\3M}, \left( \hspace{0.1cm}\Delta_{d \succ d^*} \hspace{0.1cm}\right) > 0$ for some $d\neq d^*$ if and only if,
    \begin{align}
        \min_{\widehat P: \hspace{0.1cm} \psi(\widehat P, P)\leq \delta} \left\{\frac{\3E_{\widehat P_{d}}[\hspace{0.1cm}Y \mid \c, \z\hspace{0.1cm}]\widehat P_{d}(\c, \z)}{P_{d}(\c, \z) + 1 - P_{d}(\z)}  
        -\frac{\3E_{\widehat P_{d^*}}[\hspace{0.1cm}Y \mid \c, \z\hspace{0.1cm}]\widehat P_{d^*}(\c, \z) + 1 - \widehat P_{d^*}(\z) }{\widehat P_{d^*}(\c, \z) + 1 - \widehat P_{d^*}(\z)  } \right\} > 0.
    \end{align}
\end{proof}

\textbf{\Cref{thm:bound_grounded_intervention_multiple_domains} restated}. Let $\sigma:= do(\z)$ be a shift on a set of variables $\Z\subset\V$. For $\R_i \subset \Z\subset\V, i=1,\dots,k$, consider an AI grounded in multiple domains $\{\M_{\r_i}:i=1,\dots,k\}$. The AI is weakly predictable in a context $\C=\c$ under a shift $\sigma:= do(\z)$ if and only if there exists a decision $d^*$ such that,
    \begin{align}
        \max_{i, j =1,\dots,k} A(\r_i, \r_j) > 0,\quad \text{for some } d\neq d^*,
    \end{align}
    where
    \begin{align*}
        A(\r_i, \r_j) := \frac{\3E_{P_{d,\r_i}}[\hspace{0.1cm}Y \mid \c, \z \backslash \r_i\hspace{0.1cm}]P_{d,\r_i}(\c, \z\backslash \r_i)}{P_{d,\r_i}(\c, \z\backslash \r_i) + 1 - P_{d,\r_i}(\z\backslash \r_i)}  
        -\frac{\3E_{P_{d^*,\r_j}}[\hspace{0.1cm}Y \mid \c, \z\backslash \r_j\hspace{0.1cm}]P_{d^*,\r_j}(\c, \z\backslash \r_j) + 1 - P_{d^*,\r_j}(\z\backslash \r_j) }{P_{d^*,\r_j}(\c, \z\backslash \r_j) + 1 - P_{d^*,\r_j}(\z\backslash \r_j)  }.
    \end{align*}

\begin{proof}
    $\{\M_{\r_i}:i=1,\dots,k\}$ describes $k$ domains in which experiments on different subsets of $\Z$ have been conducted. This includes possibly the null experiment $\R_i=\emptyset$ that refers to the unaltered domain $\M$.
    
    We can use a similar derivation to that of \Cref{thm:bound_grounded_intervention} to derive bounds on $\Delta$ under a shift $\sigma:= do(\z)$ in terms of $P_{d,\r}(\V), \R\in\V$ and obtain,
    \begin{align}
        \Delta_{d_1 \succ d_0} \geq A(\r)
    \end{align}
    where,
    \begin{align}
        A(\r) := \frac{\3E_{P_{d_1,\r}}[\hspace{0.1cm}Y \mid \c, \z \backslash \r\hspace{0.1cm}]P_{d_1,\r}(\c, \z\backslash \r)}{P_{d_1,\r}(\c, \z\backslash \r) + 1 - P_{d_1,\r}(\z\backslash \r)}  
        -\frac{\3E_{P_{d_0,\r}}[\hspace{0.1cm}Y \mid \c, \z\backslash \r\hspace{0.1cm}]P_{d_0,\r}(\c, \z\backslash \r) + 1 - P_{d_0,\r}(\z\backslash \r) }{P_{d_0,\r}(\c, \z\backslash \r) + 1 - P_{d_0,\r}(\z\backslash \r)  }.
    \end{align}
    These bounds can be shown to be tight by constructing similar SCMs. For example, for the analytical lower bound consider,
    \begin{align}
        \1M^1_{d,\r} =: \begin{cases}
        \S \gets f_\S(\u)
        \\
        \R \gets \r
        \\
        \C \gets \begin{cases}
        f_\C(\u, \s, \r) \text{ if } f_\S(\u) = \s\\
        1 \text{ otherwise. }
        \end{cases}
        \\
        D \gets d
        \\
        Y \gets
        \begin{cases}
        f_Y(d,\c,\s,\r,\u) \text{ if } f_\S(\u) = \s\\
        1 \text{ if } f_\S(\u) \neq \s, d=d_0\\
        0 \text{ if } f_\S(\u) \neq \s, d=d_1
        \end{cases}
        \\
        P(\U)
        \end{cases}
    \end{align}
    where $\S = \Z\backslash\R$. Here $\{f_\Z,f_\C,f_Y, \1U, P(\U)\}$ are chosen to match the observed trajectory of agent interactions, i.e., such that $P^{\1M^1_{d,\r}}(\v) = P^{\widehat \M_{d,\r}}(\v)$ for all $\v\in\supp_\V$. We could verify that this SCM evaluates to the lower bound above.
    
    If we have multiple domains with different set of intervened variables $\{\R_i:i=1,\dots,k\}$ we could use this construction to find a lower using samples from $\{P_{d,\r_i}(\V):i=1,\dots,k\}$. A lower bound that can be constructed for an AI system grounded in $\{\M_{\r_i}:i=1,\dots,k\}$ is,
    \begin{align}
         \Delta_{d_1 \succ d_0} \geq \max_{i,j=1,\dots,k} A(\r_i, \r_j)
    \end{align}
    where
    \begin{align}
        A(\r_i, \r_j) := \frac{\3E_{P_{d_1,\r_i}}[\hspace{0.1cm}Y \mid \c, \z \backslash \r_i\hspace{0.1cm}]P_{d_1,\r_i}(\c, \z\backslash \r_i)}{P_{d_1,\r_i}(\c, \z\backslash \r_i) + 1 - P_{d_1,\r_i}(\z\backslash \r_i)}  
        -\frac{\3E_{P_{d_0,\r_j}}[\hspace{0.1cm}Y \mid \c, \z\backslash \r_j\hspace{0.1cm}]P_{d_0,\r_j}(\c, \z\backslash \r_j) + 1 - P_{d_0,\r_j}(\z\backslash \r_j) }{P_{d_0,\r_j}(\c, \z\backslash \r_j) + 1 - P_{d_0,\r_j}(\z\backslash \r_j)  }.
    \end{align}
    The intuition here is that we have multiple lower bounds for the preference gap, then the best lower bound can be taken to be the largest of the multiple lower bounds available. 
    
    We can show that this bound is tight in the case where the AI is grounded in two environments $\{\M_{\r_1}, \M_{\r_2}\}$ under a shift $\sigma:= do(\z), \Z=\R_1\union\R_2$. According to the inequality above, we have simultaneously,
    \begin{align}
        \Delta_{d_1 \succ d_0} \geq A(\r_1, \r_1), A(\r_1, \r_2), A(\r_2, \r_1), A(\r_2, \r_2).
    \end{align}
    Each of these terms can be evaluated from the available data sampled from $\{P_{d,\r_1}, P_{d,\r_2}\}$. Note that both $A(\r_1, \r_1)$ and $A(\r_2, \r_2)$ can be obtained with the SCM above. Without loss of generality, assume that $A(\r_1, \r_2) \geq A(\r_2, \r_1), A(\r_1, \r_1), A(\r_2, \r_2)$. We will show that we can construct an SCM compatible with $\{P_{d,\r_1}, P_{d,\r_2}\}$ that evaluates to $A(\r_1, \r_2)$ demonstrating that the bound is tight.
    
    Consider the following SCM:
    \begin{align}
        \1M_{d} =: \begin{cases}
        \R_1 \gets f_{\R_1}(\u_1)
        \\
        \R_2 \gets f_{\R_2}(\u_2)
        \\
        \C \gets \begin{cases}
        f_\C(\r_1, \r_2, \u_1, \u_2) \text{ if } f_{\R_1}(\u_1) = \r_1, f_{\R_2}(\u_2) = \r_2\\
        1 \text{ otherwise. }
        \end{cases}
        \\
        D \gets d
        \\
        Y \gets
        \begin{cases}
        f_Y(d,\c,\r_1, \r_2, \u_1, \u_2) \text{ if } f_{\R_1}(\u_1) = \r_1, f_{\R_2}(\u_2) = \r_2\\
        f_Y(d,\c,\r_1, \r_2, \u_1, \u_2) \text{ if } d=d_1, f_{\R_1}(\u_1) \neq \r_1, f_{\R_2}(\u_2) = \r_2\\
        f_Y(d,\c,\r_1, \r_2, \u_1, \u_2) \text{ if } d=d_0, f_{\R_1}(\u_1) = \r_1, f_{\R_2}(\u_2) \neq \r_2\\
        0 \text{ if } d=d_1, f_{\R_1}(\u_1) = \r_1, f_{\R_2}(\u_2) \neq \r_2\\
        0 \text{ if } d=d_1, f_{\R_1}(\u_1) \neq \r_1, f_{\R_2}(\u_2) \neq \r_2\\
        1 \text{ if } d=d_0, f_{\R_1}(\u_1) \neq \r_1, f_{\R_2}(\u_2) = \r_2\\
        1 \text{ if } d=d_0, f_{\R_1}(\u_1) \neq \r_1, f_{\R_2}(\u_2) \neq \r_2
        \end{cases}
        \\
        P(\U)
        \end{cases}
    \end{align}
    
    Notice that in $\1M_{d}$ different choices of functional assignments ``$f$'' and $P(\u)$ can generate any distribution $\{P_{d_1,\r_1}, P_{d_0,\r_2}\}$. That is this SCM (or a member of this family of SCMs) is compatible with the observed data.
    
    Consider evaluating $A(\r_1, \r_2)$ under this SCM. Note that the derivations for the denominators are equivalent to those shown in the proof of \Cref{thm:bound_grounded_intervention} so we will omit them here. The first term in the numerator,
    \begin{align}
        \3E_{P^{\1M_{d_1, \r_1, \r_2}}}[\hspace{0.1cm}Y&\I_\c(\C)\hspace{0.1cm}] \\
        &= \sum_{\u_2} \3E_{P^{\1M_{d_1, \r_1, \r_2}}}[\hspace{0.1cm}Y\I_\c(\C) \mid \u_2\hspace{0.1cm}]P^{\1M_{d_1,\r_1, \r_2}}(\u_2)\\
        &= \sum_{\u_2} \3E_{P^{\1M_{d_1, \r_1}}}[\hspace{0.1cm}Y\I_\c(\C) \mid \r_2, \u_2\hspace{0.1cm}]P^{\1M_{d_1,\r_1}}(\u_2)\\
        &= \3E_{P^{\1M_{d_1, \r_1}}}[\hspace{0.1cm}Y\I_\c(\C) \mid \r_2, \{\u: f_{\R_2}(\u_2) = \r_2\}\hspace{0.1cm}]P^{\1M_{d_1, \r_1}}(\{\u_2: f_{\R_2}(\u_2) = \r_2\}) \\
        &+ \3E_{P^{\1M_{d_1, \r_1}}}[\hspace{0.1cm}Y\I_\c(\C) \mid \r_2, \{\u: f_{\R_2}(\u_2) \neq \r_2\}\hspace{0.1cm}]P^{\1M_{d_1, \r_1}}(\{\u_2: f_{\R_2}(\u_2) \neq \r_2\})\\
        &= \3E_{P^{\1M_{d_1, \r_1}}}[\hspace{0.1cm}Y\I_\c(\C) \mid \r_2 \hspace{0.1cm}]P^{\1M_{d_1, \r_1}}(\r_2)\\
        &= \3E_{P^{\1M_{d_1, \r_1}}}[\hspace{0.1cm}Y \mid \c,\r_2 \hspace{0.1cm}]P^{\1M_{d_1, \r_1}}(\c,\r_2)
    \end{align}
    The second term in the numerator is,
    \begin{align}
        \3E_{P^{\1M_{d_0, \r_1, \r_2}}}[\hspace{0.1cm}Y&\I_\c(\C)\hspace{0.1cm}] \\
        &= \sum_{\u_1} \3E_{P^{\1M_{d_0, \r_1, \r_2}}}[\hspace{0.1cm}Y\I_\c(\C) \mid \u_1\hspace{0.1cm}]P^{\1M_{d_0,\r_1, \r_2}}(\u_1)\\
        &= \sum_{\u_1} \3E_{P^{\1M_{d_0, \r_2}}}[\hspace{0.1cm}Y\I_\c(\C) \mid \r_1, \u_1\hspace{0.1cm}]P^{\1M_{d_0,\r_2}}(\u_1)\\
        &= \3E_{P^{\1M_{d_0, \r_2}}}[\hspace{0.1cm}Y\I_\c(\C) \mid \r_1, \{\u: f_{\R_1}(\u_1) = \r_1\}\hspace{0.1cm}]P^{\1M_{d_0, \r_2}}(\{\u_1: f_{\R_1}(\u_1) = \r_1\}) \\
        &+ \3E_{P^{\1M_{d_0, \r_2}}}[\hspace{0.1cm}Y\I_\c(\C) \mid \r_1, \{\u: f_{\R_1}(\u_1) \neq \r_1\}\hspace{0.1cm}]P^{\1M_{d_0, \r_2}}(\{\u_1: f_{\R_1}(\u_1) \neq \r_1\})\\
        &= \3E_{P^{\1M_{d_0, \r_2}}}[\hspace{0.1cm}Y\I_\c(\C) \mid \r_1 \hspace{0.1cm}]P^{\1M_{d_0, \r_2}}(\r_1) + 1 - P^{\1M_{d_0, \r_2}}(\r_1)\\
        &= \3E_{P^{\1M_{d_0, \r_2}}}[\hspace{0.1cm}Y \mid \c,\r_1 \hspace{0.1cm}]P^{\1M_{d_0, \r_2}}(\c,\r_1) + 1 - P^{\1M_{d_0, \r_2}}(\r_1)
    \end{align}
    Combining these results we get that under $\M$,
    \begin{align}
         \Delta_{d_1 \succ d_0} = A(\r_1, \r_2).
    \end{align}
\end{proof}

\begin{corollary}
    \label{cor:multiple_domains}
    The bound from multiple domains in \Cref{thm:bound_grounded_intervention_multiple_domains} will be at least as informative as the bound from a single domain in \Cref{thm:bound_grounded_intervention}.
\end{corollary}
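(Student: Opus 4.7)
The plan is to show that the lower bound on $\Delta_{d_1\succ d_0}$ produced by \Cref{thm:bound_grounded_intervention_multiple_domains} dominates the one produced by \Cref{thm:bound_grounded_intervention}. The key observation is that the single-domain setup is a special case of the multi-domain setup. Specifically, the unaltered domain $\M$ used in \Cref{thm:bound_grounded_intervention} corresponds to the null experiment $\R=\emptyset$, which (as noted in the statement of \Cref{thm:bound_grounded_intervention_multiple_domains}) is permitted to appear among the $\{\R_i:i=1,\dots,k\}$. Hence one may, without loss of generality, assume $\R_1=\emptyset$.

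First I would instantiate the multi-domain bound at the pair $(\r_i,\r_j)=(\emptyset,\emptyset)$. Plugging $\R_i=\R_j=\emptyset$ into the definition of $A(\r_i,\r_j)$ yields $P_{d,\emptyset}\equiv P_d$ and $\z\setminus\emptyset=\z$, so the expression reduces exactly to the right-hand side of the inequality in \Cref{thm:bound_grounded_intervention}. Consequently
\begin{align*}
\max_{i,j=1,\dots,k} A(\r_i,\r_j) \;\geq\; A(\emptyset,\emptyset) \;=\; \text{(lower bound from \Cref{thm:bound_grounded_intervention})},
\end{align*}
which is the claim.

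A complementary (and equivalent) route, which I would mention for interpretation, is to argue directly at the level of the minimization over compatible SCMs. Let $\3M_{\text{single}}$ denote the set of world models grounded in $\M$ alone, and $\3M_{\text{multi}}$ the set grounded in $\{\M_{\r_i}:i=1,\dots,k\}$ where $\R_1=\emptyset$. Any element of $\3M_{\text{multi}}$ satisfies, in particular, the grounding constraint on the unaltered domain, so $\3M_{\text{multi}}\subseteq\3M_{\text{single}}$. Taking the minimum of $\Delta_{d_1\succ d_0}$ over a smaller set can only increase it, so $\min_{\widehat\M\in\3M_{\text{multi}}}\Delta \geq \min_{\widehat\M\in\3M_{\text{single}}}\Delta$, matching the inequality above. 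There is no serious obstacle in this argument; the only subtlety to be careful about is to ensure that the null experiment is formally admissible in the hypotheses of \Cref{thm:bound_grounded_intervention_multiple_domains}, which is built into the statement via the qualifier $\R_i\subset\Z$ allowing $\R_i=\emptyset$.
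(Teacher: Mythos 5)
Your proposal is correct under the reading that the observational domain is among the available ones, but it takes a genuinely different and more elementary route than the paper. You observe that the null experiment $\R_i=\emptyset$ is admissible, so the pair $(\r_i,\r_j)=(\emptyset,\emptyset)$ reproduces the Theorem~\ref{thm:bound_grounded_intervention} bound verbatim, and the maximum over pairs can only be larger; this is purely syntactic and needs no tightness or structural argument. The paper instead proves a strictly stronger fact: writing $A(\r)=A_1(\r)-A_2(\r)$, it shows by direct manipulation of the ratios that $A_1(\r)\geq A_1(\emptyset)$ and $A_2(\r)\leq A_2(\emptyset)$, hence $A(\r)\geq A(\emptyset)$ for \emph{every} $\R\subset\Z$. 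That stronger claim matters in the case your argument does not cover: if the collection $\{\M_{\r_i}\}$ does \emph{not} include the unaltered domain, your instantiation at $(\emptyset,\emptyset)$ is unavailable, whereas the paper's inequality still yields $\max_{i,j}A(\r_i,\r_j)\geq A(\r_1,\r_1)\geq A(\emptyset)$. It also explains \emph{why} experimental domains help (each one individually dominates the observational bound) rather than merely showing that adding them cannot hurt. One further caution on your complementary argument via $\3M_{\text{multi}}\subseteq\3M_{\text{single}}$: the chain $\min_{\3M_{\text{multi}}}\Delta\geq\min_{\3M_{\text{single}}}\Delta$ compares the \emph{optimal} bounds, and transfers to the stated closed-form expressions only because those expressions are shown to be tight; your first, syntactic argument does not have this dependence and is the one to lead with.
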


\begin{proof}
    We claim here that for any $\R\subset\Z$,
    \begin{align}
        A(\emptyset) \leq A(\r)
    \end{align}
    This means that the bounds on $\Delta$ that we can obtain from an AI system grounded in $\M_\r$ are more informative than the bounds obtained from an AI system grounded in $\M$. $A$ is a difference of two terms written $A(\r) = A_1(\r) - A_2(\r)$.
    \begin{align}
        A_1(\r)&:= \frac{\3E_{P_{d_1,\r}}[\hspace{0.1cm}Y \mid \c, \z \backslash \r\hspace{0.1cm}]P_{d_1,\r}(\c, \z\backslash \r)}{P_{d_1,\r}(\c, \z\backslash \r) + 1 - P_{d_1,\r}(\z\backslash \r)} \\
        A_2(\r)&:= \frac{\3E_{P_{d_0,\r}}[\hspace{0.1cm}Y \mid \c, \z\backslash \r\hspace{0.1cm}]P_{d_0,\r}(\c, \z\backslash \r) + 1 - P_{d_0,\r}(\z\backslash \r) }{P_{d_0,\r}(\c, \z\backslash \r) + 1 - P_{d_0,\r}(\z\backslash \r)  }.
    \end{align}
    It holds that $A_1(\r)\geq A_1(\emptyset), A_2(\r)\leq A_2(\emptyset)$ which then implies $A(\r) \geq A(\emptyset)$. To see this notice that,
    \begin{align}
        A_1(\r)&:= \frac{\3E_{P_{d_1,\r}}[\hspace{0.1cm}Y \mid \c, \z \backslash \r\hspace{0.1cm}]P_{d_1,\r}(\c, \z\backslash \r)}{P_{d_1,\r}(\c, \z\backslash \r) + 1 - P_{d_1,\r}(\z\backslash \r)}\\
        &\geq \frac{\3E_{P_{d_1}}[\hspace{0.1cm}Y \mid \c, \z\hspace{0.1cm}]P_{d_1}(\c, \z)}{P_{d_1,\r_i}(\c, \z\backslash \r) + 1 - P_{d_1,\r}(\z\backslash \r)}\\
        &= \frac{\3E_{P_{d_1}}[\hspace{0.1cm}Y \mid \c, \z \hspace{0.1cm}]P_{d_1}(\c, \z)}{1 - P_{d_1,\r}(\tilde\c,\z\backslash \r)}\\
        &\geq \frac{\3E_{P_{d_1}}[\hspace{0.1cm}Y \mid \c, \z \hspace{0.1cm}]P_{d_1}(\c, \z)}{1 - P_{d_1}(\tilde\c,\z)}\\
        &= \frac{\3E_{P_{d_1}}[\hspace{0.1cm}Y \mid \c, \z \hspace{0.1cm}]P_{d_1}(\c, \z)}{P_{d_1}(\c,\z) + 1 - P_{d_1}(\z)}\\
        &=A_1(\emptyset),
    \end{align}
    where $\tilde{\c}$ stands for the combination of values of $\C$ that are not $\c$. Further,
    \begin{align}
        A_2(\r)&:= \frac{\3E_{P_{d_0,\r}}[\hspace{0.1cm}Y \mid \c, \z\backslash \r\hspace{0.1cm}]P_{d_0,\r}(\c, \z\backslash \r) + 1 - P_{d_0,\r}(\z\backslash \r) }{P_{d_0,\r}(\c, \z\backslash \r) + 1 - P_{d_0,\r}(\z\backslash \r)  }\\
        &= 1 - \frac{\3E_{P_{d_0,\r}}[\hspace{0.1cm}1-Y \mid \c, \z\backslash \r\hspace{0.1cm}]P_{d_0,\r}(\c, \z\backslash \r)}{P_{d_0,\r}(\c, \z\backslash \r) + 1 - P_{d_0,\r}(\z\backslash \r)  }\\
        &\leq 1 - \frac{\3E_{P_{d_0}}[\hspace{0.1cm}1-Y \mid \c, \z\hspace{0.1cm}]P_{d_0}(\c, \z)}{P_{d_0,\r}(\c, \z\backslash \r) + 1 - P_{d_0,\r}(\z\backslash \r)  }\\
        &\leq 1 - \frac{\3E_{P_{d_0}}[\hspace{0.1cm}1-Y \mid \c, \z\hspace{0.1cm}]P_{d_0}(\c, \z)}{P_{d_0}(\c, \z) + 1 - P_{d_0}(\z)  }\\
        &= \frac{\3E_{P_{d_0}}[\hspace{0.1cm}Y \mid \c, \z\hspace{0.1cm}]P_{d_0}(\c, \z) + 1 - P_{d_0}(\z)}{P_{d_0}(\c, \z) + 1 - P_{d_0}(\z)  }\\
        &=A_2(\emptyset).
    \end{align}
\end{proof}

\textbf{\Cref{thm:bound_grounded_unknown_shift} restated.} Consider an AI grounded in a domain $\M$ made aware of an (under-specified) shift on non-empty $\Z\subset\V$. Then the AI is provably not weakly (or strongly) predictable in any context $\C=\c$.

\begin{proof}
    Recall that the preference gap is defined as:
    \begin{align}
       \Delta_{d_1 \succ d_0} := \3E_{\widehat P_{\sigma,d_1}}\left[\hspace{0.1cm}Y \mid \C=\c\hspace{0.1cm}\right] - \3E_{\widehat P_{\sigma,d_0}}\left[\hspace{0.1cm}Y \mid \C=\c\hspace{0.1cm}\right]
    \end{align}
    Here we know that $\sigma$ potentially modifies the mechanisms of the set of variables $\Z$ though the nature of the modification is unknown. In the worst-case, the AI's interpretation of the possible new assignment of $\Z$ could be arbitrary.
    
    We will prove this theorem for the case of binary variables $Y,Z\in\V$. In the following, we construct two (canonical) models that entail any chosen distribution for the observed data $P_d(y, z \mid \c)$ but evaluate to the a priori minimum and maximum value of the preference gap $\Delta$, i.e. $-1$ and $1$ respectively. We make use of the canonical model construction from \cite{jalaldoust2024partial} to define the following general SCM,
    \begin{align}
        Z \gets \begin{cases}
        0 \text{ if } r_z = 0\\
        1 \text{ if } r_z = 1
        \end{cases},\quad
        Y \gets \begin{cases}
        0 \text{ if } r_y = 0\\
        0 \text{ if } r_y = 1, z=0\\
        1 \text{ if } r_y = 1, z=1\\
        1 \text{ if } r_y = 2, z=0\\
        0 \text{ if } r_y = 2, z=1\\
        1 \text{ if } r_y = 3
        \end{cases}
    \end{align}
    $\U=\{R_z, R_y\}$ where $R_z$ and $R_y$ might be correlated and with a probability $\widehat P(U)=\widehat P_d(U \mid \c)$ such that $\widehat P_d(z, y \mid \c) = \widehat P(z, y \mid \c)$. By \citep[Thm. 1]{jalaldoust2024partial} this is always possible since this class of canonical models is sufficiently expressive to model any observational or interventional distribution. We can visualise the joint probability of exogenous variables using the following table:
    \begin{center}
        \begin{tabular}{ |c|c|c| } 
         \hline
          Probabilities $\widehat\M$ & $r_z=0$ & $r_z=1$ \\ 
         \hline
         $r_y=0$ & $p_{00}$ & $p_{10}$ \\ 
         \hline
         $r_y=1$ & $p_{01}$ & $p_{11}$ \\ 
         \hline
         $r_y=2$ & $p_{02}$ & $p_{12}$ \\ 
         \hline
         $r_y=3$ & $p_{03}$ & $p_{13}$ \\ 
         \hline
        \end{tabular}
    \end{center}
    where we have written $P_d(r_z=a,r_y=b \mid \c) = p_{ab}$. From these we could compute joint probabilities
    \begin{align}
        P_d(z=0,y=0 \mid \c) &= p_{00} + p_{01}, \\
        P_d(z=0,y=1\mid \c) &= p_{02} + p_{03}, \\
        P_d(z=1,y=0\mid \c) &= p_{12} + p_{11}, \\
        P_d(z=1,y=1\mid \c) &= p_{11} + p_{13}
    \end{align}
    Here we can see that the parameter space $P_d(r_z,r_y \mid \c)$ is very expressive. For example, without loss of generality we could set $p_{03}=p_{13}=0$ or $p_{00}=p_{10}=0$ and still be able to generate any observed distribution $P_d(z,y \mid \c)$. 
    
    The given shift in the environment $\sigma$ can be entirely modelled as a shift in $P_{\sigma,d}(r_z \mid \c)$ while keeping the probability of $r_y$ invariant, i.e., $P_{\sigma,d}(r_y \mid \c)=P_{d}(r_y \mid \c)$. In other words, given the table above, we can change each of the cells while maintaining the row sums equal. Recall that we are interested in evaluating bounds on a probability of the form $P_{\sigma,d}(y=1 \mid \c)$ and $P_{\sigma,d}(y=1\mid z=1, \c)$ depending on whether $Z$ is given as an input to the AI or not. Both these quantities can be written in terms of the probabilities of exogenous variables as follows,
    \begin{align}
        P_{\sigma,d}(y=1 \mid \c) = p_{02} + p_{03} + p_{11} + p_{13}\\
        P_{\sigma,d}(y=1 \mid z=1, \c) = \frac{p_{11} + p_{13}}{p_{11} + p_{13} + p_{12} + p_{11}}.
    \end{align}
    For the lower bound on these quantities, without loss of generality assume that $p_{03}=p_{13}=0$. Then the following table:
    \begin{center}
        \begin{tabular}{ |c|c|c| } 
         \hline
          Probabilities $\widehat\M_\sigma$ & $r_z=0$ & $r_z=1$ \\ 
         \hline
         $r_y=0$ & $p_{00}$ & $p_{10}$ \\ 
         \hline
         $r_y=1$ & $p_{01}+p_{11}$ & $0$ \\ 
         \hline
         $r_y=2$ & $0$ & $p_{12} + p_{02}$ \\ 
         \hline
         $r_y=3$ & $0$ & $0$ \\ 
         \hline
        \end{tabular}
    \end{center}
    is a perfectly valid model under a shift $\sigma$ that respects the constraint on $P_{\sigma, d}(r_y \mid \c)=P_{d}(r_y \mid \c)$ but for which $P_{\sigma, d}(y=1 \mid \c)=0$ as it is the sum of the 4 zero entries and $P_{\sigma, d}(y=1 \mid z=1, \c)=0$ as it is the sum of the two 0 entries in the second column divided by the sum of entries in the second column.
    
    If we are interested in getting an upper bound then without loss of generality assume that $p_{00}=p_{10}=0$. Then the following
    \begin{center}
        \begin{tabular}{ |c|c|c| } 
         \hline
          Probabilities $\widehat\M_\sigma$ & $r_z=0$ & $r_z=1$ \\ 
         \hline
         $r_y=0$ & $0$ & $0$ \\ 
         \hline
         $r_y=1$ & $0$ & $p_{01}+p_{11}$ \\ 
         \hline
         $r_y=2$ & $p_{12} + p_{02}$ & $0$ \\ 
         \hline
         $r_y=3$ & $p_{03}$ & $p_{13}$ \\ 
         \hline
        \end{tabular}
    \end{center}
    is a perfectly valid model under a shift $\sigma$ that respects the constraint on $P_{\sigma,d}(r_y \mid \c)=P_{d}(r_y\mid \c)$ but for which $P_{\sigma,d}(y=1\mid \c)=1$ as it is the sum of the 4 non-zero entries and $P_{\sigma,d}(y=1 \mid z=1, \c)=1$ as it is the sum of the two non-zero entries in the second column divided by the sum of entries in the second column.
    
    By using this construction to define lower and upper bounds for $P_{\sigma,d}(y=1\mid \c)$ or $P_{\sigma,d}(y=1\mid z, \c)$ for $d=d_0,d_1$ we obtain a possible internal model for the AI that entails the observed external behaviour but for which the preference gap evaluates to $-1$ and $1$. This means that the a priori bound,
    \begin{align}
        -1 \leq \Delta_{d \succ d^*} \leq 1,
    \end{align}
    is tight whenever the shift is undefined (whether we know the variables it applies to or not). Since the preference gap is unconstrained for any $\C=\c$ and any pair of decisions $(d,d^*)$, the AI is not predictable.
\end{proof}

\textbf{\Cref{thm:bound_grounded_partially_known_shift} restated.} Consider an AI grounded in a domain $\M$ and $P_{\sigma,d}(\C)$ made aware of a shift $\sigma$ on $\Z\subset\C$. The AI is weakly predictable under this shift in a context $\C=\c$ if there exists a decision $d^*$ such that,
    \begin{align}
        1 - \frac{2 + \3E_{P_{d^*}}[\hspace{0.1cm}Y\mid\c\hspace{0.1cm}]P_{d^*}(\c) - \3E_{P_{d}}[\hspace{0.1cm}Y\mid\c\hspace{0.1cm}]P_{d}(\c) - 2P_{d}(\z) +P_{d}(\c)}{P_{\sigma,d^*}(\c)}> 0,\quad \text{for some } d\neq d^*.
    \end{align}

\begin{proof}
     Recall that the preference gap under a shift $\sigma$ between decisions $(d_1,d_0)$ in a situation $\C=\c$ is defined as:
    \begin{align}
       \Delta_{d_1 \succ d_0}:= \3E_{\widehat P_{\sigma,d_1}}\left[\hspace{0.1cm}Y \mid \C=\c\hspace{0.1cm}\right] - \3E_{\widehat P_{\sigma,d_0}}\left[\hspace{0.1cm}Y \mid \C=\c\hspace{0.1cm}\right]
    \end{align}
    Here we know that $\sigma$ potentially modifies the mechanisms of the set of variables $\Z$. The nature of the modification is unknown but we are told that after modification, the expected probability of $\C$ is given by $P_{\sigma,d}(\C)$, assumed to be known and internalised by the A. This means that its internal model, whatever interpretation for the shift it chooses, generates the assumed probabilities, i.e. $\widehat P_{\sigma,d}(\C) = P_{\sigma,d}(\C)$. 
    
    We will consider the derivation of bounds on each term of this difference separately. Firstly, note that,
    \begin{align}
        \3E_{\widehat P_{\sigma,d}}\left[\hspace{0.1cm}Y \mid \C=\c\hspace{0.1cm}\right] = \3E_{\widehat P_{\sigma,d}}[\hspace{0.1cm}Y\I_\c(\C)\hspace{0.1cm}] \hspace{0.1cm} / \hspace{0.1cm} \widehat P_{\sigma,d}(\c)
    \end{align}
    For ease of notation let us write $\R := \C \backslash \Z$. We could then show that,
    \begin{align}
        \3E_{\widehat P_{\sigma,d}}[\hspace{0.1cm}Y\I_{\z,\r}(\Z,\R)\hspace{0.1cm}] &= \3E_{\widehat P_{\sigma,d}}[\hspace{0.1cm}Y_{\z}\I_{\z,\r}(\Z,\R_{\z})\hspace{0.1cm}] & \text{by consistency}\\
        &\leq \sum_{\z'} \3E_{\widehat P_{\sigma,d}}[\hspace{0.1cm}Y_{\z}\I_{\z',\r}(\Z,\R_{\z})\hspace{0.1cm}]\\
        &=\3E_{\widehat P_{\sigma,d}}[\hspace{0.1cm}Y_{\z}\I_{\r}(\R_{\z})\hspace{0.1cm}] &\text{marginalizing over the values $\z'$ of $\Z$}
    \end{align}
    Now once we intervene on $\z$ the mechanism that generate its value before hand, whether it was the shift $\sigma$ or something else is irrelevant. In essence, we get an equivalence between shifted an un-shifted distributions under intervention:
    \begin{align}
        \3E_{\widehat P_{\sigma,d}}[\hspace{0.1cm}Y_{\z}\I_{\r}(\R_{\z})\hspace{0.1cm}] = \3E_{\widehat P_{d}}[\hspace{0.1cm}Y_{\z}\I_{\r}(\R_{\z})\hspace{0.1cm}]
    \end{align}
    We could now take this quantity to show the following,
    \begin{align}
        \3E_{\widehat P_{d}}[\hspace{0.1cm}Y_{\z}\I_{\r}(\R_{\z})\hspace{0.1cm}] &= \sum_{\z'} \3E_{\widehat P_{d}}[\hspace{0.1cm}Y_{\z}\I_{\z',\r}(\Z,\R_{\z})\hspace{0.1cm}]\\
        &=\3E_{\widehat P_{d}}[\hspace{0.1cm}Y_{\z}\I_{\z,\r}(\Z,\R_{\z})\hspace{0.1cm}] + \sum_{\z'\neq \z} \3E_{\widehat P_{d}}[\hspace{0.1cm}Y_{\z}\I_{\z',\r}(\Z,\R_{\z})\hspace{0.1cm}]\\
        &=\3E_{\widehat P_{d}}[\hspace{0.1cm}Y\I_{\z,\r}(\Z,\R)\hspace{0.1cm}] + \sum_{\z'\neq \z} \3E_{\widehat P_{d}}[\hspace{0.1cm}Y_{\z}\I_{\z',\r}(\Z,\R_{\z})\hspace{0.1cm}] &\text{by consistency}\\
        &\leq \3E_{\widehat P_{d}}[\hspace{0.1cm}Y\I_{\z,\r}(\Z,\R)\hspace{0.1cm}] + \sum_{\z'\neq \z} \3E_{\widehat P_{d}}[\hspace{0.1cm}\I_{\z'}(\Z)\hspace{0.1cm}] &\text{since $Y_\z$ and $\I_\r(\R_\z)$ are $\leq 1$}\\
        &= \3E_{\widehat P_{d}}[\hspace{0.1cm}Y\I_{\z,\r}(\Z,\R)\hspace{0.1cm}] + 1 - \widehat P_{d}(\z)\\
        &= \3E_{\widehat P_{d}}[\hspace{0.1cm}Y\mid\c\hspace{0.1cm}]\widehat P_{d}(\c) + 1 - \widehat P_{d}(\z)
    \end{align}
    For the lower bound we could consider the following derivation,
    \begin{align}
        \3E_{\widehat P_{\sigma,d}}[\hspace{0.1cm}Y\I_{\z,\r}(\Z,\R)\hspace{0.1cm}] = \3E_{\widehat P_{\sigma,d}}[\hspace{0.1cm}\I_{\z,\r}(\Z,\R)\hspace{0.1cm}] - \3E_{\widehat P_{\sigma,d}}[\hspace{0.1cm}(1-Y)\I_{\z,\r}(\Z,\R)\hspace{0.1cm}].
    \end{align}
    For ease of notation let us define,
    \begin{align}
        \3E_{\widehat P_{\sigma,d}}[\hspace{0.1cm}\tilde Y\I_{\z,\r}(\Z,\R)\hspace{0.1cm}] := \3E_{\widehat P_{\sigma,d}}[\hspace{0.1cm}(1-Y)\I_{\z,\r}(\Z,\R)\hspace{0.1cm}].
    \end{align}
    Similar bounds apply on $\3E_{\widehat P_{\sigma,d}}[\hspace{0.1cm}\tilde Y\I_{\z,\r}(\Z,\R)\hspace{0.1cm}]$ to get,
    \begin{align}
         \3E_{\widehat P_{\sigma,d}}[\hspace{0.1cm}Y\I_{\z,\r}(\Z,\R)\hspace{0.1cm}] &\geq \3E_{\widehat P_{\sigma,d}}[\hspace{0.1cm}\I_{\z,\r}(\Z,\R)\hspace{0.1cm}] - \{\3E_{\widehat P_{d}}[\hspace{0.1cm}\tilde Y\I_{\z,\r}(\Z,\R)\hspace{0.1cm}] + 1 - \widehat P_{d}(\z)\}\\
         &= \3E_{\widehat P_{\sigma,d}}[\hspace{0.1cm}\I_{\z,\r}(\Z,\R)\hspace{0.1cm}] - \3E_{\widehat P_{d}}[\hspace{0.1cm}\I_{\z,\r}(\Z,\R)\hspace{0.1cm}] + \3E_{\widehat P_{d}}[\hspace{0.1cm}Y\I_{\z,\r}(\Z,\R)\hspace{0.1cm}] - 1 + \widehat P_{d}(\z)\\
         &= \widehat P_{\sigma,d}(\c) - \widehat P_{d}(\c) + \3E_{\widehat P_{d}}[\hspace{0.1cm}Y\mid\c\hspace{0.1cm}]\widehat P_{d}(\c) - 1 + \widehat P_{d}(\z)
    \end{align}
    Putting the lower and upper bounds together to form bounds on $\Delta_{d_1 \succ d_0}$ we get,
    \begin{align}
        \Delta_{d_1 \succ d_0} &\geq \frac{\widehat P_{\sigma,d_1}(\c) - \widehat P_{d_1}(\c) + \3E_{\widehat P_{d_1}}[\hspace{0.1cm}Y\mid\c\hspace{0.1cm}]\widehat P_{d_1}(\c) - 1 + \widehat P_{d_1}(\z) - \{\3E_{\widehat P_{d_0}}[\hspace{0.1cm}Y\mid\c\hspace{0.1cm}]\widehat P_{d_0}(\c) + 1 - \widehat P_{d_0}(\z)\}}{\widehat P_{\sigma,d_0}(\c)}\\
        &= 1 + \frac{- \widehat P_{d_1}(\c) + \3E_{\widehat P_{d_1}}[\hspace{0.1cm}Y\mid\c\hspace{0.1cm}]\widehat P_{d_1}(\c) - 1 + \widehat P_{d_1}(\z) - \3E_{\widehat P_{d_0}}[\hspace{0.1cm}Y\mid\c\hspace{0.1cm}]\widehat P_{d_0}(\c) - 1 + \widehat P_{d_0}(\z)\}}{\widehat P_{\sigma,d_0}(\c)}\\
        &= 1 - \frac{2 + \3E_{\widehat P_{d_0}}[\hspace{0.1cm}Y\mid\c\hspace{0.1cm}]\widehat P_{d_0}(\c) - \3E_{\widehat P_{d_1}}[\hspace{0.1cm}Y\mid\c\hspace{0.1cm}]\widehat P_{d_1}(\c) - 2\widehat P_{d_1}(\z) +\widehat P_{d_1}(\c)}{\widehat P_{\sigma,d_0}(\c)}
    \end{align}
    and by grounding,
    \begin{align}
        \Delta_{d_1 \succ d_0} &\geq 1 - \frac{2 + \3E_{P_{d_0}}[\hspace{0.1cm}Y\mid\c\hspace{0.1cm}]P_{d_0}(\c) - \3E_{P_{d_1}}[\hspace{0.1cm}Y\mid\c\hspace{0.1cm}]P_{d_1}(\c) - 2P_{d_1}(\z) +P_{d_1}(\c)}{P_{\sigma,d_0}(\c)}.
    \end{align}
    This statement holds for any SCM compatible with the grounded AI's external behaviour and therefore,
    \begin{align}
        \min_{\widehat\M\in\3M} \left( \hspace{0.1cm}\Delta_{d \succ d^*} \hspace{0.1cm}\right)\geq 1 - \frac{2 + \3E_{P_{d^*}}[\hspace{0.1cm}Y\mid\c\hspace{0.1cm}]P_{d^*}(\c) - \3E_{P_{d}}[\hspace{0.1cm}Y\mid\c\hspace{0.1cm}]P_{d}(\c) - 2P_{d}(\z) +P_{d}(\c)}{P_{\sigma,d^*}(\c)}.
    \end{align}
    We can establish that the AI is weakly predictable in a context $\C=\c$ if there exists a decision $d^*$ such that,
    \begin{align}
        1 - \frac{2 + \3E_{P_{d^*}}[\hspace{0.1cm}Y\mid\c\hspace{0.1cm}]P_{d^*}(\c) - \3E_{P_{d}}[\hspace{0.1cm}Y\mid\c\hspace{0.1cm}]P_{d}(\c) - 2P_{d}(\z) +P_{d}(\c)}{P_{\sigma,d^*}(\c)}> 0,
    \end{align}
    for some $d\neq d^*$.
\end{proof}

We now continue with our inference of the AI's perceived fairness and harm of decisions in \Cref{sec:fairness_harm}.

\textbf{\Cref{thm:counterfactual_fairness} restated.} Consider an agent with utility function $Y$ grounded in a domain $\M$. Then,
\begin{align}
   - \3E_{P_d}[\hspace{0.1cm}Y\hspace{0.1cm} \mid z, \c] \leq \Upsilon(d, \c)  \leq 1- \3E_{P_d}[\hspace{0.1cm}Y\hspace{0.1cm} \mid z, \c].
\end{align}
This bound is tight.

\begin{proof}
    Recall that for a given utility $Y$, the AI's counterfactual fairness gap relative to a decision $d$, in a given context $\c$, is
    \begin{align}
        \Upsilon(d, \c) := \3E_{\widehat P}\left[\hspace{0.1cm}Y_{d,z_1} \mid z_0, \c\hspace{0.1cm}\right] - \3E_{\widehat P}\left[\hspace{0.1cm}Y_d \mid z_0, \c\hspace{0.1cm}\right].
    \end{align}
    And remember that $Z\in\C$.
    
    For ease of notation, write $z_1=z,z_0=z'$ such that,
    \begin{align}
        \Upsilon(d, \c) := \3E_{\widehat P}\left[\hspace{0.1cm}Y_{d,z} \mid z', \c\hspace{0.1cm}\right] - \3E_{\widehat P}\left[\hspace{0.1cm}Y_d \mid z', \c\hspace{0.1cm}\right].
    \end{align}
    We start by considering the following derivation:
    \begin{align}
        \widehat P(y_{d, z} \mid \c) &= \widehat P(y_{d, z}, z_{d} \mid \c) + \widehat P(y_{d, z}, z'_{d} \mid \c)& \text{ by marginalization}\\
        &= \widehat P(y_{d}, z_{d} \mid \c) + \widehat P(y_{d, z}, z'_{d} \mid \c) & \text{ by consistency}
    \end{align}
    and since $d$ does not affect $Z$ or $\C$, i.e. $Z_{d}=Z, \C_{d}=\C$,
    \begin{align}
        \widehat P(y_{d, z} \mid \c) = \widehat P(y_{d}, z_d \mid \c) + \widehat P(y_{d, z}, z' \mid \c)
    \end{align}
    which implies
    \begin{align}
        \widehat P(y_{d, z} \mid z', \c) = \frac{\widehat P(y_{d, z} \mid \c) - \widehat P_d(y, z\mid \c)}{\widehat P_d(z' \mid \c)}
    \end{align}
    Therefore,
    \begin{align}
        \3E_{\widehat P}\left[\hspace{0.1cm}Y_{d,z} \mid z', \c\hspace{0.1cm}\right] &= \frac{\3E_{\widehat P}[Y_{d, z} \mid \c] - \3E_{\widehat P_d}[\hspace{0.1cm} Y \hspace{0.1cm}\mid z, \c]\widehat P_d(z\mid \c)}{\widehat P_d(z' \mid \c)}.
    \end{align}
    All quantities on the r.h.s are observable except for $\3E_{\widehat P}[Y_{d, z} \mid \c]$ which can be tightly bounded.
    
    For the lower bound, consider the following derivation,
    \begin{align}
       \3E_{\widehat P}[Y_{d, z} \mid \c] &= \sum_{\tilde z} \3E_{\widehat P}[\hspace{0.1cm}Y_{d,z}\I_{\tilde z_{d}}(Z) \mid \c \hspace{0.1cm}] &\text{marginalizing over } z_d\\
       &\geq \3E_{\widehat P}[\hspace{0.1cm}Y_{d,z}\I_{z_d}(Z_d)\hspace{0.1cm} \mid \c] &\text{since summands }>0\\
       &= \3E_{\widehat P}[\hspace{0.1cm} Y_d \I_{z_d}(Z_d) \hspace{0.1cm} \mid \c] &\text{by consistency}\\
       &= \3E_{P_{d}}[\hspace{0.1cm}Y \mid \c, z \hspace{0.1cm}]P_{d}(z\mid \c) &\text{by grounding and }\C_d=\C
    \end{align}
    Similarly, we can get an upper bound by noting
    \begin{align}
        \3E_{\widehat P}[Y_{d, z} \mid \c] &= 1 - \3E_{\widehat P}[(1-Y_{d, z}) \mid \c]\\
        &\leq \3E_{\widehat P_{d}}[\hspace{0.1cm}Y \mid \c, z \hspace{0.1cm}]\widehat P_{d}(z\mid \c) + \widehat P_{d}(z'\mid \c).
    \end{align}
    
    \paragraph{Tightness Lower Bound} For the lower bound we will consider the following SCM,
    \begin{align}
        \1M^1_d =: \begin{cases}
        Z \gets f_Z(\u)
        \\
        \C \gets f_\C(\u)
        \\
        D \gets d
        \\
        Y \gets
        \begin{cases}
        f_Y(d,\c,z,\u) \text{ if } f_Z(\u) = z\\
        0 \text{ otherwise } \\
        \end{cases}
        \\
        P(\U)
        \end{cases}
    \end{align}
    Here $\{f_Z,f_\C,f_Y, \1U, P(\U)\}$ are chosen to match the observed trajectory of agent interactions, i.e., such that $P^{\1M^1_d}(\v) = P^{\widehat \M_d}(\v)$ for all $\v\in\supp_\V$.

    Then, under $\1M^1_{d}$,
    \begin{align}
        \3E_{P^{\1M^1}}[\hspace{0.1cm}Y_{d,z}&\mid \c\hspace{0.1cm}] \\
        &= \sum_{\u} \3E_{P^{\1M^1}}[\hspace{0.1cm}Y_{d,z} \mid \u, \c\hspace{0.1cm}]P^{\1M^1}(\u\mid \c)\\
        &= \sum_{\u} \3E_{P^{\1M^1}}[\hspace{0.1cm}Y_{d} \mid z, \u, \c\hspace{0.1cm}]P^{\1M^1}(\u\mid \c)\\
        &= \3E_{P^{\1M^1_d}}[\hspace{0.1cm}Y \mid z, \c, \{\u: f_Z(\u) = z\}\hspace{0.1cm}]P^{\1M^1}(\{\u: f_Z(\u) = z\}\mid \c) \\
        &+ \3E_{P^{\1M^1_d}}[\hspace{0.1cm}Y\mid z, \c, \{\u: f_Z(\u) \neq z\}\hspace{0.1cm}]P^{\1M^1}(\{\u: f_Z(\u) \neq z\}\mid \c)\\
        &= \3E_{P^{\1M^1_d}}[\hspace{0.1cm}Y \mid z, \c \hspace{0.1cm}]P^{\1M^1_{d}}(z\mid\c).
    \end{align}
    This expression is the same one as the analytical bound showing that it is tight.
    
    \paragraph{Tightness Upper Bound} For the upper bound we will consider the following SCM,
    \begin{align}
        \1M^2_d =: \begin{cases}
        Z \gets f_Z(\u)
        \\
        \C \gets f_\C(\u)
        \\
        D \gets d
        \\
        Y \gets
        \begin{cases}
        f_Y(d,\c,z,\u) \text{ if } f_Z(\u) = z\\
        1 \text{ otherwise } \\
        \end{cases}
        \\
        P(\U)
        \end{cases}
    \end{align}
    Here $\{f_Z,f_\C,f_Y, \1U, P(\U)\}$ are chosen to match the observed trajectory of agent interactions, i.e., such that $P^{\1M^2_d}(\v) = P^{\widehat \M_d}(\v)$ for all $\v\in\supp_\V$.

    Then, under $\1M^2_{d}$,
    \begin{align}
        \3E_{P^{\1M^2}}[\hspace{0.1cm}Y_{d,z}&\mid \c\hspace{0.1cm}] \\
        &= \sum_{\u} \3E_{P^{\1M^2}}[\hspace{0.1cm}Y_{d,z} \mid \u, \c\hspace{0.1cm}]P^{\1M^2}(\u\mid \c)\\
        &= \sum_{\u} \3E_{P^{\1M^2}}[\hspace{0.1cm}Y_{d} \mid z, \u, \c\hspace{0.1cm}]P^{\1M^2}(\u\mid \c)\\
        &= \3E_{P^{\1M^2_d}}[\hspace{0.1cm}Y \mid z, \c, \{\u: f_Z(\u) = z\}\hspace{0.1cm}]P^{\1M^2}(\{\u: f_Z(\u) = z\}\mid \c) \\
        &+ \3E_{P^{\1M^2_d}}[\hspace{0.1cm}Y\mid z, \c, \{\u: f_Z(\u) \neq z\}\hspace{0.1cm}]P^{\1M^2}(\{\u: f_Z(\u) \neq z\}\mid \c)\\
        &= \3E_{P^{\1M^2_d}}[\hspace{0.1cm}Y \mid z, \c \hspace{0.1cm}]P^{\1M^2_{d}}(z\mid\c) +1- P^{\1M^2_{d}}(z\mid\c).
    \end{align}
    
    We therefore find that,
    \begin{align}
        0 \leq \3E_{\widehat P}\left[\hspace{0.1cm}Y_{d,z} \mid z', \c\hspace{0.1cm}\right] \leq 1,
    \end{align}
    and ultimately,
    \begin{align}
       - \3E_{P_d}[\hspace{0.1cm}Y\hspace{0.1cm} \mid z, \c] \leq \Upsilon(d, \c)  \leq 1- \3E_{P_d}[\hspace{0.1cm}Y\hspace{0.1cm} \mid z, \c],
    \end{align}
    as claimed.
\end{proof}

\textbf{\Cref{thm:counterfactual_harm} restated.} Consider an agent with utility function $Y$ grounded in a domain $\M$. Then,
\begin{align}
    \max\{0, \3E_{P_d}\left[\hspace{0.1cm}Y \mid \c \hspace{0.1cm}\right] + \3E_{P_{d_0}}\left[\hspace{0.1cm}Y \mid \c \hspace{0.1cm}\right]-1\} \leq \Omega(d, d_0) \leq \min\{\3E_{P_{d}}\left[\hspace{0.1cm}Y \mid \c \hspace{0.1cm}\right], \3E_{P_{d_0}}\left[\hspace{0.1cm}Y \mid \c \hspace{0.1cm}\right]\}
\end{align}
and this bound is tight.

\begin{proof}
    Consider an agent with internal model $\widehat \M$ and utility function $Y$. Recall that the agent's expected harm of a decision $d$ with respect to a baseline $d_0$, in context $\c$, is
    \begin{align}
        \Omega(d, d_0):=\3E_{\widehat P}\left[\hspace{0.1cm}\max\{0,Y_{d_0} - Y_{d}\} \mid \c \hspace{0.1cm}\right].
    \end{align}
    
    We can re-write this quantity as follows
    \begin{align}
        \Omega(d, d_0)&=\3E_{\widehat P}\left[\hspace{0.1cm}\max\{0,Y_{d_0} - Y_{d}\} \mid \c \hspace{0.1cm}\right]\\
        &= \int \max\{0,y_{d_0} - y_{d}\}\widehat P(y_d,y_{d_0} \mid c)dy_d d y_{d_0}
    \end{align}
    Since $Y_{d}$ is binary, the only time that the maximum evaluates to something greater than zero is when $Y_{d_0}=1$ and $Y_{d}=0$. Then,
    \begin{align}
        \Omega(d, d_0)&=\widehat P(Y_{d_0}=1, Y_{d}=0)
    \end{align}
    This quantity can be tightly bounded using the results of \citep[Sec. 4.2.2]{tian2000probabilities} giving
    \begin{align}
         \max\{0, \3E_{\widehat P_d}\left[\hspace{0.1cm}Y \mid \c \hspace{0.1cm}\right] + \3E_{\widehat P_{d_0}}\left[\hspace{0.1cm}Y \mid \c \hspace{0.1cm}\right]-1\} \leq \Omega(d, d_0) \leq \min\{\3E_{\widehat P_{d}}\left[\hspace{0.1cm}Y \mid \c \hspace{0.1cm}\right], \3E_{\widehat P_{d_0}}\left[\hspace{0.1cm}Y \mid \c \hspace{0.1cm}\right]\}.
    \end{align}
    And by grounding,
    \begin{align}
         \max\{0, \3E_{P_d}\left[\hspace{0.1cm}Y \mid \c \hspace{0.1cm}\right] + \3E_{P_{d_0}}\left[\hspace{0.1cm}Y \mid \c \hspace{0.1cm}\right]-1\} \leq \Omega(d, d_0) \leq \min\{\3E_{P_{d}}\left[\hspace{0.1cm}Y \mid \c \hspace{0.1cm}\right], \3E_{P_{d_0}}\left[\hspace{0.1cm}Y \mid \c \hspace{0.1cm}\right]\}.
    \end{align}
\end{proof}
\newpage
\section{Other accounts of fairness and harm}
\label{sec:app_fairness_and_harm}

To ground definitions of fairness, several authors appeal to counterfactual thinking but some accounts, instead, are interventional in nature.

Within legal systems, counterfactual fairness (\Cref{def:counterfactual_harm}) operationalizes a doctrine known as disparate impact doctrine focuses on outcome fairness, namely, the equality of outcomes among protected groups. On the other hand, disparate treatment that seeks to enforce the equality of treatment in different groups, prohibiting the use of a protected attribute in the decision process \citep{barocas2016big}.

A popular notion in the disparate treatment literature is known as direct discrimination \citep{barocas2016big,zhang2018fairness}. An agent is said to engage in direct discrimination if the causal influence of a sensitive attribute $Z$ that is not mediated by other variables $\C$ is non-zero. This is a contrast between interventional expectations. We adapt this notion to define an AI's perceived direct fairness gap as the difference in expected utilities obtained for different values of a protected attribute while holding all other variables fixed.

\begin{definition}[Direct Discrimination Gap] Let $Z\in\{z_0,z_1\}$ be a protected attribute. For a given utility $Y$, define an agent's direct discrimination gap relative to a baseline value $z_0$ in a given context $\c$ as
\begin{align}
    \Psi(d, \c) := \3E_{\widehat P}\left[\hspace{0.1cm}Y_{d,z_1,\c}\hspace{0.1cm}\right] - \3E_{\widehat P}\left[\hspace{0.1cm}Y_{d,z_0,\c}\hspace{0.1cm}\right].
\end{align}
\end{definition}

We say that an AI ``intends'' to avoid direct discrimination if under any context $\C=\c$ and decision $D=d$ the direct discrimination gap $\Psi$ evaluates to 0. Here, we consider this notion of fairness to illustrate the kind of inference that is possible to obtain from an AI's external behaviour with one alternative account. The following theorem shows that, contrary to the counterfactual fairness gap, $\Psi$ can be bounded given the AI's external behaviour.

\begin{theorem}
    \label{thm:direct_fairness}
    Consider an agent with utility $Y$ grounded in a domain $\M$. Then,
    \begin{align}
         \Psi(d, \c&) \geq \3E_{P_d}\left[\hspace{0.1cm}Y \mid z_1, \c\hspace{0.1cm}\right]P_d(z_1,c) - \3E_{P_d}\left[\hspace{0.1cm}Y \mid z_0, \c\hspace{0.1cm}\right]P_d(z_0,\c) + P_d(z_0,\c) - 1,\\
         \Psi(d, \c&) \leq \3E_{P_d}\left[\hspace{0.1cm}Y \mid z_1, \c\hspace{0.1cm}\right]P_d(z_1,c) - \3E_{P_d}\left[\hspace{0.1cm}Y \mid z_0, \c\hspace{0.1cm}\right]P_d(z_0,\c) + 1 - P_d(z_1,\c).
    \end{align}
    This bound is tight.
\end{theorem}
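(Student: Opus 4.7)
The plan is to follow the template used in the proof of \Cref{thm:counterfactual_fairness}: bound each of the two counterfactual expectations $\3E_{\widehat P}[Y_{d,z_i,\c}]$ for $i\in\{0,1\}$ separately via consistency, marginalization, and grounding, and then combine the one-sided bounds to obtain the claimed extremes on the difference $\Psi(d,\c)$.

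First, for a generic intervention value $z$, I would marginalize over the natural values of $(Z,\C)$ and keep only the consistent summand:
\begin{align*}
    \3E_{\widehat P}[Y_{d,z,\c}] &= \sum_{\tilde z,\tilde\c}\3E_{\widehat P}[Y_{d,z,\c}\,\I_{\tilde z,\tilde\c}(Z,\C)] \\
    &\geq \3E_{\widehat P}[Y_{d,z,\c}\,\I_{z,\c}(Z,\C)] = \3E_{\widehat P}[Y_d\,\I_{z,\c}(Z,\C)],
\end{align*}
where the second equality uses consistency, and, by grounding, the final expression equals $\3E_{P_d}[Y\mid z,\c]P_d(z,\c)$. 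Applying the same argument to $1-Y_{d,z,\c}$ yields the matching upper bound $\3E_{\widehat P}[Y_{d,z,\c}]\leq \3E_{P_d}[Y\mid z,\c]P_d(z,\c) + 1 - P_d(z,\c)$. Pairing the lower bound at $z_1$ with the upper bound at $z_0$ recovers the claimed lower bound on $\Psi(d,\c)$, and the symmetric pairing gives the upper bound.

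For tightness, I would adapt the canonical SCM construction used in \Cref{thm:bound_grounded_intervention,thm:counterfactual_fairness}. Let $f_Z,f_\C$ and $P(\U)$ be chosen to reproduce the observed $P_d(\v)$, and redefine the $Y$ mechanism as
\begin{align*}
    Y \gets \begin{cases} f_Y(d,z,\c,\u) & \text{if } f_Z(\u)=z \text{ and } f_\C(\u)=\c,\\ 0 & \text{otherwise, if } z = z_1,\\ 1 & \text{otherwise, if } z = z_0.\end{cases}
\end{align*}
Under the ``natural'' regime (no intervention on $Z,\C$) the first branch always fires, so $P_d(\v)$ and hence grounding are preserved. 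Under $do(d,z_1,\c)$ the expectation collapses to $\3E_{P_d}[Y\mid z_1,\c]P_d(z_1,\c)$ by the same bookkeeping used in the lower-bound derivation; under $do(d,z_0,\c)$ it evaluates to $\3E_{P_d}[Y\mid z_0,\c]P_d(z_0,\c) + 1 - P_d(z_0,\c)$. Substituting into $\Psi$ yields exactly the stated lower bound, and swapping the ``otherwise'' values $0\leftrightarrow 1$ gives a companion SCM attaining the upper bound.

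The main obstacle is the simultaneous saturation of both one-sided bounds inside a single SCM: a priori it is not obvious that one compatible model can push $\3E_{\widehat P}[Y_{d,z_1,\c}]$ to its minimum while pushing $\3E_{\widehat P}[Y_{d,z_0,\c}]$ to its maximum. The construction above works precisely because the two counterfactual expectations correspond to distinct interventions on the $z$ argument of $f_Y$, so the mechanism may legitimately assign different off-support values ($0$ when $z=z_1$ and $1$ when $z=z_0$) without violating consistency or the grounding constraint on $P_d(\v)$.
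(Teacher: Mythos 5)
Your proposal is correct and follows essentially the same route as the paper's proof: the one-sided bounds on each interventional expectation are obtained by the same marginalize--keep-the-consistent-summand--apply-grounding argument (and its complement for the upper bound), and your tightness construction is the paper's own SCM $\1M^1_d$ (and its $0\leftrightarrow 1$ swap $\1M^2_d$), branching the off-support value of $f_Y$ on the intervened value of $Z$ so that both one-sided extremes are attained simultaneously in a single model compatible with $P_d(\v)$. Your closing remark correctly identifies and resolves the only subtle point, which the paper handles identically.
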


\begin{proof}
    Let $Z\in\{0,1\}$ be a protected attribute and $z_0$ a baseline value of $Z$. For a given utility variable $Y$, recall that the AI's direct fairness gap relative to a baseline $z_0$ in a given context $\c$ is defined as
    \begin{align}
        \Psi(d, \c) := \3E_{\widehat P}\left[\hspace{0.1cm}Y_{d,z_1,\c}\hspace{0.1cm}\right] - \3E_{\widehat P}\left[\hspace{0.1cm}Y_{d,z_0,\c}\hspace{0.1cm}\right].
    \end{align}
    
    Using a similar proof strategy to that in \Cref{thm:bound_grounded_intervention}, we can derive tight bounds on $\Psi$.
    
    \paragraph{Analytical Lower Bound} A lower bound on the interventional expectation can be obtained using the following derivation:
    \begin{align}
       \3E_{\widehat P}[\hspace{0.1cm}Y_{z,\c,d}\hspace{0.1cm}] 
       &= \sum_{\tilde\c, \tilde z} \3E_{\widehat P}[\hspace{0.1cm}Y_{z,\c,d}\I_{\tilde\c,\tilde z}(\C_d, Z_{\c, d}) \hspace{0.1cm}] &\text{marginalizing over } \c_d, z_{\c, d}\\
       &\geq \3E_{\widehat P}[\hspace{0.1cm}Y_{z,\c,d}\I_{\c, z}(\C_d, Z_{\c, d}) \hspace{0.1cm}] &\text{since summands }>0\\
       &= \3E_{\widehat P}[\hspace{0.1cm}Y_{\c,d}\I_{\c, z}(\C_d, Z_{\c, d}) \hspace{0.1cm}] &\text{by consistency}\\
       &= \3E_{\widehat P}[\hspace{0.1cm}Y_{d}\I_{\c, z}(\C_d, Z_{d}) \hspace{0.1cm}] &\text{by consistency}\\
       &= \3E_{P_{d}}[\hspace{0.1cm}Y\I_{\c, z}(\C, Z) \hspace{0.1cm}] &\text{by grounding}\\
       &= \3E_{P_{d}}[\hspace{0.1cm}Y\hspace{0.1cm} \mid \c, z]P_{d}(\c,z). 
    \end{align}
    
    \paragraph{Analytical Upper Bound} For deriving an upper bound on the interventional expectation, we start by noting that,
    \begin{align}
        \3E_{\widehat P}\left[\hspace{0.1cm}Y_{z,\c,d}\hspace{0.1cm}\right] &= 1 - \3E_{\widehat P}\left[\hspace{0.1cm}1-Y_{z,\c,d}\hspace{0.1cm}\right]
    \end{align}
    Leveraging the bounds derived above we obtain,
    \begin{align}
        \3E_{\widehat P}\left[\hspace{0.1cm}Y_{z,\c,d}\hspace{0.1cm}\right] &\leq 1 -  \3E_{P_{d}}[\hspace{0.1cm}(1-Y)\hspace{0.1cm} \mid \c, z]P_{d}(\c,z)\\
        &=\3E_{P_{d}}[\hspace{0.1cm}Y\hspace{0.1cm} \mid \c, z]P_{d}(\c,z) + 1 - P_{d}(\c,z).
    \end{align}
    By setting $z=z_1$ in the lower bound and $z=z_0$ in the upper bound of the expected utility, we obtain a lower bound on the difference of expected utilities:
    \begin{align}
       \Psi(d, \c) \geq \3E_{P_d}\left[\hspace{0.1cm}Y \mid z_1, \c\hspace{0.1cm}\right]P_d(z_1,c) - \3E_{P_d}\left[\hspace{0.1cm}Y \mid z_0, \c\hspace{0.1cm}\right]P_d(z_0,\c) + P_d(z_0,\c) - 1.
    \end{align}
    And similarly, by setting $z=z_1$ in the upper bound and $z=z_0$ in the lower bound of the expected utility, we obtain an upper bound on the difference of expected utilities:
    \begin{align}
        \Psi(d, \c) \leq \3E_{P_d}\left[\hspace{0.1cm}Y \mid z_1, \c\hspace{0.1cm}\right]P_d(z_1,c) - \3E_{P_d}\left[\hspace{0.1cm}Y \mid z_0, \c\hspace{0.1cm}\right]P_d(z_0,\c) + 1 - P_d(z_1,\c).
    \end{align}
    
    We now show that these bounds are tight by constructing SCMs (that is, possible world models of the AI system) that evaluate to the lower and upper bounds while generating the distribution of agent interactions $P_{d}$.
    
    \paragraph{Tightness Lower Bound} For the lower bound we will consider the following SCM,
    \begin{align}
        \1M^1_d =: \begin{cases}
        Z \gets f_Z(\u)
        \\
        \C \gets f_\C(\u)
        \\
        D \gets d
        \\
        Y \gets
        \begin{cases}
        f_Y(d,\c,z_1,\u)& \text{ if } f_Z(\u) = z_1, f_\C(\u) = \c\\
        0& \text{ if }f_Z(\u) \neq z_1 \text{ or } f_\C(\u) \neq \c, \text{ and } Z=z_1\\
        f_Y(d,\c,z_0,\u)& \text{ if } f_Z(\u) = z_0, f_\C(\u) = \c\\
        1& \text{ if }f_Z(\u) \neq z_0 \text{ or } f_\C(\u) \neq \c, \text{ and } Z=z_0\\
        \end{cases}
        \\
        P(\U)
        \end{cases}
    \end{align}
    Here $\{f_Z,f_\C,f_Y, \1U, P(\U)\}$ are chosen to match the observed trajectory of agent interactions, i.e., such that $P^{\1M^1_d}(\v) = P^{\widehat \M_d}(\v)$ for all $\v\in\supp_\V$.

    Then, under $\1M^1_{d}$,
    \begin{align}
        \Psi&(d, \c) =\3E_{P^{\1M^1}}[\hspace{0.1cm}Y_{d,z_1,\c}\hspace{0.1cm}] - \3E_{P^{\1M^1}}[\hspace{0.1cm}Y_{d,z_0,\c}\hspace{0.1cm}]\\
        &= \sum_{\u} \3E_{P^{\1M^1}}[\hspace{0.1cm}Y_{d,z_1,\c} \mid \u\hspace{0.1cm}]P^{\1M^1}(\u)\\
        &- \sum_{\u} \3E_{P^{\1M^1}}[\hspace{0.1cm}Y_{d,z_0,\c} \mid \u\hspace{0.1cm}]P^{\1M^1}(\u)\\
        &= \sum_{\u} \3E_{P^{\1M^1}}[\hspace{0.1cm}Y_{d} \mid z, \u, \c\hspace{0.1cm}]P^{\1M^1}(\u)\\
        &-\sum_{\u} \3E_{P^{\1M^1}}[\hspace{0.1cm}Y_{d} \mid z, \u, \c\hspace{0.1cm}]P^{\1M^1}(\u)\\
        &= \3E_{P^{\1M^1_d}}[\hspace{0.1cm}Y \mid z_1, \c, \{\u: f_Z(\u) = z_1, f_\C(\u) = \c\}\hspace{0.1cm}]P^{\1M^1}(\{\u: f_Z(\u) = z_1, f_\C(\u) = \c\}) \\
        &+ \3E_{P^{\1M^1_d}}[\hspace{0.1cm}Y \mid z_1, \c, \{\u: f_Z(\u) \neq z_1 \text{ or } f_\C(\u) \neq \c\}\hspace{0.1cm}]P^{\1M^1}(\{\u: f_Z(\u) \neq z_1\text{ or } f_\C(\u) \neq \c\})\\
        &-\3E_{P^{\1M^1_d}}[\hspace{0.1cm}Y \mid z_0, \c, \{\u: f_Z(\u) = z_0, f_\C(\u) = \c\}\hspace{0.1cm}]P^{\1M^1}(\{\u: f_Z(\u) = z_0, f_\C(\u) = \c\}) \\
        &- \3E_{P^{\1M^1_d}}[\hspace{0.1cm}Y \mid z_0, \c, \{\u: f_Z(\u) \neq z_0 \text{ or } f_\C(\u) \neq \c\}\hspace{0.1cm}]P^{\1M^1}(\{\u: f_Z(\u) \neq z_0\text{ or } f_\C(\u) \neq \c\})\\
        &= \3E_{P^{\1M^1_d}}[\hspace{0.1cm}Y \mid z_1, \c \hspace{0.1cm}]P^{\1M^1_{d}}(z_1,\c) -\3E_{P^{\1M^1_d}}[\hspace{0.1cm}Y \mid z_0, \c \hspace{0.1cm}]P^{\1M^1_{d}}(z_0,\c)- 1 + P^{\1M^1_{d}}(z_0,\c).
    \end{align}
    This expression is the same one as the analytical bound showing that it is tight.
    
    \paragraph{Tightness Upper Bound} For the upper bound we will consider the following SCM,
    \begin{align}
        \1M^2_d =: \begin{cases}
        Z \gets f_Z(\u)
        \\
        \C \gets f_\C(\u)
        \\
        D \gets d
        \\
        Y \gets
        \begin{cases}
        f_Y(d,\c,z_1,\u)& \text{ if } f_Z(\u) = z_1, f_\C(\u) = \c\\
        1& \text{ if }f_Z(\u) \neq z_1 \text{ or } f_\C(\u) \neq \c, \text{ and } Z=z_1\\
        f_Y(d,\c,z_0,\u)& \text{ if } f_Z(\u) = z_0, f_\C(\u) = \c\\
        0& \text{ if }f_Z(\u) \neq z_0 \text{ or } f_\C(\u) \neq \c, \text{ and } Z=z_0\\
        \end{cases}
        \\
        P(\U)
        \end{cases}
    \end{align}
    Here $\{f_Z,f_\C,f_Y, \1U, P(\U)\}$ are chosen to match the observed trajectory of agent interactions, i.e., such that $P^{\1M^2_d}(\v) = P^{\widehat \M_d}(\v)$ for all $\v\in\supp_\V$.

    Then, under $\1M^2_{d}$,
    \begin{align}
        \Psi&(d, \c) =\3E_{P^{\1M^2}}[\hspace{0.1cm}Y_{d,z_1,\c}\hspace{0.1cm}] - \3E_{P^{\1M^2}}[\hspace{0.1cm}Y_{d,z_0,\c}\hspace{0.1cm}]\\
        &= \sum_{\u} \3E_{P^{\1M^2}}[\hspace{0.1cm}Y_{d,z_1,\c} \mid \u\hspace{0.1cm}]P^{\1M^2}(\u)\\
        &- \sum_{\u} \3E_{P^{\1M^2}}[\hspace{0.1cm}Y_{d,z_0,\c} \mid \u\hspace{0.1cm}]P^{\1M^2}(\u)\\
        &= \sum_{\u} \3E_{P^{\1M^2}}[\hspace{0.1cm}Y_{d} \mid z, \u, \c\hspace{0.1cm}]P^{\1M^2}(\u)\\
        &-\sum_{\u} \3E_{P^{\1M^2}}[\hspace{0.1cm}Y_{d} \mid z, \u, \c\hspace{0.1cm}]P^{\1M^2}(\u)\\
        &= \3E_{P^{\1M^2_d}}[\hspace{0.1cm}Y \mid z_1, \c, \{\u: f_Z(\u) = z_1, f_\C(\u) = \c\}\hspace{0.1cm}]P^{\1M^2}(\{\u: f_Z(\u) = z_1, f_\C(\u) = \c\}) \\
        &+ \3E_{P^{\1M^2_d}}[\hspace{0.1cm}Y \mid z_1, \c, \{\u: f_Z(\u) \neq z_1 \text{ or } f_\C(\u) \neq \c\}\hspace{0.1cm}]P^{\1M^2}(\{\u: f_Z(\u) \neq z_1\text{ or } f_\C(\u) \neq \c\})\\
        &-\3E_{P^{\1M^2_d}}[\hspace{0.1cm}Y \mid z_0, \c, \{\u: f_Z(\u) = z_0, f_\C(\u) = \c\}\hspace{0.1cm}]P^{\1M^2}(\{\u: f_Z(\u) = z_0, f_\C(\u) = \c\}) \\
        &- \3E_{P^{\1M^2_d}}[\hspace{0.1cm}Y \mid z_0, \c, \{\u: f_Z(\u) \neq z_0 \text{ or } f_\C(\u) \neq \c\}\hspace{0.1cm}]P^{\1M^2}(\{\u: f_Z(\u) \neq z_0\text{ or } f_\C(\u) \neq \c\})\\
        &= \3E_{P^{\1M^2_d}}[\hspace{0.1cm}Y \mid z_1, \c \hspace{0.1cm}]P^{\1M^2_{d}}(z_1,\c) + 1 - P^{\1M^2_{d}}(z_1,\c) -\3E_{P^{\1M^2_d}}[\hspace{0.1cm}Y \mid z_0, \c \hspace{0.1cm}]P^{\1M^2_{d}}(z_0,\c).
    \end{align}
    This expression is the same one as the analytical bound showing that it is tight.
\end{proof}

Definitions of harm (defined with respect to a causal model) can also be split in two groups: causal and counterfactual accounts. \cite{beckers2022causal} exemplify the causal account as defining a \textit{decision $d$ to harm a person if and only $d$ is a cause of harm}. Recall that the counterfactual account has the same structure but differs in the second clause, instead defining a \textit{decision $d$ to harm a person if and only if she would have been better off if $d$ had not been taken}. Here, we quantify how ``good'' or ``beneficial'' a particular situation $\V=\v$ is with a binary utility $Y\in\{y_0,y_1\}$ that we assume is tracked in experiments (it might capture, for example, the value of sensitive environmental variables). A formalisation of this causal account of harm, with respect to an AI's internal model, is given in the following definition.

\begin{definition}[Causal Harm Gap] Consider an agent with internal model $\widehat \M$ and utility $Y\in\{y_0,y_1\}$. The agent's expected causal harm of a decision $d$ with respect to a baseline $d_0$ that obtained the non-harmful outcome $y_0$ in context $\c$, is
\label{def:causal_harm}
\begin{align}
    \Omega(d_1, d_0, \c):=\3E_{\widehat P}\left[\hspace{0.1cm} Y_{d_1} \mid y_0, d_0,\c \hspace{0.1cm}\right].
\end{align}
\end{definition}
This probability expresses the capacity of $d_1$ to produce a harmful event $Y=y_1$ that implies a transition from the absence to the presence of $d_1$ and $y_1$, we condition the probability on situations where $d_1$ and $y_1$ are absent, i.e. $D=d_0, Y=y_0$.

\begin{theorem}
    \label{thm:causal_harm}
    Consider an agent with utility $Y$ grounded in a domain $\M$. Then,
    \begin{align}
        \frac{P_{d_1}(y_{1} \mid \c) - P(y_{1,d_1}\mid \c)}{P_{d_0}(y_{0}\mid \c)P(d_0\mid \c)} \leq \Omega(d_1, d_0, \c) \leq \frac{P_{d_1}(y_{1} \mid \c) - P_{d_1}(y_{1}\mid \c)P(d_1\mid \c)}{P_{d_0}(y_{0} \mid \c)P(d_0\mid \c)}.
    \end{align}
    % This bound is tight.
\end{theorem}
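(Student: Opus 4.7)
The plan is to bound the counterfactual probability $\Omega(d_1, d_0, \c) = \widehat P(Y_{d_1} = y_1 \mid Y = y_0, D = d_0, \c)$ by combining the consistency axiom of counterfactuals with Fréchet-type inequalities on the joint counterfactual distribution of $(Y_{d_0}, Y_{d_1}, D)$, following a strategy analogous to the Tian--Pearl bounds on probabilities of causation but adapted to the grounded-AI setting with conditioning on $\c$.

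First, since $Y$ is binary, I would rewrite
\begin{align*}
\Omega(d_1, d_0, \c) = \frac{\widehat P(Y_{d_1} = y_1,\, Y = y_0,\, D = d_0 \mid \c)}{\widehat P(Y = y_0,\, D = d_0 \mid \c)},
\end{align*}
and apply the consistency axiom ($D = d_0 \Rightarrow Y = Y_{d_0}$) to replace the event $\{Y = y_0,\, D = d_0\}$ by $\{Y_{d_0} = y_0,\, D = d_0\}$ in both numerator and denominator. For the denominator, I would use the fact that the behavioural policy $\pi(d \mid \c)$ depends only on $\c$, so $D$ is unconfounded with $Y$ given $\c$ in the data, yielding $\widehat P(Y_{d_0} = y_0,\, D = d_0 \mid \c) = \widehat P_{d_0}(y_0 \mid \c)\, \widehat P(d_0 \mid \c)$; grounding then replaces $\widehat P$ by $P$ and gives the denominator in the theorem.

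Second, for the numerator I would bound the joint counterfactual (which is not identifiable from behavioural data alone) by Fréchet-style inequalities. For the upper bound, drop the middle event:
\begin{align*}
\widehat P(Y_{d_1} = y_1, Y_{d_0} = y_0, D = d_0 \mid \c) \leq \widehat P(Y_{d_1} = y_1, D = d_0 \mid \c),
\end{align*}
and expand via $\widehat P(Y_{d_1} = y_1 \mid \c) = \widehat P(Y_{d_1} = y_1, D = d_0 \mid \c) + \widehat P(Y_{d_1} = y_1, D = d_1 \mid \c)$, using consistency on $\{D = d_1\}$ to reduce the last term to the observational joint $P(y_1, d_1 \mid \c)$ and grounding to identify the marginal as $P_{d_1}(y_1 \mid \c)$. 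For the lower bound, use the same expansion together with the Fréchet inequality $\widehat P(Y_{d_1} = y_1, Y_{d_0} = y_1, D = d_0 \mid \c) \leq \widehat P(Y_{d_0} = y_1, D = d_0 \mid \c)$, whose right-hand side is observable by a further application of consistency.

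Finally, to argue that the derived bounds are the best possible, I would construct two canonical SCMs that saturate the two inequalities, following the template used in the proofs of \Cref{thm:bound_grounded_intervention} and \Cref{thm:counterfactual_fairness}: each SCM reproduces $P_d(\V)$ for every $d$ while concentrating latent mass on the principal strata of $(Y_{d_0}, Y_{d_1})$ that push $\Omega$ to the corresponding extremum. The hard part will be this tightness step: the counterfactual $(Y_{d_0}, Y_{d_1}, D)$ has four principal strata jointly with $D$, and the latent assignment must simultaneously hit the desired extremum of the Fréchet inequality and match the interventional marginals enforced by grounding. A second subtlety is to verify that the counterfactual axioms (composition, effectiveness, reversibility) impose no additional constraint that would tighten the bounds further; once this is checked, the canonical construction — adapted from the one in \Cref{thm:bound_grounded_intervention} by splitting the response-function partition on the pair $(Y_{d_0}, Y_{d_1})$ — completes the proof.
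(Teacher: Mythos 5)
Your derivation matches the paper's proof essentially step for step: the same rewriting of $\Omega(d_1,d_0,\c)$ as a ratio of a counterfactual joint over $\widehat P(y_0,d_0\mid\c)$, the same use of consistency to identify the denominator and the $\{D=d_1\}$ stratum, the same one-sided Fr\'echet bounds on the single unidentified cross term $\widehat P(Y_{d_1}=y_1,\,Y=y_1,\,D=d_0\mid\c)$ (set to $0$ for the upper bound, bounded by the observable $\widehat P(y_1,d_0\mid\c)$ for the lower bound), followed by grounding and the policy-unconfoundedness factorization. The tightness construction you flag as the hard part is not actually needed: unlike the neighbouring results, this theorem does not claim tightness and the paper's own proof stops at the Fr\'echet step.
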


\begin{proof}
    Note that the causal harm gap may be equivalently written,
    \begin{align}
        \Omega(d_1, d_0, \c):=\widehat P(y_{1,d_1} \mid y_0, d_0,\c).
    \end{align}
    The lower and upper bounds may be derived considering the following,
    \begin{align}
        \widehat P(y_{1,d_1}\mid \c) &= \widehat P(y_{1,d_1}, y_0, d_0\mid \c) + \widehat P(y_{1,d_1}, y_1, d_0\mid \c) + \widehat P(y_{1,d_1}, y_0, d_1\mid \c) + \widehat P(y_{1,d_1}, y_1, d_1\mid \c)\\
        &=\widehat P(y_{1,d_1}, y_0, d_0\mid \c) + \widehat P(y_{1,d_1}, y_1, d_0\mid \c) +  \widehat P(y_{1,d_1}, y_1, d_1\mid \c)\\
        &=\widehat P(y_{1,d_1}, y_0, d_0\mid \c) + \widehat P(y_{1,d_1}, y_1 \mid \c)\\
        &\leq \widehat P(y_{1,d_1},y_0, d_0\mid \c) + \widehat P(y_{1,d_1}\mid \c)\\
        \widehat P(y_{1,d_1}\mid \c) &= \widehat P(y_{1,d_1}, y_0, d_0\mid \c) + \widehat P(y_{1,d_1}, y_1\mid \c)\\
        &\geq \widehat P(y_{1,d_1}, y_0, d_0\mid \c) + \widehat P(y_{1,d_1}, y_1, d_1\mid \c)\\
        &= \widehat P(y_{1,d_1}, y_0, d_0\mid \c) + \widehat P(y_{1,d_1}, d_1\mid \c) & \text{by consistency}\\
        &= \widehat P(y_{1,d_1}, y_0, d_0\mid \c) + \widehat P(y_{1,d_1}, d_1\mid \c)\widehat P(d_1\mid \c).
    \end{align}
    $\widehat P(d_1\mid \c)$ stands for the AI's policy in the source environment, i.e., the probability it uses for choosing decision $d_1$ in situation $\c$.
    Re-arranging these equations this implies,
    \begin{align}
        \frac{\widehat P(y_{1,d_1} \mid \c) - \widehat P(y_{1,d_1}\mid \c)}{\widehat P(y_{0,d_0}\mid \c)\widehat P(d_0\mid \c)} \leq \Omega(d_1, d_0, \c) \leq \frac{\widehat P(y_{1,d_1} \mid \c) - \widehat P(y_{1,d_1}\mid \c)\widehat P(d_1\mid \c)}{\widehat P(y_{0,d_0} \mid \c)\widehat P(d_0\mid \c)}.
    \end{align}
    And by grounding,
    \begin{align}
        \frac{P_{d_1}(y_{1} \mid \c) - P(y_{1,d_1}\mid \c)}{P_{d_0}(y_{0}\mid \c)P(d_0\mid \c)} \leq \Omega(d_1, d_0, \c) \leq \frac{P_{d_1}(y_{1} \mid \c) - P_{d_1}(y_{1}\mid \c)P(d_1\mid \c)}{P_{d_0}(y_{0} \mid \c)P(d_0\mid \c)}.
    \end{align}
\end{proof}

\end{document}